\newtheorem{remark}{Remark}
\newtheorem{definition}{Definition}
\newtheorem{theorem}{Theorem}
\newcommand{\vecb}[1]{\mbox{\boldmath{$#1$}}}
\newcommand{\matb}[1]{\mbox{\boldmath{$#1$}}}
\begin{document}
\title{Beyond Low Rank: A Data-Adaptive Tensor Completion Method}
%
%
\author{Lei~Zhang,~\IEEEmembership{Student Member,~IEEE,}
        Wei~Wei,~\IEEEmembership{Member,~IEEE,}
        Qinfeng~Shi,
        Chunhua~Shen,
        Anton~van~den~Hengel,
        and Yanning~Zhang,~\IEEEmembership{Senior Member,~IEEE,}
        
}

\IEEEtitleabstractindextext{%
\begin{abstract}
Low rank tensor representation underpins much of recent progress in tensor completion. In real applications, however, this approach is confronted with two challenging problems, namely (1) tensor rank determination; (2) handling real tensor data which only approximately fulfils the low-rank requirement. To address these two issues, we develop a data-adaptive tensor completion model which explicitly represents both the low-rank and non-low-rank structures in a latent tensor. Representing the non-low-rank structure separately from the low-rank one allows priors which capture the important distinctions between the two, thus enabling more accurate modelling, and ultimately, completion. Through defining a new tensor rank, we develop a sparsity induced prior for the low-rank structure, with which the tensor rank can be automatically determined. The prior for the non-low-rank structure is established based on a mixture of Gaussians which is shown to be flexible enough, and powerful enough, to inform the completion process for a variety of real tensor data. With these two priors, we develop a Bayesian minimum mean squared error estimate (MMSE) framework for inference which provides the posterior mean of missing entries as well as their uncertainty. Compared with the state-of-the-art methods in various applications, the proposed model produces more accurate completion results.
\end{abstract}


\begin{IEEEkeywords}
Data-adaptive tensor model, automatic tensor rank determination, tensor completion.
\end{IEEEkeywords}}

\maketitle

\IEEEdisplaynontitleabstractindextext

%
\IEEEpeerreviewmaketitle

\IEEEraisesectionheading{\section{Introduction}\label{sec:introduction}}
\IEEEPARstart{R}{epresentation} of multi-dimensional data becomes increasingly crucial in various computer vision tasks, especially that involving videos, hyperspectral images and deep neural networks, etc. Tensors provide a concise and effective way to represent them without loss of structural characteristics, e.g., a color video can be viewed as a 4-mode tensor with dimensionality {\emph{height $\times$ width $\times$ channel $\times$ time}}. In practice, a wide range of application domains (e.g., social networks~\cite{papalexakis2014spotting}, recommender systems~\cite{yao2015context}, computer vision~\cite{guo2015generalized}, etc.), however, often produce incomplete tensor data where partial entires are missing, e.g., incomplete social relations, unknown user-iterm correlation or corrupted videos, etc. Missing entires often cause the performance of related applications to drop greatly, particularly when missing ratio is high. Thus, plenty of efforts~\cite{liu2013tensor,chen2014simultaneous,zhao2015bayesian} have been made to estimate the missing entires on the basis of the available ones by exploiting the intrinsic structural relations within the tensor~\cite{liu2013tensor}. This is often termed tensor completion.

\begin{figure}
\setlength{\abovecaptionskip}{0pt}
\begin{center}
\includegraphics[height=0.9in,width=0.9in,angle=0]{./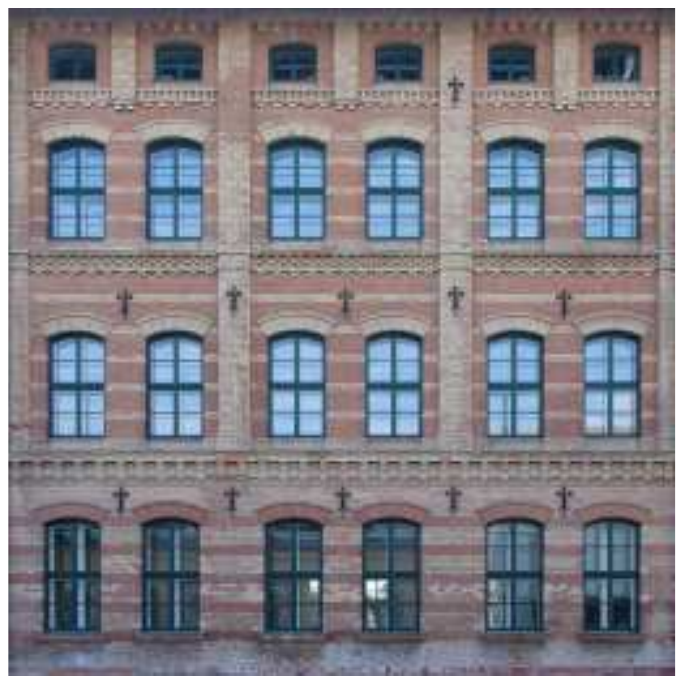}
\hspace{-0.15cm}
\includegraphics[height=0.9in,width=0.9in,angle=0]{./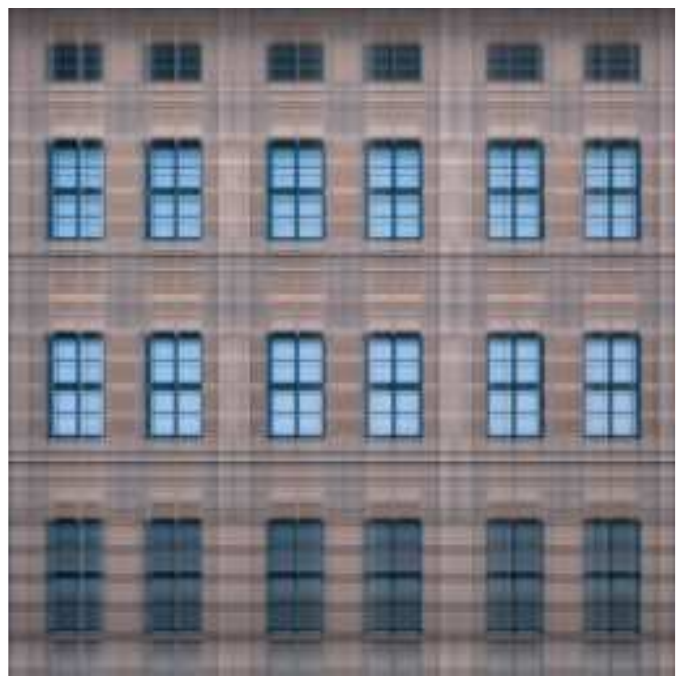}
\includegraphics[height=0.9in,width=1.4in,angle=0]{./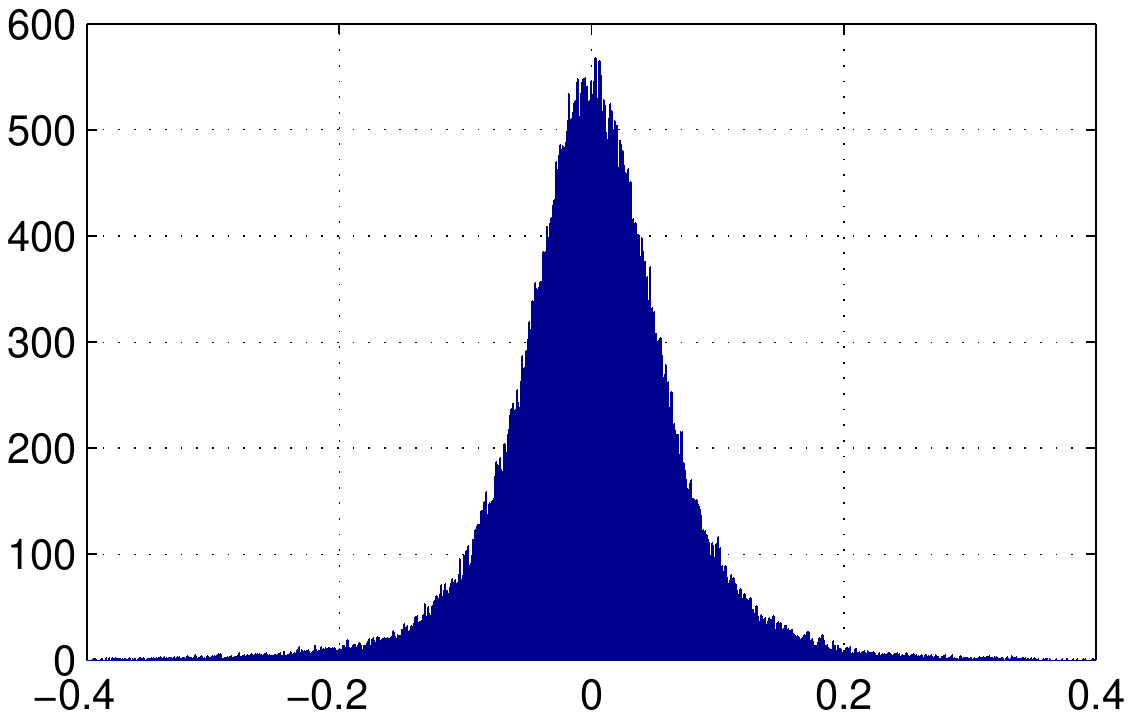}\\
\vspace{-0.15cm}
\subfigure[]{\includegraphics[height=0.9in,width=0.9in,angle=0]{./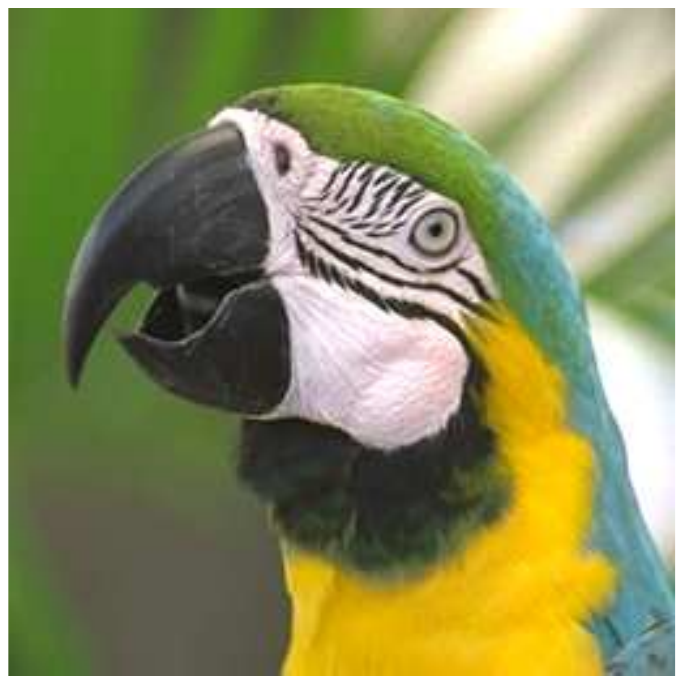}}
\hspace{-0.15cm}
\subfigure[]{\includegraphics[height=0.9in,width=0.9in,angle=0]{./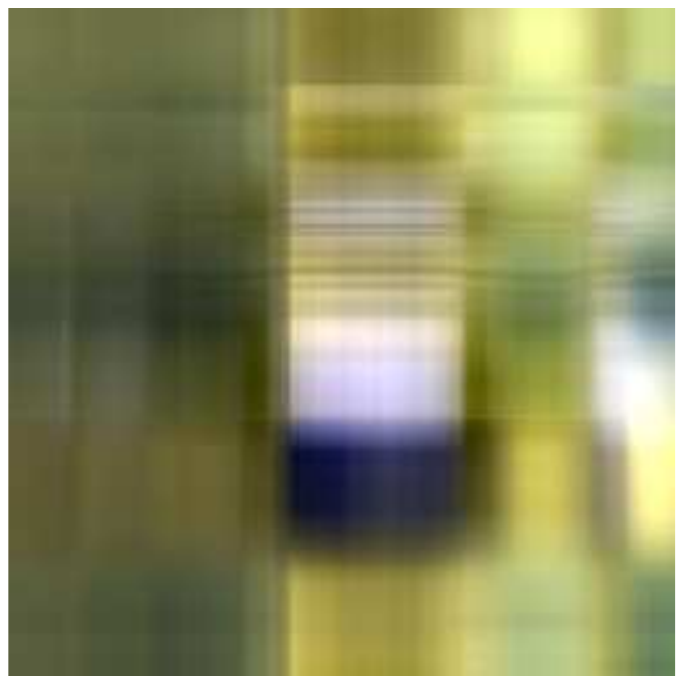}}
\subfigure[]{\includegraphics[height=0.9in,width=1.4in,angle=0]{./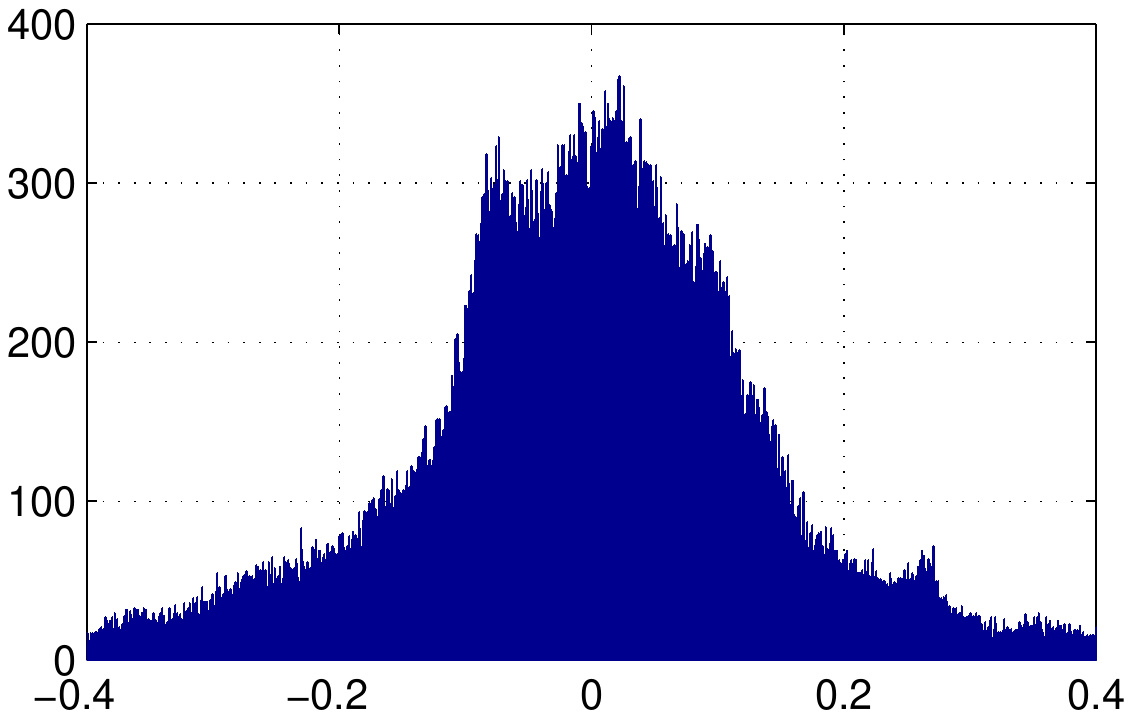}}
\end{center}
\vspace{-0.3cm}
\caption{Low-rank and non-low-rank structures in two color images (top: 'facade', bottom: 'parrot'). (a) Images. (b) Low-rank structure obtained by employing the CANDECOMP/PARAFAC (CP) factorization~\cite{TTB_Software} on each image with {\rm{rank}} = $3$. (c) Histograms of entries in the non-low-rank structure. For the highly structured 'facade', entries in the non-low-rank structure fit an approximately sparse distribution, while that in 'parrot' is more complex (e.g., heavy-tailed and multimodal).}
\vspace{-0.2cm}
\label{fig:dis}
\end{figure}

A promising way for tensor completion is to adopt the low rank representation model~\cite{gu2014robust}, which assumes the latent tensor to be of low-rank and thus recovers the missing entries by exploiting the low rank structure. Specifically, for a latent $K$-mode tensor $\mathcal{L} \in {\mathbb{R}^{n_1\times\cdots\times n_K}}$, $y_{\mbox{\boldmath{$i$}}}$ represents the observation of the ${\mbox{\boldmath{$i$}}}=(i_1,...,i_K)$-th entry of $\mathcal{L}$. Let $\mathcal{Y}_{\Omega} = \{y_{\mbox{\boldmath{$i$}}}\}_{{\mbox{\boldmath{$i$}}} \in \Omega}$ be the set of all such observations, where $\Omega$ collects the indices of all observations. The observation model can be formulated as
\begin{equation}\label{eq:eq2}
\begin{aligned}
\mathcal{Y}_{\Omega} = \mathcal{L}_{\Omega} + \mathcal{M}_{\Omega} 
\end{aligned}
\end{equation}
where $ \mathcal{M}_{\Omega}$ is the noise corruption. In the low rank representation model, $\mathcal{L}$ can be recovered via the maximum a posteriori (MAP) estimate as
\begin{equation}\label{eq:eq0}
\begin{aligned}
\hat{\mathcal{L}} = \arg\max\limits_{\mathcal{L}} p(\mathcal{L}|\mathcal{Y}_{\Omega})\propto p(\mathcal{Y}_{\Omega}|\mathcal{L})p(\mathcal{L}|\Theta)
\end{aligned}
\end{equation}
where $p(\mathcal{Y}_{\Omega}|\mathcal{L})$ denotes the likelihood induced by Eq.~\eqref{eq:eq2} and $p(\mathcal{L}|\Theta)$ represents a $\Theta$-parametrized low-rank prior. Various low-rank priors have been proposed in~\cite{liu2013tensor,chen2014simultaneous,zhang2014novel,zhao2015novel,zhao2016bayesian}. A brief review can be found in Section~\ref{sec: relatedwork}. Two challenges remain when applying this model to real data, however. 
{\emph{(1) Tensor rank determination.}}
Similar as matrix case, the key for low rank representation model is to determine the unknown tensor rank. In most previous works~\cite{liu2013tensor,chen2014simultaneous,zhang2014novel}, tensor rank is determined by unfolding the tensor into a collection of matrices and then minimizing matrix rank norms or set manually. However, matrix rank norms cannot capture the multi-dimensional structure of the latent tensor, thus misleading the rank determination. Moreover, the resulting erroneous rank estimate can causes over-fitting in tensor completion~\cite{zhao2016bayesian}.
{\emph{(2) Applicability to real data.}}
In practical, real tensors only ever approximately comply with the low-rank requirement in most cases, viz., $\mathcal{L}$ contains low-rank structure as well as the non-low-rank structure, shown as Fig.~\ref{fig:dis}. Inspired by this, $\mathcal{L}$ can be factorized as
\begin{equation}\label{eq:eq1}
\begin{aligned}
 \mathcal{L} = \mathcal{X} + \mathcal{E}.
\end{aligned}
\end{equation}
where $\mathcal{X}$ and $\mathcal{E}$ represents the low-rank and non-low-rank structures, respectively. Note that $\mathcal{E}$ here is not the observation noise which is captured by $\mathcal{M}$ in Equation~\eqref{eq:eq2}. Most previous methods assume $\mathcal{L}$ to be of low-rank in its entirety~\cite{liu2013tensor,chen2014simultaneous,zhang2014novel,goldfarb2014robust,gu2014robust,zhao2016bayesian}. They thus implicitly assume $\mathcal{E}$ to be zero~\cite{liu2013tensor,chen2014simultaneous,zhang2014novel}, or explicitly model a sparse $\mathcal{E}$~\cite{goldfarb2014robust,gu2014robust,zhao2016bayesian}. This assumption is explained as resulting from highly structured data (e.g., 'facade' in Figure~\ref{fig:dis} with extensive repeated textures), but it neglects the fact that vanishingly few real data actually exhibit the regularity of structure to support such a model, i.e., most real data is beyond the low rank assumption and shows a more complex $\mathcal{E}$, e.g., 'parrot' in Figure~\ref{fig:dis}. \emph{Accurate tensor completion requires explicit modelling of all aspects of the latent $\mathcal{L}$ which do, or do not conform the low-rank prior.} Therefore, it is crucial to model the complex $\mathcal{E}$ in the latent tensor representation.

To jointly address these two problems above, we present a novel data-adaptive tensor completion model. Firstly, we define a new tensor rank based on the CANDECOMP/PARAFAC (CP) factorization, with which the low-rank structure $\mathcal{X}$ is depicted by a sparsity induced low-rank prior model. Through exploiting the sparsity in the factorization weights probabilistically, the proposed model is able to automatically determine the tensor rank. Then, we model the non-low-rank structure $\mathcal{E}$ with a mixture of Gaussians (MOG). The powerful ability of MOG to fit a wide range of $\mathcal{E}$s (e.g., zero, sparse or mixture one) enables the proposed model to be adaptive to a variety of real tensor data, which is even beyond the low-rank assumption with complex non-low-rank structure. Both merits are both beneficial for robustly modelling the latent tensor. To harness them in a principled way, we adopt the Bayesian minimum mean squared error estimate (MMSE) framework for inference over the proposed model. In contrast to most previous works which only produce a point estimation on each missing entry with the MAP estimate, we infer the posterior mean of missing entries as well as their uncertainty with Gibbs samplers. Experimental results on synthetic tensor data sufficiently demonstrate the capacity of the proposed model in tensor rank determination as well as fitting various non-low-rank structures, the effectiveness of each ingredient, convergence and recovery performance. Additionally, applications in image inpainting, video completion and facial image synthesis, are conducted to further validate the superiority of the proposed model over other state-of-the-art tensor completion methods in terms of recovery accuracy.

In summary, this study mainly contributes in the following five aspects: 1) different from previous methods that only focuses on the low-rank structure, we propose a general tensor completion model which can recover both the low-rank and non-low-rank structures within tensor data; 2) the proposed model is able to automatically determine the unknown tensor rank, even with high missing ratios; 3) the proposed model can data-adaptively fit the complex non-low-rank structure in the latent tensor; 4) spatial coherence is considered for visual tensor data to further improve the recovery accuracy; 5) we present state-of-the-art tensor completion results in various real-world applications.

\section{Related work}\label{sec: relatedwork}

{\textbf{Tensor rank determination.}} According to the rank determination schemes, existing low rank tensor representation models can be roughly divided into two categories. {\emph{1) Completion models:} These models minimize the tensor rank by unfolding a tensor into a collection of matrices and then solving a convex optimization on those matirces with matrix rank norms. For example, Liu et al.~\cite{liu2013tensor} propose to minimize the trace norm of a tensor, which is defined as the summation of the nuclear norm on the unfolding matrix along each mode of the tensor. Zhao et al.~\cite{zhao2015novel} define a new tensor rank norm as the product of rank norms on all unfolding matrices of the tensor. Xu et al.~\cite{xu2013parallel} turn to factorize each unfolding matrix optimally. These models intrinsically exploit the low-rank structure in the unfolding matrices, which, however, cannot fully represent the multi-dimensional structure of tensors~\cite{zhao2015bayesian}. {\emph{2) Factorization models:} They decompose the latent tensor into multiple factors with a fixed rank which is often set manually, and then infer those factors intead. In~\cite{chen2014simultaneous, xu2012infinite}, various factor priors are proposed to regularize the Tucker factorization of tensors. A novel weighted CP factorization scheme is proposed in~\cite{chen2016robust}. However, it is difficult to set correct tensor rank by chance. Moreover, incorrect tensor rank can induce over-fitting~\cite{zhao2015bayesian}. In this study, we propose a new tensor rank definition based on the CP factorization, and the tensor rank can be automatically determined by exploiting the sparsity in factorization weights. Although a similar idea is also adopted in~\cite{zhao2015bayesian,zhao2016bayesian}, the proposed model is more general, flexible and powerful. The detailed comparison can be found in Section~\ref{subsubsection:sparsitylowrank}.

{\textbf{Applicability to real data.}} In most cases, real tensor data $\mathcal{L}$ only ever approximately comply with the low-rank assumption and contains both the low-rank structure $\mathcal{X}$ and the non-low-rank one $\mathcal{E}$. Neverthless, most previous works adopt the low rank tensor representation model~\cite{liu2013tensor,chen2014simultaneous,zhang2014novel} and totally neglect the non-low-rank $\mathcal{E}$ (i.e., $\mathcal{E} = 0$). In~\cite{goldfarb2014robust,gu2014robust,zhao2016bayesian}, $\mathcal{E}$ is depicted by a sparse model. However, $\mathcal{E}$ can be complicated (e.g., heavy-tailed or multimodal) in real tensor data, which necessitates a flexible model to describe $\mathcal{E}$. MOG has shown powerful ability to represent the complex distributed data in various applications, e.g., denoising~\cite{zhao2014robust,chen2016robust}, deblur~\cite{schmidt2014shrinkage} and image representation~\cite{liu2014encoding}, etc. Inspired by this, we leverage MOG to fit the complex non-low-rank structure $\mathcal{E}$ as part of the latent tensor $\mathcal{L}$ for a general tensor completion model. Although MOG has been utilized in several low rank models~\cite{zhao2014robust,cao2015low,chen2016robust}, they are clearly different from this work. In~\cite{zhao2014robust,cao2015low,chen2016robust}, the latent tensor $\mathcal{L}$ is assumed to be of low-rank in its entirety, and MOG is utilized to model the mixed noise corruption (e.g., $\mathcal{M}$ in Eq.~\eqref{eq:eq2}), which is expected to be eliminated from the latent tensor $\mathcal{L}$. In contrast, we assume $\mathcal{L}$ is beyond the low-rank assumption and leverage MOG to fit the complex non-low-rank structure $\mathcal{E}$ belonging to $\mathcal{L}$. In addition, the MOG is inferred from a fully-observed matrix for robust principle analysis in ~\cite{zhao2014robust,cao2015low}, while we infer it from a handful of observed entries for tensor completion. In~\cite{chen2016robust}, EM algorithm is utilized to give a point estimation for parameters of the MOG, while we follow the MMSE framework to infer the posterior mean of these parameters with Gibbs samplers.

It is noticeable that, to our knowledge, this study is the first attempt to jointly address the automatic tensor rank determination and modelling the complex non-low-rank structure in tensor completion.


\section{Proposed tensor completion model}
Given the observation model as Eq.~\eqref{eq:eq2}, we assume that entries in $\mathcal{Y}_{\Omega}$ are independent and identically distributed and $\mathcal{M}_{\Omega}$ is the Gaussian white noise with precision $\tau_0$. Thus, we have the following likelihood
\begin{equation}\label{eq:eq3}
\begin{aligned}
p(\mathcal{Y}_{\Omega}|\mathcal{X}, \mathcal{E}) = \prod\nolimits_{\mbox{\boldmath{$i$}} } \mathcal{N}(y_{\mbox{\boldmath{$i$}}} | x_{\mbox{\boldmath{$i$}}} + e_{\mbox{\boldmath{$i$}}}, \tau^{-1}_0)^{o_{\mbox{\boldmath{$i$}}}}
\end{aligned}
\end{equation}
where $\mathcal{O}$ is an indicator tensor with entries $o_{\mbox{\boldmath{$i$}}} = 1$ if ${\mbox{\boldmath{$i$}}} \in \Omega$. $x_{\mbox{\boldmath{$i$}}}$ and $e_{\mbox{\boldmath{$i$}}}$ are the entries in $\mathcal{X}$ and $\mathcal{E}$. In this study, we propose to recover the latent $\mathcal{L}$ from $\mathcal{Y}_{\Omega}$ by exploiting the low-rank structure $\mathcal{X}$ as well as fitting the complex non-low-rank structure $\mathcal{E}$. To this end, we specifically design priors for $\mathcal{X}$ and $\mathcal{E}$ as follows.

\subsection{Low-rank structure modelling}
In this study, we define a new tensor rank based on CP factorization, with which a sparsity induced low-rank model is proposed to represent $\mathcal{X}$ and the corresponding tensor rank of $\mathcal{X}$ then can be automatically determined by exploiting the sparsity in the CP factorization weights.

\subsubsection{CP factorization}
In the CP factorization, tensor $\mathcal{X}$ is factorized as a sum of $R$ rank-one tensors as
\begin{equation}\label{eq:eq7}
\begin{aligned}
\mathcal{X} = \sum\nolimits^{R}_{r = 1} \lambda_r\mbox{\boldmath{$u$}}^{(1)}_r \circ \cdots \circ \mbox{\boldmath{$u$}}^{(K)}_r = [\kern-0.17em[ \mbox{\boldmath{$\lambda$}}; {\mbox{\boldmath{$U$}}}^{(1)},..., {\mbox{\boldmath{$U$}}}^{(K)} ]\kern-0.17em],
\end{aligned}
\end{equation}
where $\mbox{\boldmath{$u$}}^{(k)}_r \in \mathbb{R}^{n_k}$ is the factor vector in $k$-th mode and $k=1,...,K$. $\circ$ denotes the outer product. For simplicity, the CP factorization can be concisely represented as the right part of Eq.~\eqref{eq:eq7}. $\mbox{\boldmath{$\lambda$}} = [\lambda_1,..., \lambda_R]^T$ is the weights vector. ${\mbox{\boldmath{$U$}}}^{(k)} = [\mbox{\boldmath{$u$}}^{(k)}_1,...,\mbox{\boldmath{$u$}}^{(k)}_R] \in \mathbb{R}^{n_k \times R}$ denotes the $k$-th factor matrix.

\subsubsection{CP rank vs. sparsity induced rank}
\begin{definition}[CP rank]~\label{def:def1}
The rank of tensor $\mathcal{X}$, denoted by $\rm{rank}(\mathcal{X})$, is defined as the smallest number of rank-one tensors in the CP factorization of $\mathcal{X}$~\cite{kolda2009tensor}.
\end{definition}
CP rank is a specialized tensor rank. It degenerates to the matrix rank when $K=2$. However, since CP factorization is ill-posed~\cite{kolda2009tensor}, the determination of CP rank is NP-hard. 

To address this problem, we propose a new tensor rank in this study. Specifically, given a tensor $\mathcal{X}$ of ${\rm{rank}}(\mathcal{X})=R_0$, suppose we have all its CP factorizations with $R \gg R_0$, and we will be able to find one with the most sparse weight vector $\mbox{\boldmath{$\lambda$}}$. According to Definition~\ref{def:def1}, only $R_0$ weights in ${\mbox{\boldmath{$\lambda$}}}$ will be non-zero, viz., ${\rm{rank}}(\mathcal{X})=\|{\mbox{\boldmath{$\lambda$}}}\|_0$. Inspired by this, we give a sparsity induced tensor rank definition as
\begin{definition}[Sparsity induced rank]~\label{def:def2}
Given the CP factorization of $\mathcal{X}$ with $R \gg {\rm{rank}}(\mathcal{X})$ and the most sparse weight vector $\mbox{\boldmath{$\lambda$}}$, $\rm{rank}(\mathcal{X}) = \|\mbox{\boldmath{$\lambda$}}\|_0$.
\end{definition}
Apparently, there are two problems for Definition~\ref{def:def2}, including 1) how to seek the CP factorization with the most sparse weight vector $\vecb{\lambda}$; 2) whether the determination of the sparsity induced rank is NP-hard as CP rank or not.

For an ill-posed problem (e.g., CP factorization), it has been extensively demonstrated that an appropriate prior is necessitated to locate the unique solution expected~\cite{zhang2016exploring}. According to Definition~\ref{def:def2}, we can impose a sparsity prior on the weight vector $\vecb{\lambda}$ during CP factorization. This casts the problem into a sparse representation based optimization where the expected CP factorization can be reached by exploiting the sparsity in $\vecb{\lambda}$. Although exploiting the sparsity (e.g., minimizing $\ell_0$ norm) is also NP-hard, many techniques (e.g., $\ell_1$ norm) have been proposed to produce a satisfactory solution via solving a convex optimization problem. Therefore, determining the sparsity induced rank in Definition~\ref{def:def2} is not NP-hard as the CP rank. In the following part, we will discuss how to build a low rank prior model for $\mathcal{X}$ based on Definition~\ref{def:def2}.

\subsubsection{Sparsity induced low rank model}\label{subsubsection:sparsitylowrank}
Definition~\ref{def:def2} indicates that tensor rank can be automatically determined by exploiting the sparsity of $\mbox{\boldmath{$\lambda$}}$. Thus, we propose to model the low-rank structure $\mathcal{X}$ by modelling the sparsity of weight vector $\vecb{\lambda}$ in its CP factorization. Such a low rank model is termed {\emph{sparsity induced low rank model}} in this study. With a small tensor rank (i.e., $R \gg {\rm{rank}}(\mathcal{X})$), this model amounts to representing $\mathcal{X}$ on a tensor dictionary with sparse coefficients $\mbox{\boldmath{$\lambda$}}$, where each dictionary atom is a rank-one tensor as $\mbox{\boldmath{$u$}}^{(1)}_r \circ \cdots \circ \mbox{\boldmath{$u$}}^{(K)}_r$ in Eq.~\ref{eq:eq7}. These atoms can well preserve the multi-dimensional structure of tensors, viz., the new tensor rank intrinsically depends on the multi-dimensional structure of tensors, which is totally different from the rank norms defined on unfolding matrices of tensor~\cite{liu2013tensor,zhang2014novel}, since only the two-dimensional structure is depicted in unfolding matrices. 

Although a similar sparsity based tensor rank is also reported in~\cite{zhao2015bayesian,zhao2016bayesian}, it is essentially different from ours in three aspects. (i) In~\cite{zhao2015bayesian,zhao2016bayesian}, factorization weights $\mbox{\boldmath{$\lambda$}}$ is absorbed into the factor matrices $\mbox{\boldmath{$U$}}^{(k)}$s (i.e., $\mbox{\boldmath{$\lambda$}}$ and $\mbox{\boldmath{$U$}}^{(k)}$s are coupled together), and thus the tensor rank depends on the row sparsity of $\mbox{\boldmath{$U$}}^{(k)}$s. While we model them separately to analyze the effect of various factors on the tensor more flexibly. (ii) All $\mbox{\boldmath{$U$}}^{(k)}$s follow the same distribution to give a consistent rank in~\cite{zhao2015bayesian,zhao2016bayesian}, while we adopt various distributions to separately model each $\mbox{\boldmath{$U$}}^{(k)}$ (see Section~\ref{subsubsection:regularization}). (iii) The sparsity is depicted by a hierarchical student-t distribution in~\cite{zhao2015bayesian,zhao2016bayesian}, while we employ a more powerful reweighed Laplace prior (see Remark~\ref{remark:remark2}), which determines the tensor rank more accurately (see Section~\ref{subsection:syntetic}).

To model the sparsity of $\mbox{\boldmath{$\lambda$}}$, we adopt a two-level reweighted Laplace prior~\cite{zhang2016exploring} as
\begin{equation}\label{eq:eq9}
\begin{aligned}
&\mbox{\boldmath{$\lambda$}} \sim \mathcal{N}(\mbox{\boldmath{$\lambda$}}|\mathbf{0},\mathbf{diag}(\mbox{\boldmath{$\gamma$}})), {\kern 2pt}
\mbox{\boldmath{$\gamma$}} \sim \prod\nolimits^R_{r=1} {\rm{Ga}}(\gamma_r|1,\frac{\kappa_r}{2}) 
\end{aligned}
\end{equation}
where $\mbox{\boldmath{$\lambda$}}$ and $\mbox{\boldmath{$\gamma$}}=[\gamma_1,...,\gamma_R]^T$ follow a zero-mean Gaussian distribution and a product of Gamma distributions, respectively. $\gamma_r$ and $\kappa_r$ are the respective $r$-th entries of $\mbox{\boldmath{$\gamma$}}$ and $\mbox{\boldmath{$\kappa$}} = [\kappa_1,..., \kappa_R]^T$. The motivation here is twofold. First, the reweighed Laplace prior performs better than traditional sparsity priors (e.g., Laplace prior) on preserving the significant structure in tensor when applied to the sparsity based low rank model (see Remark~\ref{remark:remark2}). Second, the two-level structure makes the inference in Section~\ref{section:inference} tractable.

\begin{remark}\label{remark:remark2}
Sparsity induced low rank model is capable of preserving the significant structure in tensor.
\end{remark}

When exploiting the sparsity of $\mbox{\boldmath{$\lambda$}}$ with appropriate priors (e.g., Laplace prior or $\ell_1$ norm),  weights in $\mbox{\boldmath{$\lambda$}}$ are intrinsically shrunk by a soft-thresholding operator~\cite{gu2014weighted}. In the CP factorization, large weights in $\mbox{\boldmath{$\lambda$}}$ are associated with the significant structure in the tensor, which is similar as large singular values associated with the significant structure in a matrix~\cite{gu2014weighted}. Thus, larger weights should be shrunk less than those smaller ones to preserve the significant structure during exploiting sparsity. However, when the sparsity of $\mbox{\boldmath{$\lambda$}}$ is depicted by Laplace prior or $\ell_1$ norm as
 \begin{equation}\label{eq:eq10}
\begin{aligned}
p(\mbox{\boldmath{$\lambda$}}|w) \propto \exp\left(-w\|\mbox{\boldmath{$\lambda$}}\|_1\right),
\end{aligned}
\end{equation}
all weights in $\mbox{\boldmath{$\lambda$}}$ are shrunk with the same amount $w$~\cite{gu2014weighted}. In contrast, the two-level reweighed Laplace prior amounts to~\cite{zhang2016exploring}
 \begin{equation}\label{eq:eq11}
\begin{aligned}
p(\mbox{\boldmath{$\lambda$}}|\mbox{\boldmath{$K$}}) \propto \exp\left(-\|\mbox{\boldmath{$K$}}\mbox{\boldmath{$\lambda$}}\|_1\right),
\end{aligned}
\end{equation}
where $\mbox{\boldmath{$K$}} = \mathbf{diag}\left([\sqrt{\kappa_1},...,\sqrt{\kappa_R}]^T\right)$. Weights in $\mbox{\boldmath{$\lambda$}}$ are shrunk with various amount $\sqrt{\kappa_r}$s. Moreover, it will be proved in Theorem~\ref{theorem:theorem3} that each $\kappa_r$ is an decreasing function over $|\lambda_r|$ in the Bayesian inference, which is similar as~\cite{zhang2015reweighted}. Therefore, larger weights in $\mbox{\boldmath{$\lambda$}}$ are shrunk less than smaller ones. In other words, the proposed sparsity induced low rank model is capable of adaptively preserving the significant structure in the tensor during rank reduction.

\subsubsection{Regularized factor matrix}\label{subsubsection:regularization}
In this study, the weight vector $\vecb{\lambda}$ and factor matrices $\matb{U}^{(k)}$s from the CP factorization of $\mathcal{X}$ are modelled separately. Although $\vecb{\lambda}$ has been well regularized as Eq.~\eqref{eq:eq9}, there still exists infinite solutions for the CP factorization of $\mathcal{X}$, e.g., $\mathcal{X} = [\kern-0.17em[ \mbox{\boldmath{$\lambda$}}; c^{-1}{\mbox{\boldmath{$U$}}}^{(1)}, {\mbox{\boldmath{$U$}}}^{(2)},..., c{\mbox{\boldmath{$U$}}}^{(K)} ]\kern-0.17em]$ with any scalar $c \neq 0$. To address this problem, we further assume each entry $u^{(k)}_{ir}$ of the factor matrix $\mbox{\boldmath{$U$}}^{(k)}$ follows a Gaussian distribution independently and identically as
\begin{equation}\label{eq:eq12}
\begin{aligned}
u^{(k)}_{ir} \sim \mathcal{N}(u^{(k)}_{ir}|\mu^{(k)}, \tau^{(k)-1}).
\end{aligned}
\end{equation}
which amounts to regularizing each entry with the $\ell_2$ norm. For each $\mbox{\boldmath{$U$}}^{(k)}$, we adopt an individual Gaussian distribution to capture its characteristics. To complete the Bayesian model, we further introduce conjugate priors over the parameters of Gaussian distributions, $\mu^{(k)}$s and $\sigma^{(k)}$s as
\begin{equation}\label{eq:eq13}
\begin{aligned}
\mu^{(k)}, \tau^{(k)} &\sim \mathcal{N}(\mu^{(k)}|\mu_0, (\beta_0\tau^{(k)})^{-1}){\rm{Ga}}(\tau^{(k)}|a_0,b_0).
\end{aligned}
\end{equation}
where $\mu^{(k)}$ and $\tau^{(k)}$ jointly follow a Gaussian-gamma distribution parametrized by $\mu_0$, $\beta_0$, $a_0$ and $b_0$.

In summary, the low-rank structure $\mathcal{X}$ is modelled by exploiting the sparsity in the weight vector $\vecb{\lambda}$ and regularizing the factor matrices $\matb{U}^{(k)}$ in its CP factorization.

\begin{figure*}
\setlength{\abovecaptionskip}{0pt}
\begin{center}
\includegraphics[height=1.6in,width=6.3in,angle=0]{./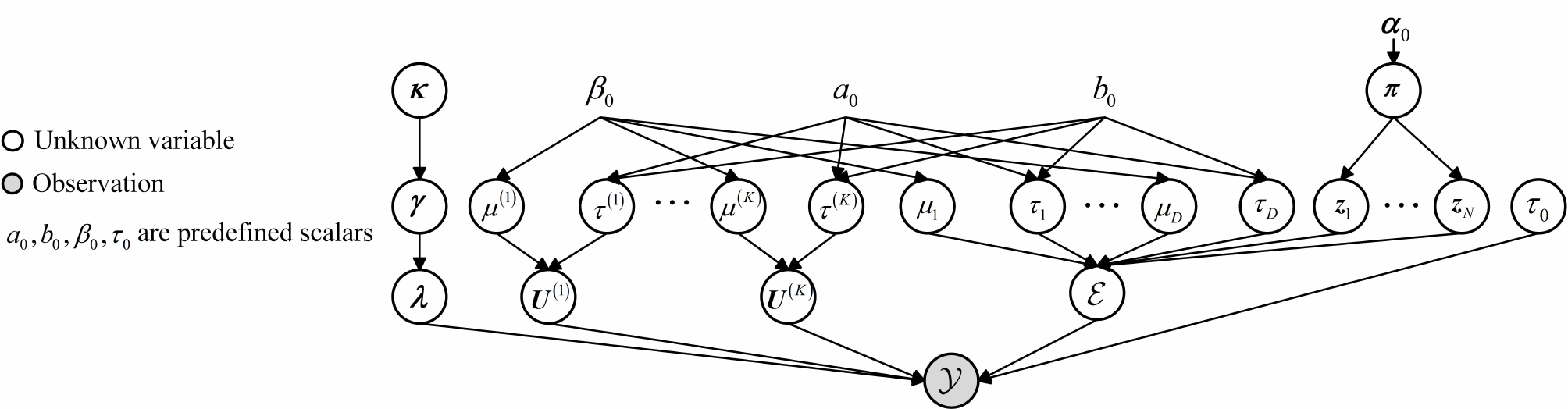}
\end{center}
\vspace{-0.1cm}
\caption{The probabilistic graphical structure of the proposed model.}
\vspace{-0.1cm}
\label{fig:PGM}
\end{figure*}

\subsection{Mixture of Gaussians for non-low-rank structure}
According to Fig.~\ref{fig:dis}, the distribution of entries in $\mathcal{E}$ can be complex (e.g., heavy-tailed or multimodal) in practical. To impose a suitable prior on the complex $\mathcal{E}$, we assume that each entry $e_{\mbox{\boldmath{$i$}}}$ comes from a mixture of $D$ Gaussian as
\vspace{-0.2cm}
\begin{equation}\label{eq:eq4}
\begin{aligned}
e_{\mbox{\boldmath{$i$}}}\sim \sum\nolimits^D_{d = 1} \pi_{d}\mathcal{N}(e_{\mbox{\boldmath{$i$}}}|\mu_d, \tau^{-1}_d),
\end{aligned}
\vspace{-0.2cm}
\end{equation}
where $\pi_d \geq 0$ is the mixing proportion with $\sum\nolimits^{D}_{d=1}\pi_d = 1$. $\mathcal{N}(e_{\mbox{\boldmath{$i$}}}|\mu_d, \tau^{-1}_d)$ denotes the $d$-th Gaussian component with the mean $\mu_d$ and the precision $\tau_d$. Introducing $D$ indicator variables $z^d_{\mbox{\boldmath{$i$}}}$s for $d=1,...,D$, Eq.~\eqref{eq:eq4} can be equivalently represented as a two-level generative model~\cite{bishop2006pattern} as
\begin{equation}\label{eq:eq5}
\begin{aligned}
e_{\mbox{\boldmath{$i$}}} &\sim \prod\nolimits^{D}_{d = 1}\mathcal{N}(e_{\mbox{\boldmath{$i$}}}|\mu_d, \tau^{-1}_d)^{z^{d}_{\mbox{\boldmath{$i$}}}},{\kern 2pt}
\mbox{\boldmath{$z$}}_{\mbox{\boldmath{$i$}}}  \sim {\rm{Multinomial}}(\mbox{\boldmath{$z$}}_{\mbox{\boldmath{$i$}}} |\mbox{\boldmath{$\pi$}}),
\end{aligned}
\end{equation}
where $\mbox{\boldmath{$z$}}_{\mbox{\boldmath{$i$}}} = (z^{1}_{\mbox{\boldmath{$i$}}},..., z^D_{\mbox{\boldmath{$i$}}}) \in \{0,1\}^D$ with $\sum\nolimits^D_{d=1}z^d_{\mbox{\boldmath{$i$}}} = 1$ follows a multinomial distribution parametrized by $\mbox{\boldmath{$\pi$}} = (\pi_1,..., \pi_D)$. To model $\mathcal{E}$ flexibly, we further impose conjugate priors over the parameters of $\mu_d$s, $\tau_d$s, and $\mbox{\boldmath{$\pi$}}$ as
\begin{equation}\label{eq:eq6}
\begin{aligned}
\mu_d, \tau_d &\sim \mathcal{N}(\mu_d|\mu_0, (\beta_0\tau_d)^{-1}){\rm{Ga}}(\tau_d|a_0,b_0)\\
{\mbox{\boldmath{$\pi$}}} &\sim {\rm{Dir}}({\mbox{\boldmath{$\pi$}}}|{\mbox{\boldmath{$\alpha$}}}_0),
\end{aligned}
\end{equation}
where ${\mbox{\boldmath{$\pi$}}} $ follows a Dirichlet distribution ${\rm{Dir}}({\mbox{\boldmath{$\pi$}}}|{\mbox{\boldmath{$\alpha$}}}_0)$ with parameter ${\mbox{\boldmath{$\alpha$}}}_0 = (\alpha_{01}, ..., \alpha_{0D})$. The mixture of Gaussians has obvious advantage on representing the complex $\mathcal{E}$ as
\vspace{-0.05cm}
\begin{remark}\label{remark:remark1}
Mixture of Gaussians is capable of fitting a wide range of non-low-rank structures $\mathcal{E}$s.
\end{remark}
It has been demonstrated that mixture of Gaussians is of the universal ability to approximate any continuous distributions~\cite{bishop2006pattern}. In addition, it has been proved~\cite{zhao2014robust} that both the Dirac delta distribution for zero variable (i.e., $\delta(0)$) and the spike-and-slab sparsity prior~\cite{ishwaran2005spike} are also special cases of the mixture of Gaussians. Therefore, mixture of Gaussians is able to fit a wide range of $\mathcal{E}$s (i.e., zero, sparse or more complex) in this study, which leads to a more robust representation for the real tensor data and ultimately improved completion performance (see more evidences in the experiments in Section~\ref{subsection:syntetic}).

\section{Inference}\label{section:inference}
According to the above likelihood and priors, we give the probabilistic graphical structure of the proposed model in Figure~\ref{fig:PGM}. Most previous works infer the latent tensor $\mathcal{L}$ with MAP estimate as Eq.~\eqref{eq:eq0}. However, generative models can often perform poorly in the context of MAP~\cite{schmidt2010generative}. In this study, we adopt the Bayesian minimum mean squared error estimate (MMSE) suggested in~\cite{schmidt2010generative} for tensor completion as
\begin{equation}\label{eq:eq14}
\vspace{-0.1cm}
\begin{aligned}
\hat{\mathcal{L}} = \arg \min\limits_{\tilde{\mathcal{L}} } \int \|\tilde{\mathcal{L}}  - \mathcal{L} \|^2_Fp(\mathcal{L} | \mathcal{Y}_{\Omega}) d \mathcal{L} = \mathbb{E}[\mathcal{L}|\mathcal{Y}_{\Omega}],
\end{aligned}
\vspace{-0.1cm}
\end{equation}
where $\hat{\mathcal{L}}$ equals the expectation $\mathbb{E}[\mathcal{L}|\mathcal{Y}_{\Omega}]$ of $\mathcal{L}$ under the posterior distribution $p(\mathcal{L} | \mathcal{Y}_{\Omega})$. $\|\mathcal{A}\|_F$ denotes the Frobenius norm on the tensor $\mathcal{A}$. In contrast to MAP that only produces a point estimation, MMSE exploits the uncertainty of $\mathcal{L}$ and adopts the probabilistic mean as solution which often shows better robustness and performance. However, it is difficult to conduct the expectation in Eq.~\eqref{eq:eq14}. To circumvent this problem, we can adopt the mean of samples drawn from the posterior of $\mathcal{L}$ as an unbiased estimate for $\hat{\mathcal{L}}$~\cite{schmidt2010generative}. Since we model the non-low-rank structure $\mathcal{E}$ and the CP factorization of the low-rank structure $\mathcal{X}$ instead of directly modelling $\mathcal{L}$ in this study, we turn to draw samples from the corresponding posteriors for $\vecb{\lambda}$,  $\matb{U}^{(k)}$ and $\mathcal{E}$. Given their sample means, $\hat{\mathcal{L}}$ then can be obtained from Eqs.~\eqref{eq:eq7}~\eqref{eq:eq1}.
 
Specifically, based on Eqs.~\eqref{eq:eq3}~\eqref{eq:eq9}~\eqref{eq:eq12}~\eqref{eq:eq13}~\eqref{eq:eq5}~\eqref{eq:eq6}, we can obtain the posterior of all involved variables as
\vspace{-0.1cm}
\begin{equation}\label{eq:eq15}
\begin{aligned}
p(\mbox{\boldmath{$\lambda$}}, \mathcal{U}, \mathcal{Z}, \mbox{\boldmath{$\pi$}}, \mbox{\boldmath{$\mu$}}, \mbox{\boldmath{$\tau$}}, \mbox{\boldmath{$\mu$}}_e, \mbox{\boldmath{$\tau$}}_e|\mathcal{Y}_{\Omega}),
\end{aligned}
\vspace{-0.1cm}
\end{equation}
where $\mathcal{U} = \{U^{(k)}\}$, $\mathcal{Z}= \{\mbox{\boldmath{$z$}}_{\mbox{\boldmath{$i$}}}\}$, $\mbox{\boldmath{$\mu$}} = \{\mu^{(k)}\}$, ${\mbox{\boldmath{$\tau$}}} = \{\tau^{(k)}\}$, $\mbox{\boldmath{$\mu$}}_e = \{\mu_d\}$ and ${\mbox{\boldmath{$\tau$}}}_e = \{\tau_d\}$ are introduced for simplicity. Then, Gibbs sampling is employed on Eq.~\eqref{eq:eq15} to sample each variable as follows.

\subsection{Gibbs samplers for low-rank structure}\label{subsec:gbs_low}
The low-rank structure $\mathcal{X}$ is decomposed into the weight vector $\vecb{\lambda}$ and factor matrices $\matb{U}^{(k)}$s.
\subsubsection{Sampler for weight vector $\mbox{\boldmath{$\lambda$}}$}
According to Figure~\ref{fig:PGM}, the posterior for $\vecb{\lambda}$ can be inferred through receiving two kinds of messages. The first one comes from its parent and is represented as the prior in Eq.~\eqref{eq:eq9}, while the latter one sent by the observation $\mathcal{Y}$ and its parents (i.e., factor matrices $\matb{U}^{(k)}$, non-low-rank structure $\mathcal{E}$ and $\tau_0$) is expressed as the likelihood in Eq.~\eqref{eq:eq3}. Thus, given $\matb{U}^{(k)}$s, $\mathcal{E}$, $\vecb{\gamma}$ and $\tau_0$, we have the posterior over $\mbox{\boldmath{$\lambda$}}$ as
\begin{equation}\label{eq:eq13-1}
\begin{aligned}
p(\mbox{\boldmath{$\lambda$}}|-) \propto \mathcal{N}(\mbox{\boldmath{$\lambda$}}|\mathbf{0},\Gamma)\prod\limits_{\mbox{\boldmath{$i$}}} \mathcal{N}(y_{\mbox{\boldmath{$i$}}} | e_{\mbox{\boldmath{$i$}}} + \sum\limits^{R}_{r=1}\lambda_r\prod\limits^{K}_{k=1}u^{(k)}_{i_kr}, \tau^{-1}_0)^{o_{\mbox{\boldmath{$i$}}}}.
\end{aligned}
\end{equation}
Since all entries in $\vecb{\lambda}$ are dependent on each other, we have to sample each entry alternatively. To this end, let $\tilde{y}^r_{\mbox{\boldmath{$i$}}} = y_{\mbox{\boldmath{$i$}}} - e_{\mbox{\boldmath{$i$}}} - \sum\nolimits^{R}_{t \neq r}\lambda_t\prod\nolimits^{K}_{k=1}u^{(k)}_{i_kt}$ and $\tilde{b}^r_{\mbox{\boldmath{$i$}}} = \prod\nolimits^{K}_{k=1}u^{(k)}_{i_kr}$, the posterior for entry $\lambda_r$ then can be given as
\begin{equation}\label{eq:eq14-1}
\begin{aligned}
p(\lambda_r|-) &\propto \exp\left(-\frac{\lambda^2_r}{2\gamma_r}\right)\prod\limits_{\mbox{\boldmath{$i$}}} \exp\left[-\frac{{o_{\mbox{\boldmath{$i$}}}}\left(\tilde{y}^r_{\mbox{\boldmath{$i$}}} - \lambda_r\tilde{b}^r_{\mbox{\boldmath{$i$}}}\right)^2}{2\tau^{-1}_0}\right].
\end{aligned}
\end{equation}
where the messages from other entries $\lambda_t$ with $t\neq r$ is further introduced. It can be shown that $\lambda_r$ can be drawn from a Gaussian distribution as
\begin{equation}\label{eq:eq15-1}
\begin{aligned}
\lambda_r \sim \mathcal{N}(\tilde{\mu}_{\lambda_r}, \tilde{\tau}^{-1}_{\lambda_r}),
\end{aligned}
\end{equation}
where the posterior parameters can be updated as
\begin{equation}\label{eq:eq16}
\begin{aligned}
\tilde{\tau}_{\lambda_r} = \gamma^{-1}_r + \tau_0\sum\limits_{\mbox{\boldmath{$i$}}}{o_{\mbox{\boldmath{$i$}}}}\tilde{b}^{r2}_{\mbox{\boldmath{$i$}}},{\kern 4pt}
\tilde{\mu}_{\lambda_r} = \tau^{-1}_{\lambda_r}\tau_0\sum\limits_{\mbox{\boldmath{$i$}}}{o_{\mbox{\boldmath{$i$}}}}\tilde{b}^r_{\mbox{\boldmath{$i$}}}\tilde{y}^r_{\mbox{\boldmath{$i$}}}.
\end{aligned}
\end{equation}

\subsubsection{Sampler for hyperperameter $\mbox{\boldmath{$\gamma$}}$}
The inference of $\vecb{\gamma}$ can be performed as follows by integrating the messages from the weight vector $\vecb{\lambda}$ and its hyperprior, both of which are expressed in Eq.~\eqref{eq:eq9}. Given $\vecb{\lambda}$ and $\vecb{\kappa}$, the posterior of $\vecb{\gamma}$ thus can be formulated as
\begin{equation}\label{eq:eq17}
\begin{aligned}
p(\mbox{\boldmath{$\gamma$}}|\sim) \propto \mathcal{N}(\mbox{\boldmath{$\lambda$}}|\mathbf{0},\Gamma)\prod\limits^R_{r=1} {\rm{Ga}}(\gamma_r|1,\frac{2}{\kappa_r}),
\end{aligned}
\end{equation}
where each entry $\lambda_r$ is independently distributed as
\begin{equation}\label{eq:eq18}
\begin{aligned}
p(\gamma_r|\sim) \propto \gamma^{-1/2}_r\exp\left(-\frac{\lambda^2_r + \kappa_r\gamma^2_r}{2\gamma_r}\right).
\end{aligned}
\end{equation}
Thus, $\gamma_r$ can be drawn from a generalized inverse Gaussian distribution as
\begin{equation}\label{eq:eq19}
\begin{aligned}
\gamma_r \sim \mathcal{GIG}(\gamma_r|\kappa_r, \lambda^2_r, 1/2).
\end{aligned}
\end{equation}
Given the $r$-th entry $\lambda_r$ and $\kappa_r$, the posterior expectation of $\gamma_r$ can be given as
\begin{equation}\label{eq:eq19-0}
\begin{aligned}
\mathbb{E}[\gamma_r] = \frac{|\lambda_r|\mathcal{K}_{3/2}(|\lambda_r|\sqrt{\kappa_r})}{\sqrt{\kappa_r}\mathcal{K}_{1/2}(|\lambda_r|\sqrt{\kappa_r})}
\end{aligned}
\end{equation}
which linearly depends on ${\lambda_r}$ and performs as a decreasing function of $\kappa_r$. To demonstrate this point, we introduce the following results.
\begin{theorem}\label{theorem:theorem1}
Given the modified Bessel function of second kind $K_{v}(z)$, we define a function $f_v(z) = zK_{v + 1}(z)/K_{v}(z)$, then, for any variable $x > 0$, $f_{1/2}(x) = x + 1$.
\end{theorem}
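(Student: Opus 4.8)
The plan is to reduce the statement to the well-known closed forms of the half-integer-order modified Bessel functions of the second kind. Recall that for $v=1/2$ one has the elementary identities $K_{1/2}(z)=\sqrt{\pi/(2z)}\,e^{-z}$ and $K_{3/2}(z)=\sqrt{\pi/(2z)}\,e^{-z}\bigl(1+1/z\bigr)$, both valid for $z>0$. Substituting these into the definition $f_{1/2}(z)=zK_{3/2}(z)/K_{1/2}(z)$, the common prefactor $\sqrt{\pi/(2z)}\,e^{-z}$ cancels (it is nonzero for $z>0$), leaving $f_{1/2}(z)=z\bigl(1+1/z\bigr)=z+1$, which is exactly the claim.

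If a more self-contained derivation is preferred over quoting the closed forms, I would instead invoke the standard three-term recurrence for modified Bessel functions of the second kind, $K_{v+1}(z)=K_{v-1}(z)+\tfrac{2v}{z}K_v(z)$, together with the reflection symmetry $K_{-v}(z)=K_v(z)$. Taking $v=1/2$ gives $K_{3/2}(z)=K_{-1/2}(z)+\tfrac{1}{z}K_{1/2}(z)=\bigl(1+1/z\bigr)K_{1/2}(z)$, since $K_{-1/2}=K_{1/2}$. Dividing by $K_{1/2}(z)$ and multiplying by $z$ then yields $f_{1/2}(z)=z+1$ directly. This route needs only two textbook Bessel identities and a single substitution.

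The computation is routine; the only point worth a line of care is that $K_{1/2}(z)>0$ for every $z>0$, so that $f_{1/2}$ is well defined and the division above is legitimate — this is immediate from $K_{1/2}(z)=\sqrt{\pi/(2z)}\,e^{-z}$. There is no substantive obstacle here: once the half-integer identities (or the recurrence plus symmetry) are in hand, the result follows in essentially one line. I would present the recurrence-based argument as the main proof, since it is the shortest and makes transparent why the ratio collapses to the affine function $z+1$.
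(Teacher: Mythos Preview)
Your proposal is correct, and the recurrence-plus-symmetry argument you plan to present as the main proof is exactly the route taken in the paper: invoke $K_{v+1}(z)-K_{v-1}(z)=\tfrac{2v}{z}K_v(z)$ and $K_{-v}=K_v$ at $v=1/2$ to get $K_{3/2}/K_{1/2}=1+1/z$, then multiply by $z$. Your alternative via the explicit half-integer closed forms is a valid and equally short variant that the paper does not use, but it is not a genuinely different idea---it simply packages the same identities one level down.
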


\begin{proof}
In~\cite{kreh2012bessel}, it has been proved that 
\begin{equation}\label{eq:eq19-1}
\begin{aligned}
&K_{v+1}(z) - K_{v-1}(z) = \frac{2v}{x}K_{n}(z);\\
&K_{v+\frac{1}{2}}(z) = K_{-v-\frac{1}{2}}(z),{\kern 4pt}\forall v\in{\mathbb{R}}.
\end{aligned}
\end{equation}
When $v= 1/2$, we have
\begin{equation}\label{eq:eq19-2}
\begin{aligned}
\frac{K_{3/2}(z)}{K_{1/2}(z)} = \frac{1}{x} + \frac{K_{-1/2}(z)}{K_{1/2}(z)} = \frac{1}{x} + 1.\\
\end{aligned}
\end{equation}
Thus, $f_{1/2}(x) = x(1/x + 1) = x + 1$.
\end{proof}
\begin{theorem}\label{theorem:theorem2}
Given $\lambda_r$ and $\kappa_r$, $\mathbb{E}[\gamma_r] = \frac{|\lambda_r|}{\sqrt{\kappa_r}} + \frac{1}{\kappa_r}$.
\end{theorem}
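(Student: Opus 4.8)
The plan is to reduce Theorem~\ref{theorem:theorem2} to Theorem~\ref{theorem:theorem1} by a single substitution in the closed-form posterior expectation~\eqref{eq:eq19-0}. Recall that the posterior of $\gamma_r$ is the generalized inverse Gaussian $\mathcal{GIG}(\gamma_r|\kappa_r,\lambda_r^2,1/2)$ of~\eqref{eq:eq19}, and its mean is given in~\eqref{eq:eq19-0} as
\begin{equation*}
\mathbb{E}[\gamma_r] = \frac{|\lambda_r|\,\mathcal{K}_{3/2}\!\left(|\lambda_r|\sqrt{\kappa_r}\right)}{\sqrt{\kappa_r}\,\mathcal{K}_{1/2}\!\left(|\lambda_r|\sqrt{\kappa_r}\right)}.
\end{equation*}
First I would set $x = |\lambda_r|\sqrt{\kappa_r}$, which is strictly positive whenever $\lambda_r\neq 0$ (the degenerate case $\lambda_r=0$ is handled separately below as a limit). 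Then the right-hand side equals $\frac{|\lambda_r|}{\sqrt{\kappa_r}}\cdot\frac{K_{3/2}(x)}{K_{1/2}(x)}$, so the whole statement hinges on evaluating the Bessel-function ratio $K_{3/2}(x)/K_{1/2}(x)$.

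The second step is to invoke Theorem~\ref{theorem:theorem1}: with $v = 1/2$ and the variable $x>0$, it gives $f_{1/2}(x) = x\,K_{3/2}(x)/K_{1/2}(x) = x + 1$, hence $K_{3/2}(x)/K_{1/2}(x) = 1 + 1/x$. Substituting back,
\begin{equation*}
\mathbb{E}[\gamma_r] = \frac{|\lambda_r|}{\sqrt{\kappa_r}}\left(1 + \frac{1}{|\lambda_r|\sqrt{\kappa_r}}\right) = \frac{|\lambda_r|}{\sqrt{\kappa_r}} + \frac{1}{\kappa_r},
\end{equation*}
which is exactly the claimed identity.

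Finally I would remark on the boundary case $\lambda_r = 0$: here Theorem~\ref{theorem:theorem1} does not apply directly since its hypothesis requires $x>0$, but one can either take the limit $x\downarrow 0$ in the expression $\frac{|\lambda_r|}{\sqrt{\kappa_r}}\big(1+\tfrac{1}{|\lambda_r|\sqrt{\kappa_r}}\big)$, which tends to $\tfrac{1}{\kappa_r}$, or note directly that $\mathcal{GIG}(\gamma_r|\kappa_r,0,1/2)$ degenerates to the Gamma distribution $\mathrm{Ga}(\gamma_r|1,\kappa_r/2)$ whose mean is $2/\kappa_r$; this discrepancy merely reflects that the formula~\eqref{eq:eq19-0} should be read as the $\lambda_r\neq 0$ case, so it is worth stating the hypothesis $\lambda_r\neq 0$ explicitly. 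I do not anticipate any real obstacle here — the argument is a one-line substitution once Theorem~\ref{theorem:theorem1} is in hand; the only point requiring a sentence of care is this degenerate limit and making sure the sign/absolute-value bookkeeping in $x=|\lambda_r|\sqrt{\kappa_r}$ is consistent with the GIG mean formula.
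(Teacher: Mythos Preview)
Your proposal is correct and follows essentially the same route as the paper: rewrite the GIG mean~\eqref{eq:eq19-0} as $f_{1/2}(|\lambda_r|\sqrt{\kappa_r})/\kappa_r$ and apply Theorem~\ref{theorem:theorem1}. Your additional discussion of the degenerate case $\lambda_r=0$ is a nice touch that the paper omits.
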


\begin{proof}
In Eq.~\eqref{eq:eq19-0}, $\mathbb{E}[\gamma_r] = \frac{f_{1/2}{(|\lambda_r|\sqrt{\kappa_r})}}{\kappa_r}$. According to Theorem~\ref{theorem:theorem1}, $\mathbb{E}[\gamma_r] = \frac{|\lambda_r|}{\sqrt{\kappa_r}} + \frac{1}{\kappa_r}$.
\end{proof}

\subsubsection{Sampler for hyperparameter $\mbox{\boldmath{$\kappa$}}$}
The posterior over $\vecb{\kappa}$ can be inferred by receiving the messages from hyperparameter $\vecb{\gamma}$, shown as Fig.~\ref{fig:PGM}, and the posterior over each entry $\kappa_r$ can be formulated as
\begin{equation}\label{eq:eq20}
\begin{aligned}
p(\kappa_r|-) &\propto {\rm{Ga}}(\gamma_r|1,\frac{2}{\kappa_r}) \propto \kappa_r\exp\left(-\frac{\gamma_r\kappa_r}{2}\right).
\end{aligned}
\end{equation}
Thus, we can draw $\kappa_r$ from a Gamma distribution as
\begin{equation}\label{eq:eq21}
\begin{aligned}
\kappa_r \sim {\rm{Ga}}(\kappa_r|2,\frac{2}{\gamma_r}).
\end{aligned}
\end{equation}

\begin{theorem}\label{theorem:theorem3}
Given $\lambda_r$ and the corresponding $\mathbb{E}[\gamma_r]$ for Eq.~\eqref{eq:eq21}, there exists a decreasing function $\rho$ s.t. $\mathbb{E}[\kappa_r] = \rho(|\lambda_r|)$.
\end{theorem}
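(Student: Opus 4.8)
The plan is to combine Theorem~\ref{theorem:theorem2} with the elementary posterior mean of the Gamma sampler in Eq.~\eqref{eq:eq21}, and then conclude by a monotone-composition argument. First I would read off the conditional mean $\mathbb{E}[\kappa_r\mid\gamma_r]$: from Eq.~\eqref{eq:eq20}, $p(\kappa_r|-)\propto\kappa_r\exp(-\gamma_r\kappa_r/2)$ is a Gamma density with shape $2$ and rate $\gamma_r/2$, consistent with Eq.~\eqref{eq:eq21} under the shape--scale convention, so $\mathbb{E}[\kappa_r\mid\gamma_r]=4/\gamma_r$, which is continuous and strictly decreasing in $\gamma_r$ on $(0,\infty)$.

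Next I would substitute the posterior mean of $\gamma_r$ supplied by Theorem~\ref{theorem:theorem2}, namely $\mathbb{E}[\gamma_r]=\tfrac{|\lambda_r|}{\sqrt{\kappa_r}}+\tfrac{1}{\kappa_r}$, in place of $\gamma_r$, obtaining
\begin{equation*}
\mathbb{E}[\kappa_r]=\frac{4}{\mathbb{E}[\gamma_r]}=\frac{4\kappa_r}{|\lambda_r|\sqrt{\kappa_r}+1}.
\end{equation*}
Writing the right-hand side as a composition $\rho=g\circ h$ with $g(\gamma)=4/\gamma$ strictly decreasing and $h(t)=t/\sqrt{\kappa_r}+1/\kappa_r$ strictly increasing in $t$ on $(0,\infty)$, a decreasing function composed with an increasing one is decreasing, so $\rho$ is strictly decreasing; this already proves the statement once the current iterate $\kappa_r$ is treated as fixed.

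The main obstacle is precisely this circular dependence: $\mathbb{E}[\gamma_r]$ from Theorem~\ref{theorem:theorem2} still contains $\kappa_r$, so the expression above is not yet a function of $|\lambda_r|$ alone. I would resolve it by imposing the stationarity (fixed-point) condition of the Gibbs update, $\mathbb{E}[\kappa_r]=\kappa_r=:\bar\kappa$: then $\bar\kappa\big(|\lambda_r|\sqrt{\bar\kappa}+1\big)=4\bar\kappa$, and cancelling the positive factor $\bar\kappa$ gives $|\lambda_r|\sqrt{\bar\kappa}=3$, hence
\begin{equation*}
\mathbb{E}[\kappa_r]=\bar\kappa=\frac{9}{\lambda_r^{2}}=:\rho(|\lambda_r|),
\end{equation*}
with $\rho(t)=9/t^{2}$ continuous and strictly decreasing on $(0,\infty)$, $\rho(0^{+})=+\infty$, $\rho(\infty)=0$. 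Beyond the fixed-point step, the only routine checks are positivity of $\lambda_r^{2}$, $\kappa_r$, $\gamma_r$ so that all divisions and square roots are legitimate, and that Eq.~\eqref{eq:eq21} is read in the same convention as Eq.~\eqref{eq:eq20} so that its mean is indeed $4/\gamma_r$; as a sanity check, this closed form is consistent with Remark~\ref{remark:remark2}, since the effective shrinkage $\sqrt{\kappa_r}=3/|\lambda_r|$ then decreases as $|\lambda_r|$ grows.
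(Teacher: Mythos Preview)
Your core argument matches the paper's exactly: compute $\mathbb{E}[\kappa_r]=4/\mathbb{E}[\gamma_r]$ from the Gamma sampler, plug in Theorem~\ref{theorem:theorem2}, and observe that with the previous-iterate $\kappa_r$ held fixed the resulting map $|\lambda_r|\mapsto 4/(|\lambda_r|/\sqrt{\kappa_r}+1/\kappa_r)$ is decreasing. The paper stops precisely there, writing $\mathbb{E}[\kappa_r]\propto(|\lambda_r|+\mathrm{const})^{-1}$ and declaring the ``const'' to be the quantity not varying with $\lambda_r$; it does not attempt to resolve the circularity you flag.

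Your fixed-point step is an addition the paper does not make. It is algebraically correct and yields the pleasant closed form $\rho(t)=9/t^{2}$, but note that it answers a slightly different question than the theorem as stated: the hypothesis ``given $\lambda_r$ and the corresponding $\mathbb{E}[\gamma_r]$'' already pins down $\mathbb{E}[\gamma_r]$ using the \emph{current} $\kappa_r$, so the paper's one-step conditional reading is what is actually being claimed. Your stationary-point analysis is a stronger and cleaner statement about the limiting behaviour of the Gibbs updates, and it buys you an explicit $\rho$ independent of the iterate; the paper's version buys only monotonicity at each sweep but is all that the theorem formally asserts.
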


\begin{proof}
Given $\lambda_r$ and the corresponding $\mathbb{E}[\lambda_r]$, the posterior for $\kappa_r$ can be reformulated as ${\rm{Ga}}(\kappa_r|2,\mathbb{E}[\lambda_r])$, and the resulted posterior expectation $\mathbb{E}[\kappa_r] = 4 / {\mathbb{E}[\lambda_r]}$. Thus, according to Theorem~\ref{theorem:theorem2}, we have 
\begin{equation}\label{eq:eq21-1}
\begin{aligned}
\mathbb{E}[\kappa_r] =  \rho(|\lambda_r|) \propto ({|\lambda_r| + {\rm{const}}})^{-1}
\end{aligned}
\end{equation}
where ${\rm{const}}$ denotes the variable relative to $\lambda_r$. Thus, $\rho(|\lambda_r|)$ is an decreasing function over $\lambda_r$.
\end{proof}
With Eq.~\eqref{eq:eq11}, weight $\lambda_r$ is shrunk with $\sqrt{\kappa_r}$ during exploiting sparsity of $\vecb{\lambda}$. Based on Theorem~\ref{theorem:theorem3}, $\mathbb{E}[\kappa_r] \propto ({|\lambda_r| + {\rm{const}}})^{-1}$, thus $\lambda_r$ will be shrunk less than $\lambda_j$ when $|\lambda_r| > |\lambda_j|$, which is consistent with the illustration of Remark~\ref{remark:remark2} in Section~\ref{subsubsection:sparsitylowrank}.

\subsubsection{Sampler for factor matrix $\vecb{U}^{(k)}$}
Similarly, the posterior for $\matb{U}^{(k)}$ is jointly determined by observation $\mathcal{Y}$, other factor matrices $\matb{U}^{(j)}$ with $j\neq k$, and hyperparameters $\mu^{(k)}$, $\tau^{(k)}$ in its prior. Since entries in each column of $\matb{U}^{(k)}$ are correlated with each other, we sample each each entry $u^{(k)}_{ir}$ with the posterior
\begin{equation}\label{eq:eq22}
\begin{aligned}
p(u^{(k)}_{ir}|-) \propto &\prod\nolimits_{{\mbox{\boldmath{$i$}}}: i_k = i} \left(\mathcal{N}(y_{\mbox{\boldmath{$i$}}} |e_{\mbox{\boldmath{$i$}}} + \sum\nolimits^{R}_{t=1}\lambda_t\prod\nolimits^{K}_{s=1}u^{(s)}_{i_st}, \tau^{-1}_0)^{o_{\mbox{\boldmath{$i$}}}}\right. \\
& \left. \mathcal{N}(u^{(k)}_{ir}|\mu^{(k)}, \tau^{(k)-1})\right).
\end{aligned}
\end{equation}
Thus, $u^{(k)}_{ir}$ can be drawn from a Gaussian distribution as
\begin{equation}\label{eq:eq23}
\begin{aligned}
u^{(k)}_{ir} \sim \mathcal{N}(\tilde{\mu}_{u^{(k)}_{ir}}, \tilde{\tau}_{u^{(k)}_{ir}}),
\end{aligned}
\end{equation}
where let $\tilde{c}^{rk}_{\mbox{\boldmath{$i$}}} = \lambda_r\prod\nolimits^K_{s\neq k}u^{(s)}_{i_sr}$ and the corresponding parameters are
\begin{equation}\label{eq:eq24}
\begin{aligned}
&\tilde{\tau}_{u^{(k)}_{ir}} = \sum\nolimits_{{\mbox{\boldmath{$i$}}}: i_k = i}\tau_0{o_{\mbox{\boldmath{$i$}}}}\tilde{c}^{rk2}_{\mbox{\boldmath{$i$}}}+ \tau^{(k)},\\
&\tilde{\mu}_{u^{(k)}_{ir}} = \tau^{-1}_{u^{(k)}_{ir}}\left(\sum\nolimits_{{\mbox{\boldmath{$i$}}}: i_k = i}\tau_0{o_{\mbox{\boldmath{$i$}}}}\tilde{y}^r_{\mbox{\boldmath{$i$}}}\tilde{c}^{rk}_{\mbox{\boldmath{$i$}}} + \tau^{(k)}\mu^{(k)}\right).
\end{aligned}
\end{equation}  

\subsubsection{Sampler for hyperparameter $\mu^{(k)}$ and $\tau^{(k)}$}
Receiving the messages from $\matb{U}^{(k)}$, $\beta_0$, $a_0$ and $b_0$, the posterior over $\mu^{(k)}$ can be given as
\begin{equation}\label{eq:eq26}
\begin{aligned}
p(\mu^{(k)}|-) \propto \prod\limits^{n_k}_{i=1}\prod\limits^{R}_{r=1}\mathcal{N}(u^{(k)}_{ir}|\mu^{(k)}, \tau^{(k)-1})\mathcal{N}(\mu^{(k)}|\mu_0, (\beta_0\tau^{(k)})^{-1}).
\end{aligned}
\end{equation}
Thus, $\mu^{(k)}$ can be drawn from a Gaussian distribution as
\begin{equation}\label{eq:eq27}
\begin{aligned}
\mu^{(k)} \sim \mathcal{N}(\tilde{\mu}_{\mu^{(k)}},\tilde{\tau}^{-1}_{\mu^{(k)}}),
\end{aligned}
\end{equation}
where the posterior parameters are
\begin{equation}\label{eq:eq27-1}
\begin{aligned}
\tilde{\tau}_{\mu^{(k)}} = \tau^{(k)}\left(\beta_0 + n_kR\right), \tilde{\mu}_{\mu^{(k)}} = \frac{\tau^{(k)}}{\tau_{\mu^{(k)}}}\left(\sum\limits^{n_k}_{i=1}\sum\limits^{R}_{r=1}u^{(k)}_{ir} + \beta_0\mu_0\right).
\end{aligned}
\end{equation}

Similarly, we have the posterior over $\tau^{(k)}$ as
\begin{equation}\label{eq:eq29}
\begin{aligned}
p(\tau^{(k)}|-) &\propto \prod\limits^{n_k}_{i=1}\prod\limits^{R}_{r=1}\left(\mathcal{N}(u^{(k)}_{ir}|\mu^{(k)}, \tau^{(k)-1})\right. \\
&\left. \mathcal{N}(\mu^{(k)}|\mu_0, (\beta_0\tau^{(k)})^{-1}){\rm{Ga}}(\tau^{(k)}|c_0,e_0)\right).
\end{aligned}
\end{equation}
It can be seen that $\tau^{(k)}_{d}$ can be drawn from the following Gamma distribution
\begin{equation}\label{eq:eq30}
\begin{aligned}
\tau^{(k)} \sim {\rm{Ga}}(\tilde{a}^{(k)}, \tilde{b}^{(k)}),
\end{aligned}
\end{equation}
where the parameters are
\begin{equation}\label{eq:eq30-1}
\begin{aligned}
&\tilde{a}^{(k)}  = a_0 + \left(n_kR + 1\right)/2;\\
&\tilde{b}^{(k)} = b_0 +  \left[\sum\limits^{n_k}_{i=1}\sum\limits^{R}_{r=1}\left(u^{(k)}_{ir} - \mu^{(k)}\right)^2 + \beta_0\left(\mu^{(k)} - \mu_0\right)^2\right] / 2.
\end{aligned}
\end{equation}

\subsection{Gibbs sampler for non-low-rank structure}
\subsubsection{Sampler for non-low-rank structure $\mathcal{E}$}
Figure~\ref{fig:PGM} shows that inferring the posterior of $\mathcal{E}$ requires the messages from observation $\mathcal{Y}$, factor matrices $\matb{U}^{(k)}$, weight vector $\vecb{\lambda}$ and parameters $\mu_d$s, $\tau_d$s in prior~\eqref{eq:eq5}. Since each entry $e_{\mbox{\boldmath{$i$}}}$ is assumed to be independent to others, we have the following posterior
\begin{equation}\label{eq:eq32}
\begin{aligned}
p(e_{\mbox{\boldmath{$i$}}}|-) \propto \mathcal{N}(y_{\mbox{\boldmath{$i$}}} |e_{\mbox{\boldmath{$i$}}} + x_{\mbox{\boldmath{$i$}}}, \tau^{-1}_0)^{o_{\mbox{\boldmath{$i$}}}}\prod\nolimits^{D}_{d = 1}\mathcal{N}(e_{\mbox{\boldmath{$i$}}}|\mu_d, \tau^{-1}_d)^{z^{d}_{\mbox{\boldmath{$i$}}}},
\end{aligned}
\end{equation}
and $e_{\mbox{\boldmath{$i$}}}$ can be drawn from a Gaussian distribution as
\begin{equation}\label{eq:eq33}
\begin{aligned}
e_{\mbox{\boldmath{$i$}}} \sim \mathcal{N}\left(\tilde{\mu}_{e_{\mbox{\boldmath{$i$}}}}, \tilde{\tau}^{-1}_{e_{\mbox{\boldmath{$i$}}}}\right),
\end{aligned}
\end{equation}
with parameters
\begin{equation}\label{eq:eq33-1}
\begin{aligned}
&\tilde{\tau}_{e_{\mbox{\boldmath{$i$}}}} = {o_{\mbox{\boldmath{$i$}}}}\tau_0 + \sum\nolimits^D_{d=1}\tau_dz^{d}_{\mbox{\boldmath{$i$}}},\\
&\tilde{\mu}_{e_{\mbox{\boldmath{$i$}}}} = \tilde{\tau}^{-1}_{e_{\mbox{\boldmath{$i$}}}}\left[\tau_0{o_{\mbox{\boldmath{$i$}}}}(y_{\mbox{\boldmath{$i$}}} - x_{\mbox{\boldmath{$i$}}}) + \sum\nolimits^D_{d=1}\tau_dz^{d}_{\mbox{\boldmath{$i$}}}\mu_d\right].
\end{aligned}
\end{equation}

\subsubsection{Sampler for hyperparameter $\mu_d$ and $\tau_d$}
With messages from $\mathcal{E}$ and parameters $\beta_0$, $a_0$, $b_0$ in hyperprior, the posterior over $\mu_d$ can be given as
\begin{equation}\label{eq:eq35}
\begin{aligned}
p(\mu_d|-) \propto \prod\nolimits_{\mbox{\boldmath{$i$}}} \mathcal{N}(e_{\mbox{\boldmath{$i$}}}|\mu_d, \tau^{-1}_d)^{z^{d}_{\mbox{\boldmath{$i$}}}}\mathcal{N}(\mu_d|\mu_0, (\beta_0\tau_d)^{-1}).
\end{aligned}
\end{equation}
Thus, $\mu_d$ can be drawn from a Gaussian distribution as 
\begin{equation}\label{eq:eq36}
\begin{aligned}
\mu_d \sim \mathcal{N}\left(\tilde{\mu}_{\mu_d}, \tilde{\tau}^{-1}_{\mu_d}\right),
\end{aligned}
\end{equation}
with parameters
\begin{equation}\label{eq:eq36-1}
\begin{aligned}
\tilde{\tau}_{\mu_d} &= \tau_d\left(\sum\nolimits_{\mbox{\boldmath{$i$}}} z^d_{\mbox{\boldmath{$i$}}} + \beta_0\right), {\kern 4pt} \tilde{\mu}_{\mu_d} = \tilde{\tau}^{-1}_{\mu_d}\tau_d\left(\sum\nolimits_{\mbox{\boldmath{$i$}}} z^d_{\mbox{\boldmath{$i$}}}e_{\mbox{\boldmath{$i$}}} + \beta_0\mu_0\right).
\end{aligned}
\end{equation}

Similarly, we have the posterior over $\tau_d$ as
\begin{equation}\label{eq:eq38}
\begin{aligned}
p(\tau_d|-)  &\propto \prod\limits_{\mbox{\boldmath{$i$}}}\mathcal{N}(z_{\mbox{\boldmath{$i$}}}|\mu_d, \tau^{-1}_d)^{z^{d}_{\mbox{\boldmath{$i$}}}}\mathcal{N}(\mu_d|\mu_0, (\beta_0\tau_d)^{-1}){\rm{Ga}}(\tau_d|a_0,b_0).
\end{aligned}
\end{equation}
$\tau_{d}$ thus can be drawn from a Gamma distribution as
\begin{equation}\label{eq:eq39}
\begin{aligned}
\tau_{d} \sim {\rm{Ga}}(\tilde{a}_d, \tilde{b}_d),
\end{aligned}
\end{equation}
with parameters
\begin{equation}\label{eq:eq39-1}
\begin{aligned}
&\tilde{a}_d  = a_0 + \left(\sum\nolimits_{\mbox{\boldmath{$i$}}} z^d_{\mbox{\boldmath{$i$}}} + 1\right)/2,\\
&\tilde{b}_d  = b_0 +  \left[\sum\nolimits_{\mbox{\boldmath{$i$}}} z^d_{\mbox{\boldmath{$i$}}}e_{\mbox{\boldmath{$i$}}} + \beta_0\left(\mu_d - \mu_0\right)^2\right] / 2;
\end{aligned}
\end{equation}

\subsubsection{Sampler for $\mbox{\boldmath{$z$}}_{\mbox{\boldmath{$i$}}}$}
Similarly as previous, we have the posterior over $\mbox{\boldmath{$z$}}_{\mbox{\boldmath{$i$}}}$ as
\begin{equation}\label{eq:eq41}
\begin{aligned}
p(\mbox{\boldmath{$z$}}_{\mbox{\boldmath{$i$}}}|-) \propto {\rm{Multinomial}}(\mbox{\boldmath{$z$}}_{\mbox{\boldmath{$i$}}}|\mbox{\boldmath{$\pi$}}).
\end{aligned}
\end{equation}
$\mbox{\boldmath{$z$}}_{\mbox{\boldmath{$i$}}}$ thus can be drawn from a multinomial distribution as
\begin{equation}\label{eq:eq42}
\begin{aligned}
\mbox{\boldmath{$z$}}_{\mbox{\boldmath{$i$}}} \sim {\rm{Multinomial}}(\mbox{\boldmath{$z$}}_{\mbox{\boldmath{$i$}}}|\tilde{\mbox{\boldmath{$\pi$}}}),
\end{aligned}
\end{equation}
with $\tilde{\mbox{\boldmath{$\pi$}}} = (\tilde{\pi}_1,..., \tilde{\pi}_{D})$ and each entry $\tilde{\pi}_d$ is
\begin{equation}\label{eq:eq43}
\begin{aligned}
\tilde{\pi}_d = {\pi_d\mathcal{N}(e_{\mbox{\boldmath{$i$}}}|\mu_d, \tau^{-1}_d)}/{\sum\nolimits^D_{t = 1}\pi_{t}\mathcal{N}(e_{\mbox{\boldmath{$i$}}}|\mu_t, \tau^{-1}_t)}.
\end{aligned}
\end{equation}

\subsubsection{Sampler for hyperparameter $\mbox{\boldmath{$\pi$}}$}
Integrating messages from $\mbox{\boldmath{$z$}}_{\mbox{\boldmath{$i$}}}$ and ${\mbox{\boldmath{$\alpha$}}}_0$ in hyperprior, we have the posterior over $\mbox{\boldmath{$\pi$}}$ as
\begin{equation}\label{eq:eq44}
\begin{aligned}
p(\mbox{\boldmath{$\pi$}}|\sim) &\propto \prod\limits_{{\mbox{\boldmath{$i$}}}} {\rm{Multinomial}}(\mbox{\boldmath{$z$}}_{\mbox{\boldmath{$i$}}}|\mbox{\boldmath{$\pi$}}){\rm{Dir}}({\mbox{\boldmath{$\pi$}}}|{\mbox{\boldmath{$\alpha$}}}_0).
\end{aligned}
\end{equation}
Thus, $\mbox{\boldmath{$\pi$}}$ can be drawn from a Dirichlet distribution as
\begin{equation}\label{eq:eq45}
\begin{aligned}
\mbox{\boldmath{$\pi$}} \sim  {\rm{Dir}}({\mbox{\boldmath{$\pi$}}}|\tilde{\mbox{\boldmath{$\alpha$}}}),
\end{aligned}
\end{equation}
with $\tilde{\mbox{\boldmath{$\alpha$}}} = (\tilde{\alpha}_{1},..., \tilde{\alpha}_D)$ and each entry $\tilde{\alpha}_d$ is
\begin{equation}\label{eq:eq46}
\begin{aligned}
\tilde{\alpha}_d = \sum\nolimits_{{\mbox{\boldmath{$i$}}}}z^d_{\mbox{\boldmath{$i$}}} + \alpha_{0d}.
\end{aligned}
\end{equation}

\subsection{Algorithm, complexity and convergence}
With above samplers, the entire Gibbs sampling flow is summarized in Algorithm~\ref{alg:SSFHCS} where $N_b,N_s$ are the iteration number for burn-in and samples collection.
\begin{algorithm}
\caption{Data-adaptive tensor completion}
\label{alg:SSFHCS}
\KwIn{Observation $\mathcal{Y}_{\Omega}$, parameters $a_0,b_0, \mu_0, \beta_0, \mbox{\boldmath{$\alpha$}}_0, \tau_0$.}
{\bf{Initialization}}: $\mathcal{U}, {\mbox{\boldmath{$\lambda$}}}, {\mbox{\boldmath{$\tau$}}}, {\mbox{\boldmath{$\tau$}}}_e$ are initialized by ${\bf{1}}$ with proper size,  ${\mbox{\boldmath{$\mu$}}}, {\mbox{\boldmath{$\mu$}}}_e$ are initialized by $\bf{0}$, ${\mbox{\boldmath{$\pi$}}}=(D^{-1},...,D^{-1})^T$ and ${\mbox{\boldmath{$z$}}}_{\mbox{\boldmath{$i$}}} \sim {\rm{Multinomial}}(\mbox{\boldmath{$z$}}_{\mbox{\boldmath{$i$}}}|\mbox{\boldmath{$\pi$}})$.\\
\For {$t \leftarrow 1$ \textbf{to} $N_{b} + N_{s}$}
{
\textbf{Sample the low-rank structure:}\\
$1.$ Sample $\lambda_r$,  as Eq.~\eqref{eq:eq15-1};\\
$2.$ Sample $\gamma_r$ as Eq.~\eqref{eq:eq19};\\
$3.$ Sample $\kappa_r$ as Eq.~\eqref{eq:eq21};\\
$4.$ Sample $u^{(k)}_{ir}$ as Eq.~\eqref{eq:eq23};\\
$5.$ Sample $\mu^{(k)}$ as Eq.~\eqref{eq:eq27};\\
$6.$ Sample $\tau^{(k)}_{d}$ as Eq.~\eqref{eq:eq30};\\
\textbf{Sample the non-low-rank structure:}\\
$1.$ Sample $e_{\mbox{\boldmath{$i$}}}$ as Eq.~\eqref{eq:eq33};\\
$2.$ Sample $\mu_d$as Eq.~\eqref{eq:eq36};\\
$3.$ Sample $\tau_{d}$ as Eq.~\eqref{eq:eq39};\\
$4.$ Sample $\mbox{\boldmath{$z$}}_{\mbox{\boldmath{$i$}}}$ as Eq.~\eqref{eq:eq42};\\
$5.$ Sample $\mbox{\boldmath{$\pi$}}$ as Eq.~\eqref{eq:eq45};\\
\If{$t > N_{b}$}{ Collect samples for ${\mbox{\boldmath{$\lambda$}}}, \mathcal{U}, \mathcal{E}$;}
}
$4.$ Get the sample mean $\bar{\mbox{\boldmath{$\lambda$}}}, \bar{\mathcal{U}}, \bar{\mathcal{E}}$ across $N_{s}$ samples;\\
$5.$ Complete the latent tensor $\hat{\mathcal{L}}$ with $\bar{\mbox{\boldmath{$\lambda$}}}, \bar{\mathcal{U}}, \bar{\mathcal{E}}$ as  Eqs.~\eqref{eq:eq7}~\eqref{eq:eq1}.
\end{algorithm}

In each iteration of Gibbs sampling, we draw samples for all unknown variables with corresponding samplers. It can be found that per-iteration sampling complexity is dominated by $O(RKN)$, which is only linear in the number $N$ of entries in the latent tensor $\mathcal{L}$. Moreover, it has been shown that MMSE often converges well~\cite{schmidt2010generative} and more evidence will be provided in Section~\ref{subsection:syntetic}.


\section{Experiments}
In this study, we compreshensively evaluate the proposed model with extensive experiments on synthetic and real visual tensor data. Specifically, experiments on synthetic data are conducted to validate the following five aspects of the proposed model: i) automatic tensor rank determination; ii) the ability of fitting a wide range of non-low-rank structures $\mathcal{E}$s; iii) the effectiveness of the data-adaptive tensor model; iv) convergence; v) recovery performance. In addition, other three practical applications including image inpainting, video completion and facial image synthesis, are conducted to further evaluate the recovery performance of the proposed model. To this end, the proposed model is compared with $8$ state-of-the-art low rank tensor completion methods, including FaLRTC~\cite{liu2013tensor}, HaLRTC~\cite{liu2013tensor}, RPTC$_{\rm{scad}}$~\cite{zhao2015novel}, TMac~\cite{xu2013parallel}, STDC~\cite{chen2014simultaneous}, t-SVD~\cite{zhang2014novel}, FBCP~\cite{zhao2015bayesian} and BRTF~\cite{zhao2016bayesian}. It is noticeable that only BRTF considers a sparse non-low-rank structure $\mathcal{E}$, while the others adopt a zero $\mathcal{E}$.

In the proposed model\footnote{Code will be made available at publication time.}, we set all hyperparameters in a non-informative manner to reduce their influence on the posterior as much as possible~\cite{bishop2006pattern,zhao2014robust}. Throughout our experiments, $\mu_0 = 0$, and $\alpha_{01}, ..., \alpha_{0D}$, $\beta_0$, $a_0$ and $b_0$ are $10^{-6}$. The number $D$ of Gaussians in the mixture can be automatically determined by a simple tuning scheme in~\cite{zhao2014robust} where the maximum $D$ is set $6$. The initialized tensor rank $R$ is set $20$ and $100$ for synthetic data and real visual data, respectively. Other competitors are implemented by the codes of authors with tuned parameters for best performance. For simplicity, we term the proposed model 'Ours' in the following tables and figures.

\begin{figure}
\setlength{\abovecaptionskip}{0pt}
\begin{center}
\subfigure[Rank = 5]{\includegraphics[height=1.55in,width=1.7in,angle=0]{./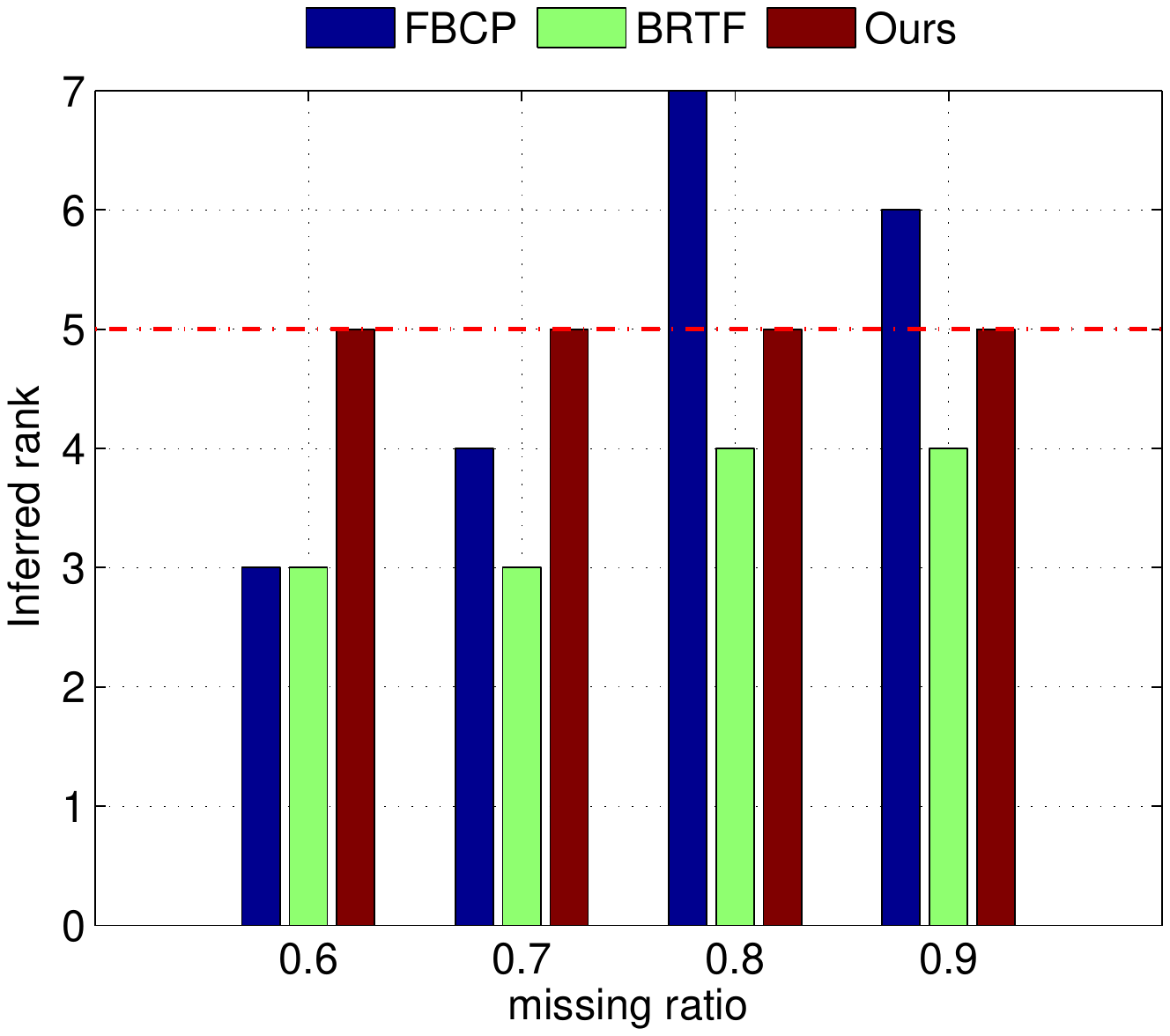}}
\hspace{-0.15cm}
\subfigure[Rank = 10]{\includegraphics[height=1.55in,width=1.7in,angle=0]{./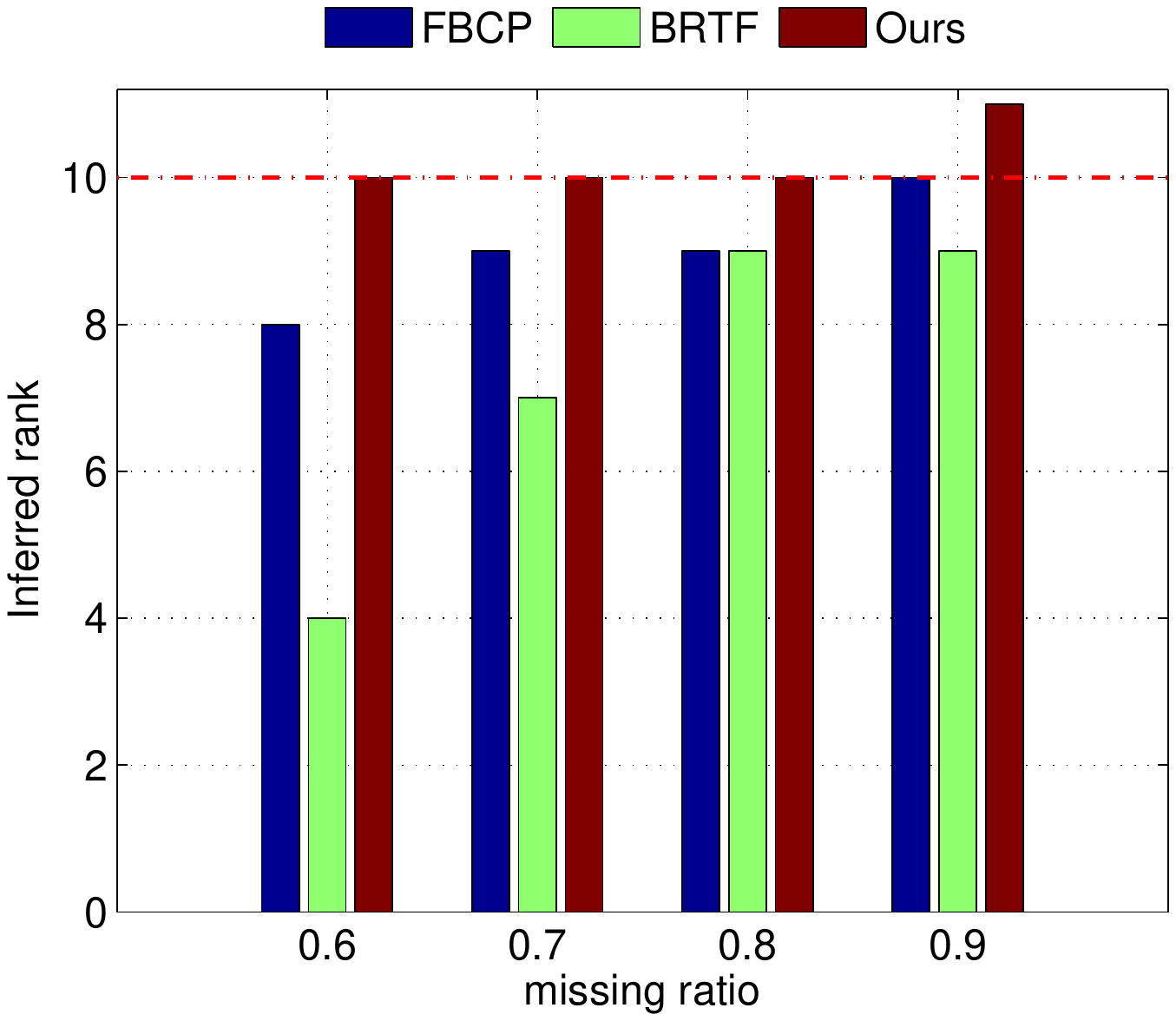}}
\end{center}
\vspace{-0.2cm}
\caption{Tensor rank determined by different methods on the synthetic data. The vertical color bars denote the inferred tensor rank, while the horizontal red dash lines indicate the true tensor rank.}
\vspace{-0.2cm}
\label{fig:rank}
\end{figure}

\begin{table*}\footnotesize
\caption{Quantitative comparison between parameters in true PDFs of $\mathcal{E}$ (denoted 'True') and that in estimated ones (denote 'Est.').}
\vspace{-0.2cm}
\renewcommand{\arraystretch}{1.1}
\begin{center}
\begin{tabu} to 0.9\textwidth {X[c]|X[c]|X[c]|X[c]|X[c]|X[c]|X[c]|X[c]|X[c]|X[c]|X[c]|X[c]}
\hline
\multicolumn{2}{c|}{\multirow{2}{*}{rank = $5$}} & Zero & \multicolumn{2}{c|}{Sparse} & Gaussian & \multicolumn{3}{c|}{Mixture (zero mean)} & \multicolumn{3}{c}{Mixture (non-zero mean)}\\
\cline{3-12}
 \multicolumn{2}{c|}{} & Comp.$1$ & Comp.$1$ & Comp.$2$ & Comp.$1$ & Comp.$1$ & Comp.$2$ & Comp.$3$ & Comp.$1$ & Comp.$2$ & Comp.$3$\\
\hline
\multirow{2}{*}{$\pi_d$} & True & - & $0.1$ & $0.9$ & - & $0.1$ & $0.3$ & $0.6$ & $0.1$ & $0.2$ & $0.7$\\
\cline{2-12}
& Est. & - & $0.102$ & $0.898$ & - & $0.150$ & $0.282$ & $0.568$ & $0.065$ & $0.250$ & $0.685$\\
\hline
\multirow{2}{*}{$\mu_d$} & True & $0$ & $0$ & $0$ & $0.1$ & $0$ & $0$ & $0$ & $1.5$ & $0.1$ & -$0.1$\\
\cline{2-12}
& Est. & -$2$e-$4$ & -$0.015$ & $4$e-$4$ & $0.101$ & -$0.017$ & -$0.012$ & $1$e-$4$ & $2.429$ & $0.050$ & -$0.1$\\
\hline
\multirow{2}{*}{$\tau_d$} & True & $1$e$12$ & $0.75$ & $1$e$12$ & $100$ & $0.75$ & $10$ & $200$ & $0.48$ & $10$ & $300$\\
\cline{2-12}
& Est. & $6.64$e$4$ & $0.7081$ & $4.13$e$5$ & $100.1$ & $1.072$ & $13.582$ & $220.8$ & $1.050$ & $7.086$ & $314.6$\\
\hline
\hline
\multicolumn{2}{c|}{\multirow{2}{*}{rank = $10$}} & Zero & \multicolumn{2}{c|}{Sparse} & Gaussian & \multicolumn{3}{c|}{Mixture (zero mean)} & \multicolumn{3}{c}{Mixture (non-zero mean)}\\
\cline{3-12}
 \multicolumn{2}{c|}{} & Comp.$1$ & Comp.$1$ & Comp.$2$ & Comp.$1$ & Comp.$1$ & Comp.$2$ & Comp.$3$ & Comp.$1$ & Comp.$2$ & Comp.$3$\\
\hline
\multirow{2}{*}{$\pi_d$} & True & - & $0.1$ & $0.9$ & - & $0.1$ & $0.3$ & $0.6$ & $0.1$ & $0.2$ & $0.7$\\
\cline{2-12}
& Est. & - & $0.101$ & $0.899$ & - & $0.149$ & $0.281$ & $0.570$ & $0.062$ & $0.239$ & $0.699$\\
\hline
\multirow{2}{*}{$\mu_d$} & True & $0$ & $0$ & $0$ & $0.1$ & $0$ & $0$ & $0$ & $1.5$ & $0.1$ & -$0.1$\\
\cline{2-12}
& Est. & -$2$e-$4$ & $0.032$ & $0.000$ & $0.100$ & $0.020$ & $4$e-$4$ & -$6$e-$4$ & $2.611$ & $0.094$ & -$0.100$\\
\hline
\multirow{2}{*}{$\tau_d$} & True & $1$e$12$ & $0.75$ & $1$e$12$ & $100$ & $0.75$ & $10$ & $200$ & $0.48$ & $10$ & $300$\\
\cline{2-12}
& Est. & $6.16$e$5$ & $0.728$ & $9.12$e$4$ & $98.76$ & $1.005$ & $15.70$ & $220.6$ & $1.391$ & $6.392$ & $313.9$\\
\hline
\end{tabu}
\end{center}
\label{table:Fittness}
\vspace{-0.2cm}
\end{table*}

\begin{figure*}
\setlength{\abovecaptionskip}{0pt}
\begin{center}
\includegraphics[height=0.65in,width=1.37in,angle=0]{./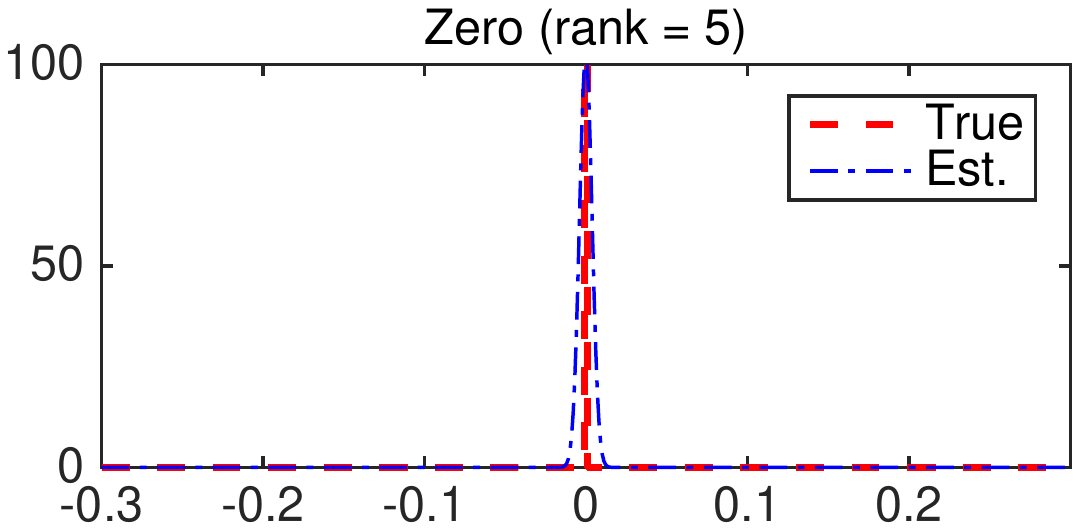}
\hspace{-0.15cm}
\includegraphics[height=0.65in,width=1.37in,angle=0]{./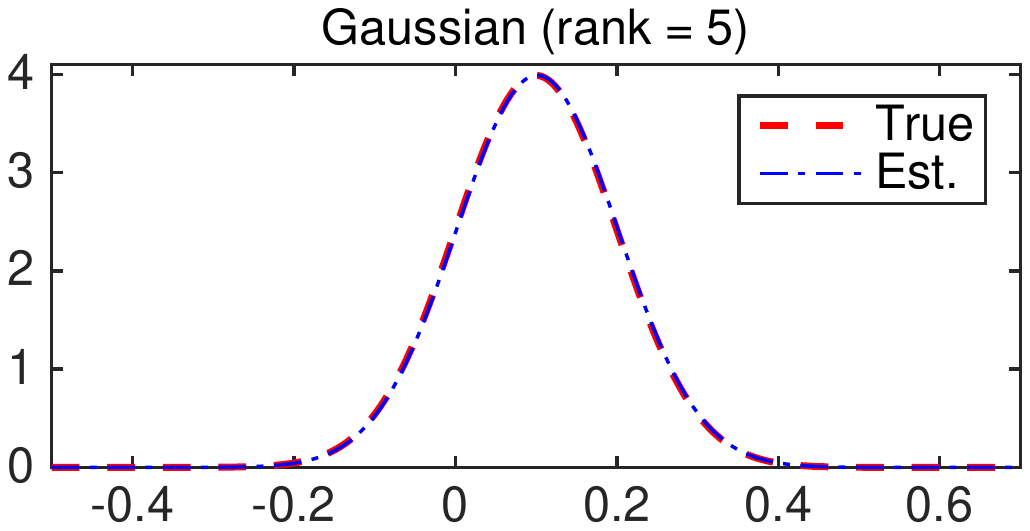}
\hspace{-0.15cm}
\includegraphics[height=0.65in,width=1.37in,angle=0]{./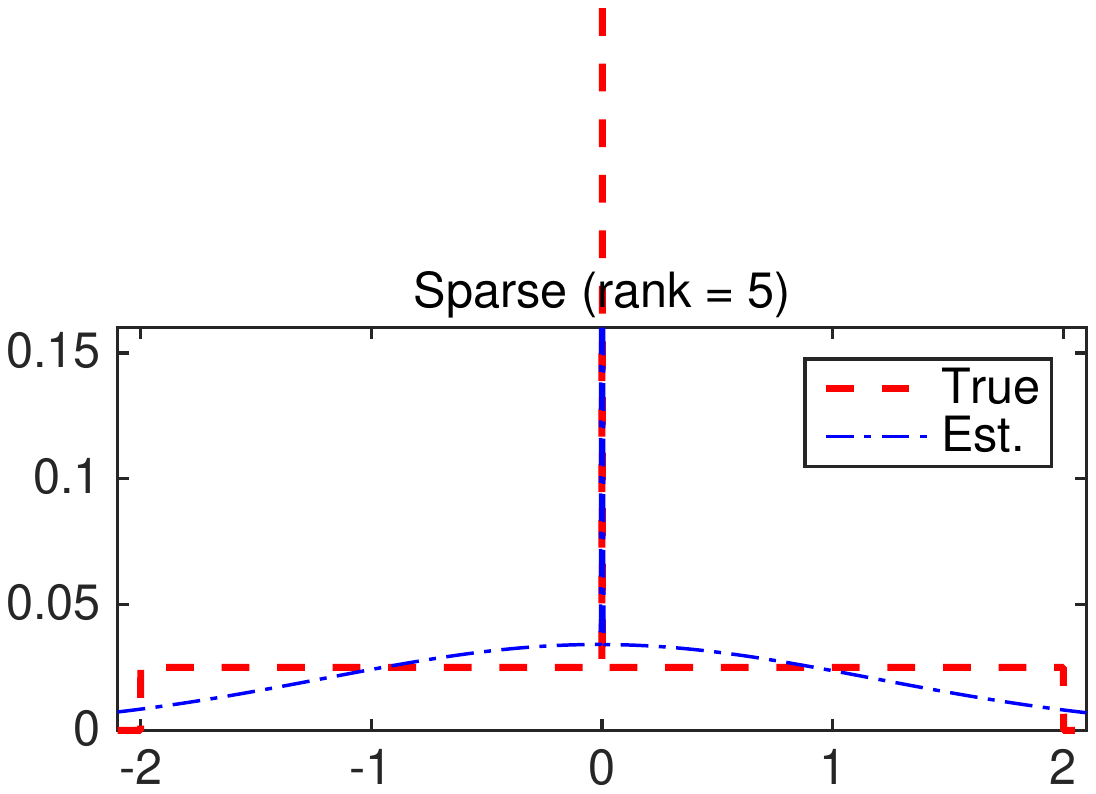}
\hspace{-0.15cm}
\includegraphics[height=0.65in,width=1.37in,angle=0]{./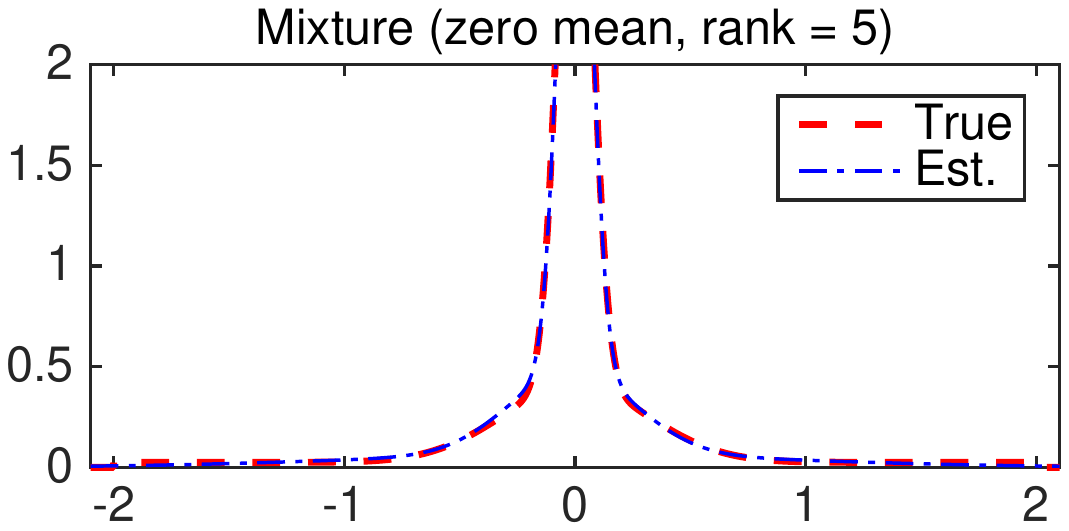}
\hspace{-0.15cm}
\includegraphics[height=0.65in,width=1.37in,angle=0]{./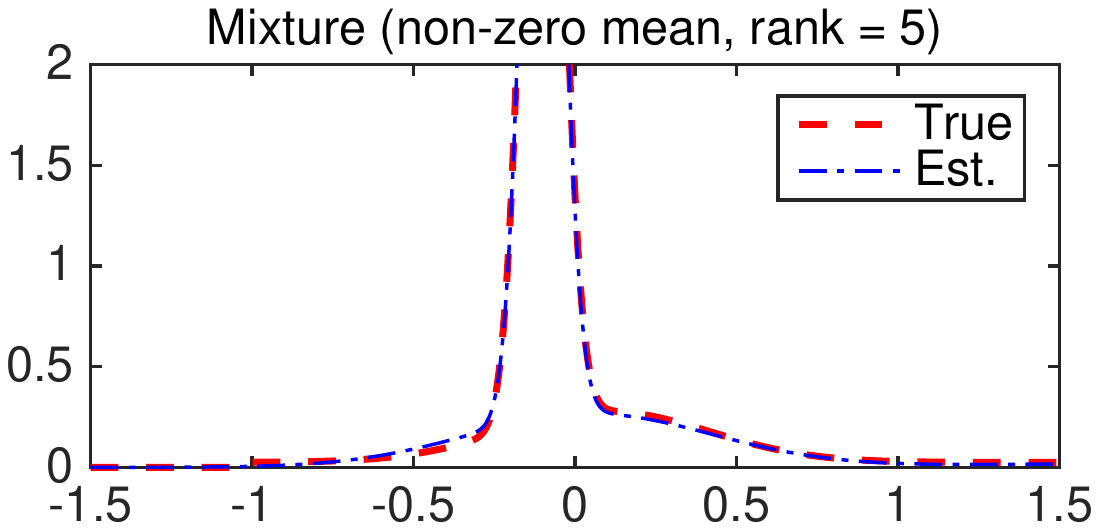}\\
\includegraphics[height=0.65in,width=1.37in,angle=0]{./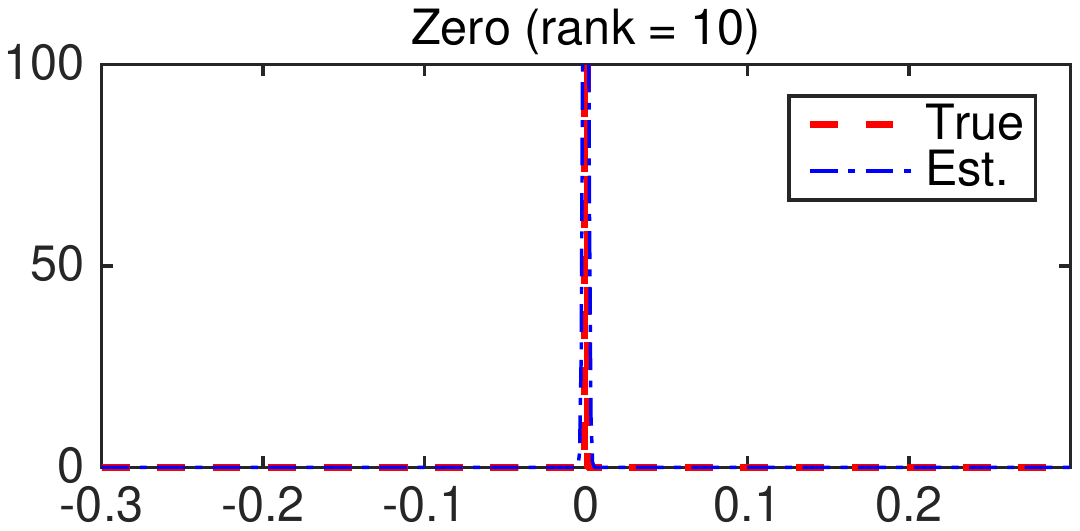}
\hspace{-0.15cm}
\includegraphics[height=0.65in,width=1.37in,angle=0]{./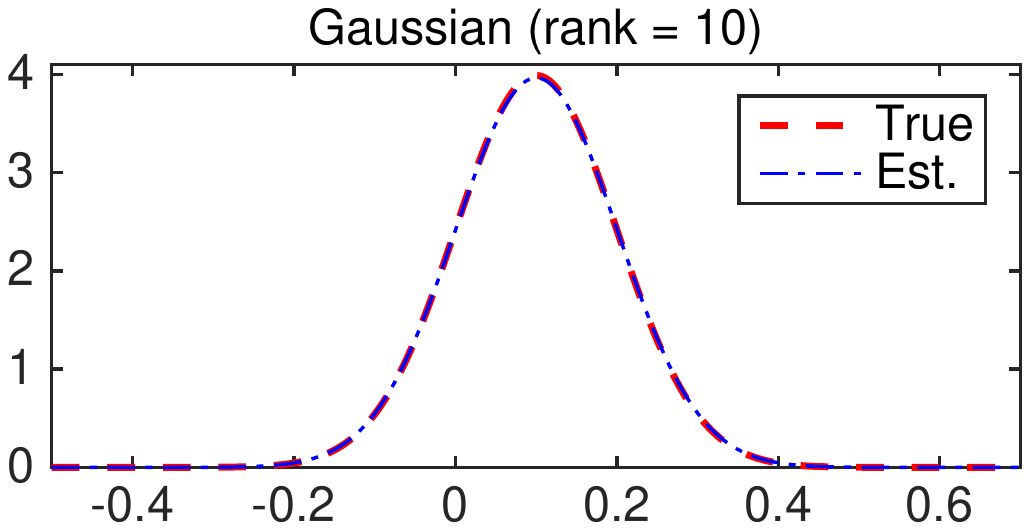}
\hspace{-0.15cm}
\includegraphics[height=0.65in,width=1.37in,angle=0]{./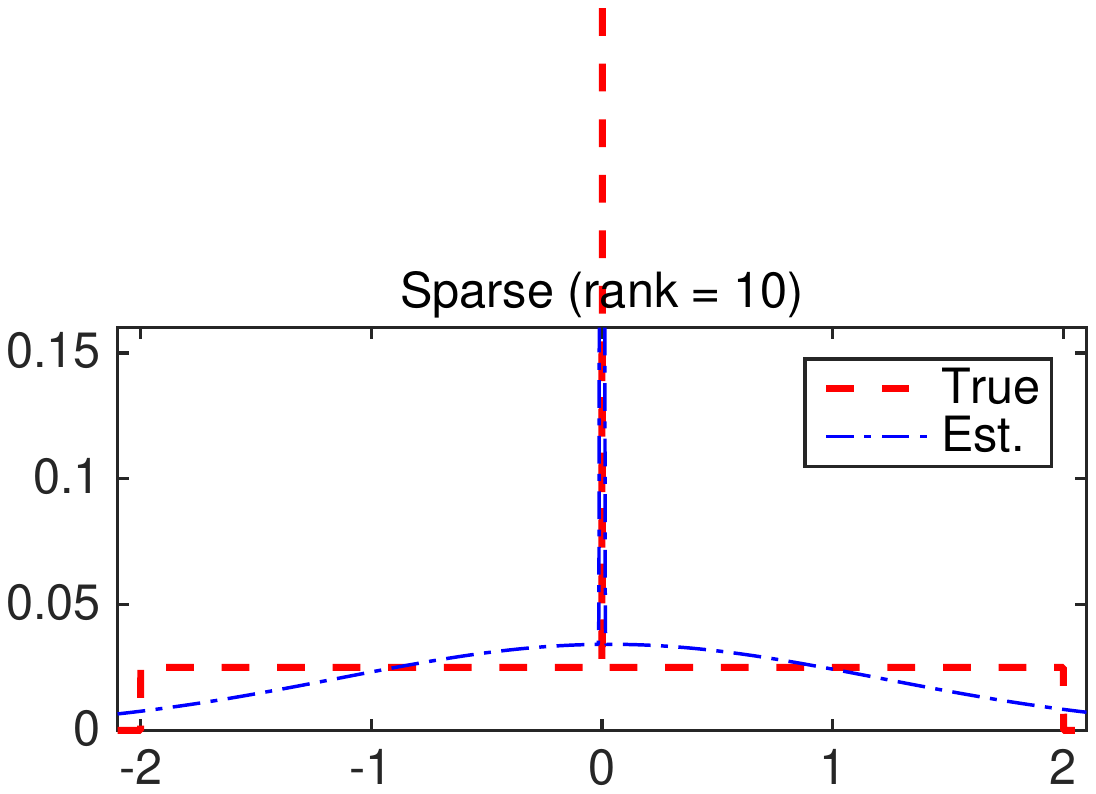}
\hspace{-0.15cm}
\includegraphics[height=0.65in,width=1.37in,angle=0]{./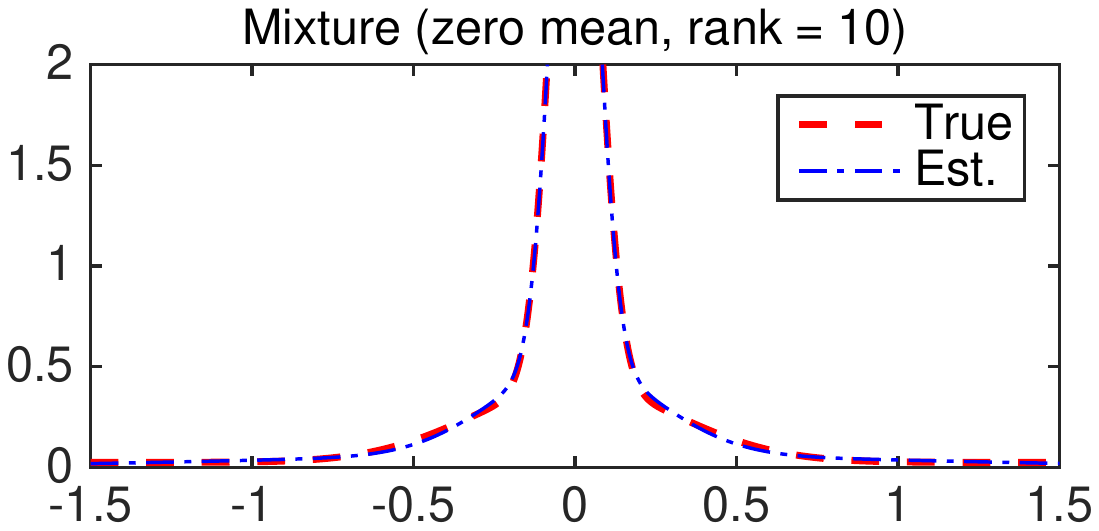}
\hspace{-0.15cm}
\includegraphics[height=0.65in,width=1.37in,angle=0]{./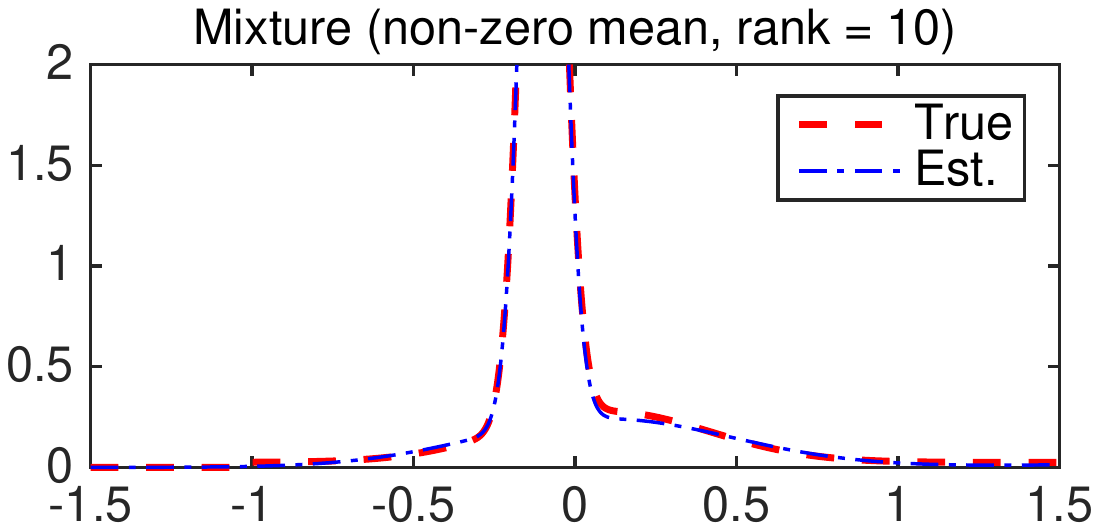}
\end{center}
\vspace{-0.1cm}
\caption{Visual comparison between true PDFs of $\mathcal{E}$ (denoted 'True') and those estimated ones (denote 'Est.').}
\vspace{-0.1cm}
\label{fig:mtpi}
\end{figure*}

\subsection{Tensor completion on synthetic data}\label{subsection:syntetic}
We generate the latent tensor $\mathcal{L}$ of size $30 \times 30 \times 30$ based on Eq.~\eqref{eq:eq1}. The low rank structure $\mathcal{X}$ of ${\rm{rank}}(\mathcal{X}) = 5$ is generated by the CP factorization in Eq.~\eqref{eq:eq7} with factor matrices $\mbox{\boldmath{$U$}}^{(k)} \in {\mathbb{R}^{30 \times 5}}$s and weights vector $\mbox{\boldmath{$\lambda$}} \in{\mathbb{R}^{5\times 1}}$. Entries in $\mbox{\boldmath{$U$}}^{(k)}$ are independently sampled from $\mathcal{N}(0,1)$, while weights in $\mbox{\boldmath{$\lambda$}}$ are uniformly sampled in the range of $[$-$2,2]$. To simulate a wide range of residual components, $5$ types of $\mathcal{E}$ are generated as follows. (1) Zero non-low-rank structure with all zero entries. (2) Gaussian non-low-rank structure with all entries sampled from $\mathcal{N}(0, 0.01)$. (3) Sparse non-low-rank structure with $10\%$ entries sampled uniformly from the range of $[$-$2,2]$. (4) Mixture non-low-rank structure (zero mean) with $10\%$ entries sampled uniformly in the range of $[$-$2,2]$, $30\%$ entries sampled from $\mathcal{N}(0,0.1)$ and $60\%$ entries sampled from $\mathcal{N}(0,0.005)$. (5) Mixture non-low-rank structure (non-zero mean) with $10\%$ entries sampled uniformly in the range of $[$-$1,4]$, $20\%$ entries sampled from $\mathcal{N}(0.1,0.1)$ and $70\%$ entries sampled from $\mathcal{N}($-$0.1,1/300)$. Another $5$ simulations for $\mathcal{L}$ are generated in a similar manner except for setting ${\rm{rank}}(\mathcal{X}) = 10$. To obtain the incomplete observation $\mathcal{Y}_{\Omega}$, we first add noise $\mathcal{M}$ with entries sampled from $\mathcal{N}(0,0.001)$ on $\mathcal{L}$. Then, a certain percentage (i.e., missing ratio) of entries are randomly selected from the noisy $\mathcal{L}$ and set zeros. We choose missing ratio from $70\%$ to $ 90\%$.

\begin{table}\footnotesize
\caption{RRE on synthetic tensor with different missing ratios.}
\renewcommand{\arraystretch}{1.1}
\begin{center}
\begin{tabu} to 0.48\textwidth {X[1.8,l]|X[0.9,c]|X[c]|X[0.9,c]|X[0.9,c]|X[c]|X[0.9,c]}
\hline
 & \multicolumn{3}{c|}{rank = $5$} & \multicolumn{3}{c}{rank = $10$}\\
\hline
Method & $70\%$ & $80\%$ & $90\%$ & $70\%$ & $80\%$ & $90\%$\\
\hline
Ours\_{nx}($6$) & - & - & - & - & - & -\\
Ours\_{nx}($10$) & - & - & - & - & - & -\\
Ours\_{nx}($20$) & - & - & - & - & - & -\\
Ours\_{nx}($50$) & - & - & - & - & - & -\\
Ours\_{ne} & 0.2234 & 0.2242 & 0.2426 & 0.2210 & 0.2372 & 0.2702\\
Ours & \textbf{0.1829} & \textbf{0.1960} & \textbf{0.2038} & \textbf{0.1796} & \textbf{0.1914} & \textbf{0.2055}\\
\hline
\end{tabu}
\end{center}
$^*$'-' denotes recovery failure.
\label{table:effectivenss}
\end{table}

\begin{figure}
\setlength{\abovecaptionskip}{0pt}
\begin{center}
\subfigure[\vspace{-0.3cm}Rank = 5]{\includegraphics[height=1.4in,width=1.7in,angle=0]{./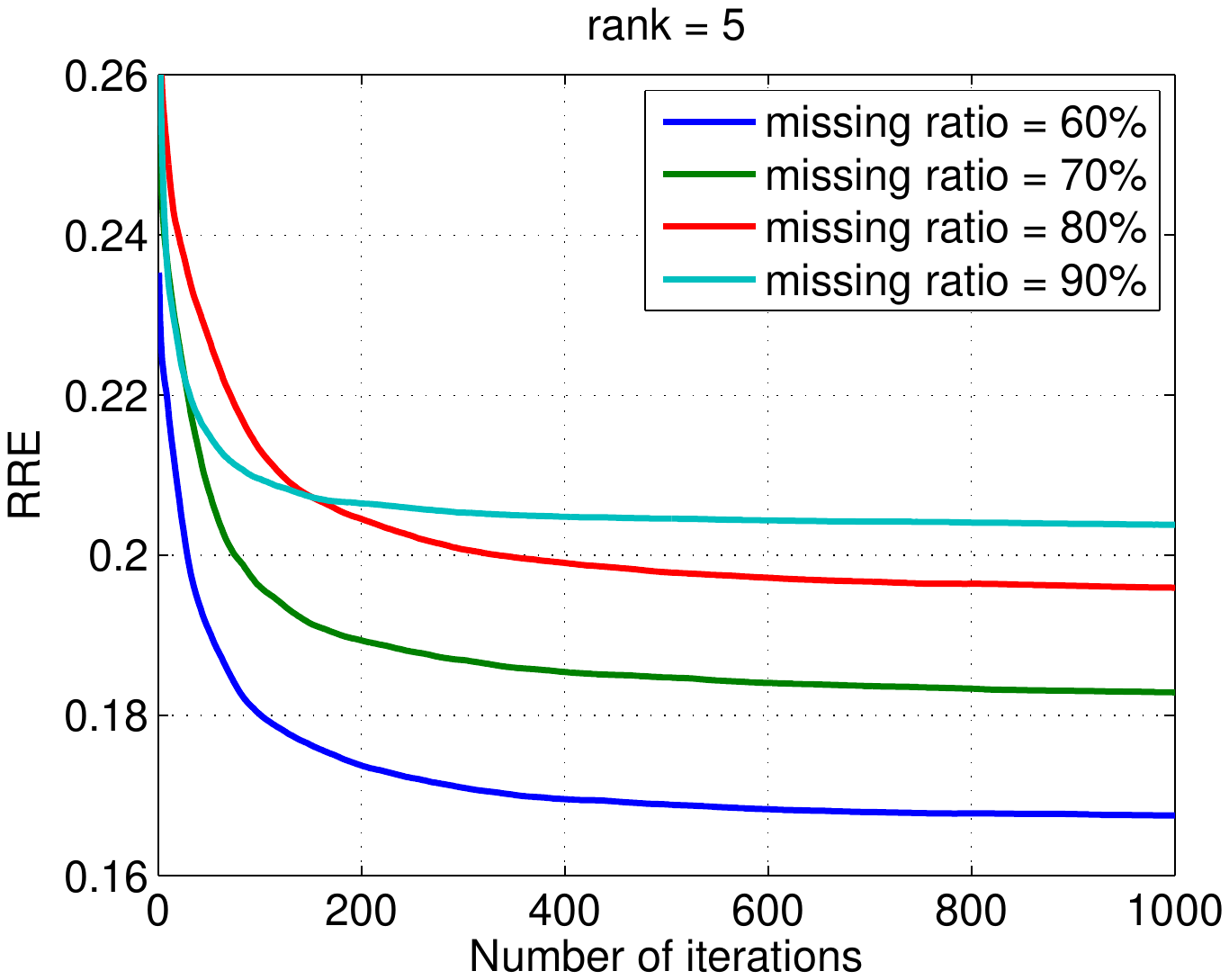}}
\hspace{-0.15cm}
\subfigure[\vspace{-0.3cm}Rank = 10]{\includegraphics[height=1.4in,width=1.7in,angle=0]{./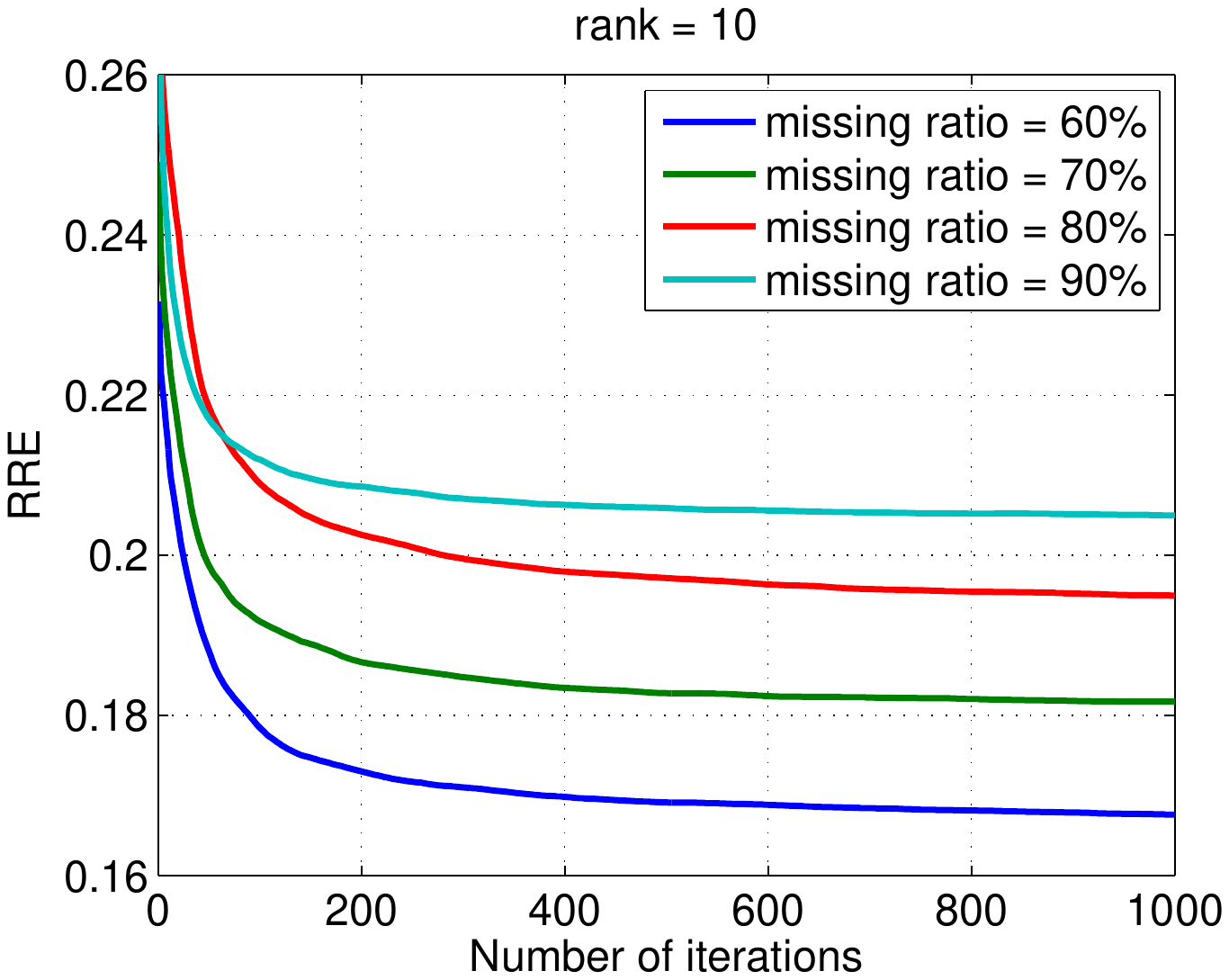}}
\end{center}
\caption{RRE curves versus the iteration number of Gibbs sampling on two synthetic tensors with different missing ratios.}
\label{fig:convergence}
\end{figure}

\subsubsection{Tensor rank determination} 
We evaluate the proposed model on $2$ different synthetic tensors which consist of two low rank $\mathcal{X}$s of ${\rm{rank}}(\mathcal{X}) = 5,10$ but a same mixture non-low-rank structure $\mathcal{E}$ with non-zero mean. Given $\mathcal{Y}_{\Omega}$ with different missing ratios, we infer the sparse weights vector $\mbox{\boldmath{$\lambda$}}$, and give the inferred tensor rank as the number of weights $|\lambda_r| > 10$e-$5$. Since most competitors cannot infer the tensor rank, only the inferred ranks from the proposed model, FBCP and BRTF are shown in Figure~\ref{fig:rank}. It can be seen that when ${\rm{rank}}(\mathcal{X}) = 5$, the proposed model infers the tensor rank exactly even with $90\%$ missing entries, while both FBCP and BRTF fail to estimate the real tensor rank accurately. When ${\rm{rank}}(\mathcal{X}) = 10$, the proposed model only misses the true rank when missing ratio is $90\%$, while FBCP and BRTF miss the true one in most cases. This is mainly owing to the newly defined tensor rank as well as the sparsity induced low-rank model adopted. In addition, modelling the non-low-rank structure also benefits separating the low-rank structure exactly and thus determining the accurate tensor rank. Therefore, we can conclude that the proposed model can accurately estimate the tensor rank, even if the tensor data contains complex non-low-rank structure with numerous missing entries. 

\begin{table}\footnotesize
\caption{RRE on synthetic tensors with different non-low-rank structures and missing ratios.}
\renewcommand{\arraystretch}{1.1}
\vspace{-0.5cm}
\begin{center}
\begin{tabu} to 0.5\textwidth {X[2,l]|X[0.9,c]|X[c]|X[0.9,c]|X[0.9,c]|X[c]|X[0.9,c]}
\hline
\multicolumn{7}{c}{Zero non-low-rank structure $\mathcal{E}$}\\
\hline
 & \multicolumn{3}{c|}{rank = $5$} & \multicolumn{3}{c}{rank = $10$}\\
\hline
Method & $70\%$ & $80\%$ & $90\%$ & $70\%$ & $80\%$ & $90\%$\\
\hline
FaLRTC~\cite{liu2013tensor} & 0.0248 & 0.1990 & 0.6942 & 0.4386 & 0.7057 & 0.9105\\
HaLRTC~\cite{liu2013tensor} & 0.0218 & 0.1974 & 0.6934 & 0.4383 & 0.7056 & 0.9105\\
RPTC$_{\rm{scad}}$~\cite{zhao2015novel} & 0.0063 & 0.1948 & 0.8367 & 0.5245 & 0.7485 & 0.9168\\
TMac~\cite{xu2013parallel} & 0.8354 & 0.8917 & 0.9498 & 0.8352 & 0.8920 & 0.9479\\
STDC~\cite{chen2014simultaneous} & 0.1694 & 0.3685 & 0.8714 & 0.5585 & 0.8377 & 0.9965\\
t-SVD~\cite{zhang2014novel} & 0.3874 & 0.6242 & 0.8687 & 0.6224 & 0.7924 & 0.9498\\
BCPF~\cite{zhao2015bayesian} & \textbf{0.0027} & \textbf{0.0032} & 0.1589 & 0.0404 & 0.0417 & 0.3928\\
BRTF~\cite{zhao2016bayesian}& 0.1279 & 0.1379 & 0.1509 & 0.0352 & 0.1393 & 0.4719\\
Ours & 0.0086 & 0.0041 & \textbf{0.0047} & \textbf{0.0036} & \textbf{0.0047} & \textbf{0.0074}\\
\hline
\multicolumn{7}{c}{Sparse non-low-rank structure $\mathcal{E}$}\\
\hline
 & \multicolumn{3}{c|}{rank = $5$} & \multicolumn{3}{c}{rank = $10$}\\
\hline
Method & $70\%$ & $80\%$ & $90\%$ & $70\%$ & $80\%$ & $90\%$\\
\hline
FaLRTC~\cite{liu2013tensor}& 0.0750 & 0.2170 & 0.6864 & 0.4944 & 0.7159 & 0.9067\\
HaLRTC~\cite{liu2013tensor}& 0.0754 & 0.2179 & 0.6857 & 0.4938 & 0.7154 & 0.9068\\
RPTC$_{\rm{scad}}$~\cite{zhao2015novel}& 0.0297 & 0.2025 & 0.8258 & 0.5564 & 0.7545 & 0.9143\\
TMac~\cite{xu2013parallel}& 0.8355 & 0.8916 & 0.9483 & 0.8450 & 0.8935 & 0.9470\\
STDC~\cite{chen2014simultaneous}& 0.1624 & 0.3947 & 0.8797 & 0.5682 & 0.8330 & -\\
t-SVD~\cite{zhang2014novel}& 0.4044 & 0.6117 & 0.8734 & 0.6439 & 0.7907 & 0.9526\\
FBCP~\cite{zhao2015bayesian}& 0.0462 & 0.0335 & 0.1663 & 0.0612 & 0.0634 & 0.3104\\
BRTF~\cite{zhao2016bayesian}& 0.0889 & 0.1437 & 0.1587 & 0.1338 & 0.1449 & 0.4732\\
Ours& \textbf{0.0285} & \textbf{0.0307} & \textbf{0.0351} & \textbf{0.0292} & \textbf{0.0320} & \textbf{0.0389}\\
\hline
\multicolumn{7}{c}{Mixture non-low-rank structure $\mathcal{E}$ (on-zero mean)}\\
\hline
 & \multicolumn{3}{c|}{rank = $5$} & \multicolumn{3}{c}{rank = $10$}\\
\hline
Method & $70\%$ & $80\%$ & $90\%$ & $70\%$ & $80\%$ & $90\%$\\
\hline
FaLRTC~\cite{liu2013tensor} & 0.2902 & 0.4363 & 0.7834 & 0.5494 & 0.7413 & 0.9240\\
HaLRTC~\cite{liu2013tensor} & 0.2901 & 0.4362 & 0.7842 & 0.5495 & 0.7415 & 0.9240\\
RPTC$_{\rm{scad}}$~\cite{zhao2015novel} & 0.2903 & 0.4906 & 0.8711 & 0.5907 & 0.7704 & 0.9288\\
TMac~\cite{xu2013parallel} & 0.8340 & 0.9006 & 0.9522 & 0.8374 & 0.8923 & 0.9511\\
STDC~\cite{chen2014simultaneous} & 0.3233 & 0.5527 & 0.8995 & 0.6512 & 0.8688 & 0.9423\\
t-SVD~\cite{zhang2014novel} & 0.5158 & 0.7116 & 0.9245 & 0.6759 & 0.8110 & 0.9754\\
FBCP~\cite{zhao2015bayesian} & 0.2199 & 0.2416 & 0.2746 & 0.2247 & 0.2316 & 0.2996\\
BRTF~\cite{zhao2016bayesian} & 0.1976 & 0.2326 & 0.2547 & 0.1837 & 0.2340 & 0.5642\\
Ours & \textbf{0.1829} & \textbf{0.1960} & \textbf{0.2038} & \textbf{0.1796} & \textbf{0.1914} & \textbf{0.2055}\\
\hline
\end{tabu}
\end{center}
$^*$'-' denotes recovery failure.
\label{table:performance}
\end{table}

\subsubsection{Ability of fitting a wide range of $\mathcal{E}$s}
Given $\mathcal{Y}_{\Omega}$ with $70\%$ missing entries, the proposed model can estimate the probability density function (PDF) (i.e., distribution) of $\mathcal{E}$ by inferring the posterior mean of parameters $\mbox{\boldmath{$\mu$}}_e$, $\mbox{\boldmath{$\tau$}}_e$ and $\mbox{\boldmath{$\pi$}}$ in tensor completion. The quantitative comparison between the estimated parameters in $5$ different PDFs of $\mathcal{E}$ and the corresponding ground truth are given in Table~\ref{table:Fittness}. It can be seen that the proposed model produces accurate estimations to these parameters, viz., the proposed model is able to well fit $\mathcal{E}$ with different PDFs. To make this point more clear, we plot the estimated PDFs of $\mathcal{E}$ and the corresponding ground truth in Figure~\ref{fig:mtpi}. We can find that the estimated PDFs comply well with the ground truth, especially in the cases of complicated mixture $\mathcal{E}$. Moreover, the proposed model performs stably with different low rank structures (e.g., ${\rm{rank}}(\mathcal{X}) = 5, 10$ in Table~\ref{table:Fittness} and Figure~\ref{fig:mtpi}) in terms of fitting $\mathcal{E}$. Therefore, we can conclude that the proposed model is able to fit a wide range of $\mathcal{E}$, which is consistent with the theoretical analysis in Remark~\ref{remark:remark1}.

\subsubsection{Effectiveness of data-adaptive tensor model}\label{subsubsec: Effect}
In the proposed data-adaptive tensor model, we model the low-rank structure and the complex non-low-rank structure separately. In this part, we concentrate on validating the effectiveness of each part in the model. To this end, we compare the proposed model with its two variants, namely 'Ous\_{nx}' and 'Ours\_{ne}'. In 'Ours\_{nx}', we remove the low-rank structure $\mathcal{X}$ and model the whole tensor with mixture of Gaussians, while 'Ours\_{ne}' sets the no-low-rank structure $\mathcal{E} = 0$. In the experiments, we employ these three methods to recover two synthetic tensors with different missing ratios, which consists of two respective $\mathcal{X}$s of ${\rm{rank}}(\mathcal{X}) = 5,10$ and a non-zero mean mixture $\mathcal{E}$. Since 'Ours\_{nx}' only utilizes the mixture of Gaussians to model the whole tensor, different $D$ (i.e., the number of Gaussian components) are adopted to well fit the complex tensor structure. Their recovery performance is measured by the relative reconstruction error (RRE), i.e., $\|\mathcal{L} - \hat{\mathcal{L}}\|_F / \|\mathcal{L}\|_F$, where $\hat{\mathcal{L}}$ is the estimation of the true $\mathcal{L}$. The comparison results is presented in Table~\ref{table:effectivenss}.

Without modelling the non-low-rank structure, the performance of 'Ours\_{ne}' drops obviously compared with the proposed model. This demonstrates that modelling the non-low-rank structure separately can benefit performance improvement. In addition, ‘Ours\_{nx}’ fails to recover the tensor in most cases. The reason is intuitive. To solve the ill-posed tensor completion problem, we have to well represent the low-rank structure as regularization. However, mixture of Gaussians is too flexible to regularize the ill-posed problem and thus causes failure completion, especially when the tensor is highly structured (of lower rank = $5$), it fails in all cases. Hence, we turn to model the low-rank structure by exploiting the sparsity in the CP factorization, which is crucial for tensor completion. Therefore, we can conclude that the proposed data-adaptive tensor model which exploits the low-rank-structure and the non-low-rank structure separately, is effective in tensor completion.

\subsubsection{Validation of convergence}
It has been shown that samples generated in Gibbs sampling coverage to a stable probabilistic distribution~\cite{bishop2006pattern}. To illustrate this point more intuitively, with two same synthetic tensors adopted in Section~\ref{subsubsec: Effect}, we plot the RRE curves of the proposed model versus the Gibbs sampling iteration number with different missing ratios in Figure~\ref{fig:convergence}. We can find that the proposed model converges in hundreds of iterations in different cases.

\subsubsection{Evaluation on recovery performance}
We compare the proposed model with $8$ competitors in tensor completion. To comprehensively compare them, we conduct experiments on tensors consisting of a low rank $\mathbf{X}$ with rank = $5,10$ and i) a zero non-low-rank $\mathcal{E}$; ii) a sparse non-low-rank $\mathcal{E}$; iii) a mixture non-low-rank $\mathcal{E}$ (non-zero-mean). Their recovery results are reported in Table~\ref{table:performance}. We can find that the proposed method surpass the other competitors in most cases. Moreover, the superiority of the proposed method is enhanced with the increasing missing ratio or a simpler $\mathcal{E}$. For example, when $\mathcal{E}$ comes from a mixture distribution and the missing ratio is $90\%$, the improvement over the second best, i.e., FBCP, is up to $0.09$ in RRE when tensor rank is $10$. When $\mathcal{E}$ is set zero without any other changes, the improvement is up to $0.38$. Besides, the proposed method performs more stably than others with the changing missing ratios. For example, when $\mathcal{E} = 0$ and the tensor rank is $10$, the RRE of the proposed model lies below $0.01$. In contrast, the RRE of FaLRTC increases from $0.4$ to $0.9$. Based on those observed results, we can conclude that the proposed model outperforms the other competitors on the synthetic tensor data.


\begin{figure}
\setlength{\abovecaptionskip}{0pt}
\begin{center}
\includegraphics[height=0.64in,width=0.64in,angle=0]{./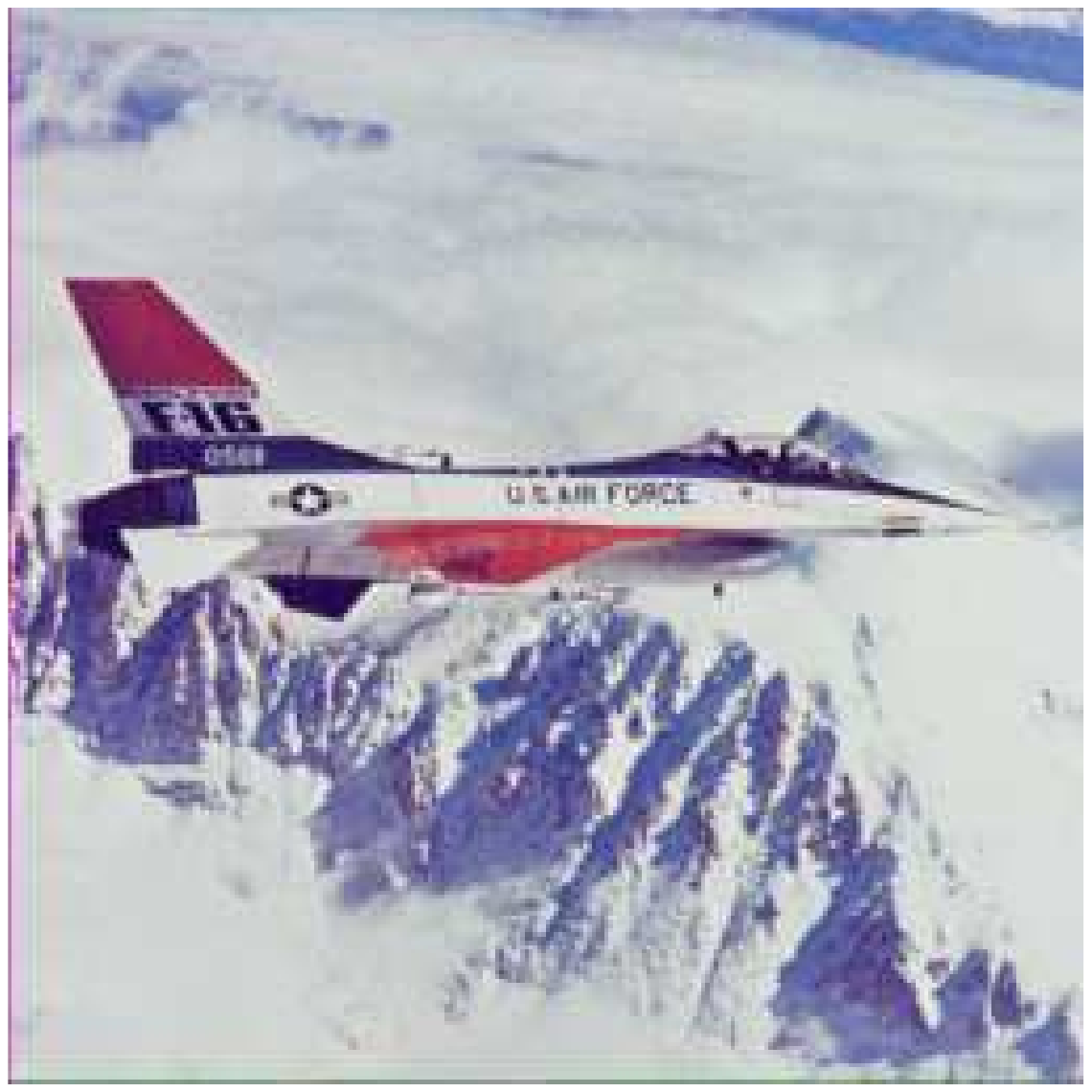}
\hspace{-0.16cm}
\includegraphics[height=0.64in,width=0.64in,angle=0]{./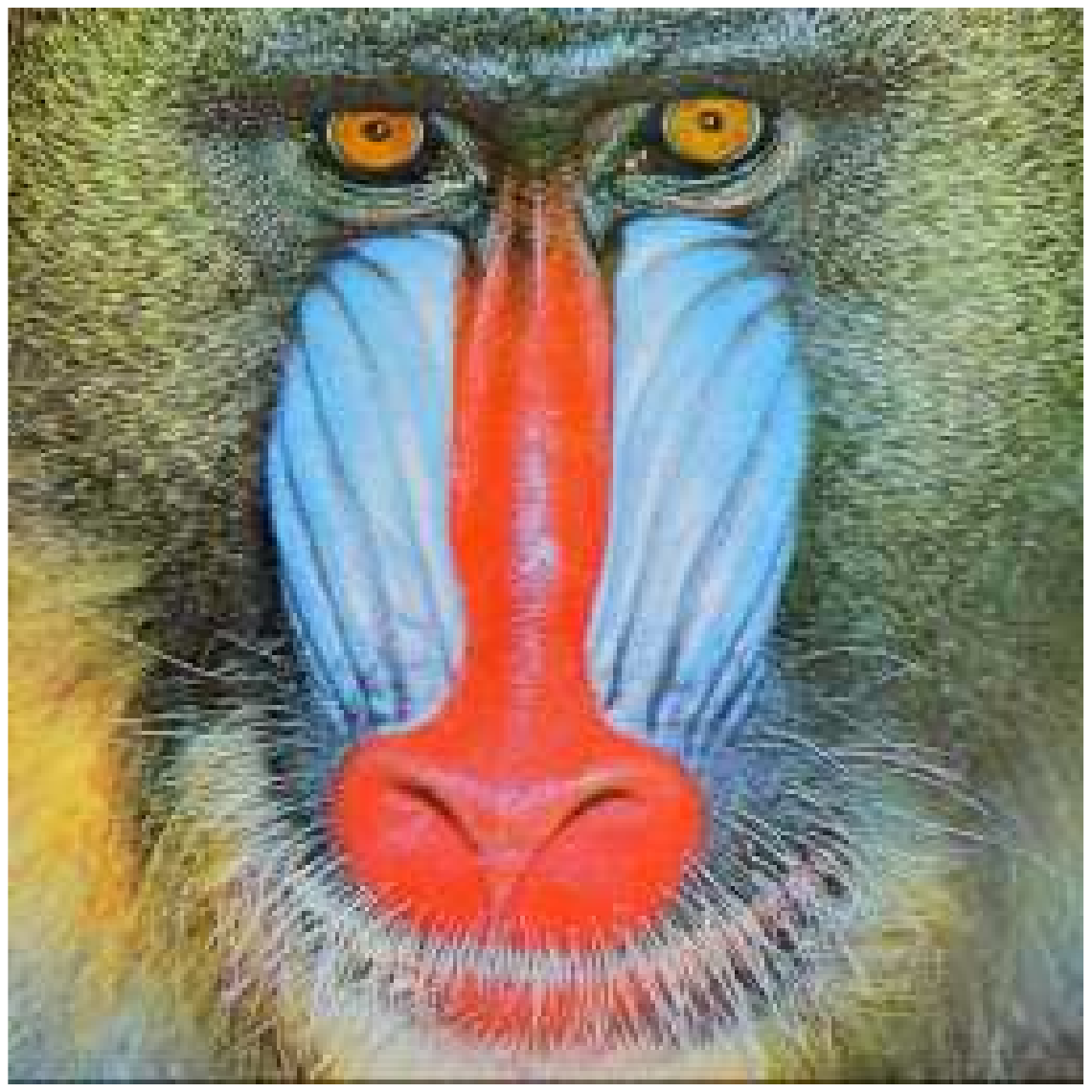}
\hspace{-0.16cm}
\includegraphics[height=0.64in,width=0.64in,angle=0]{./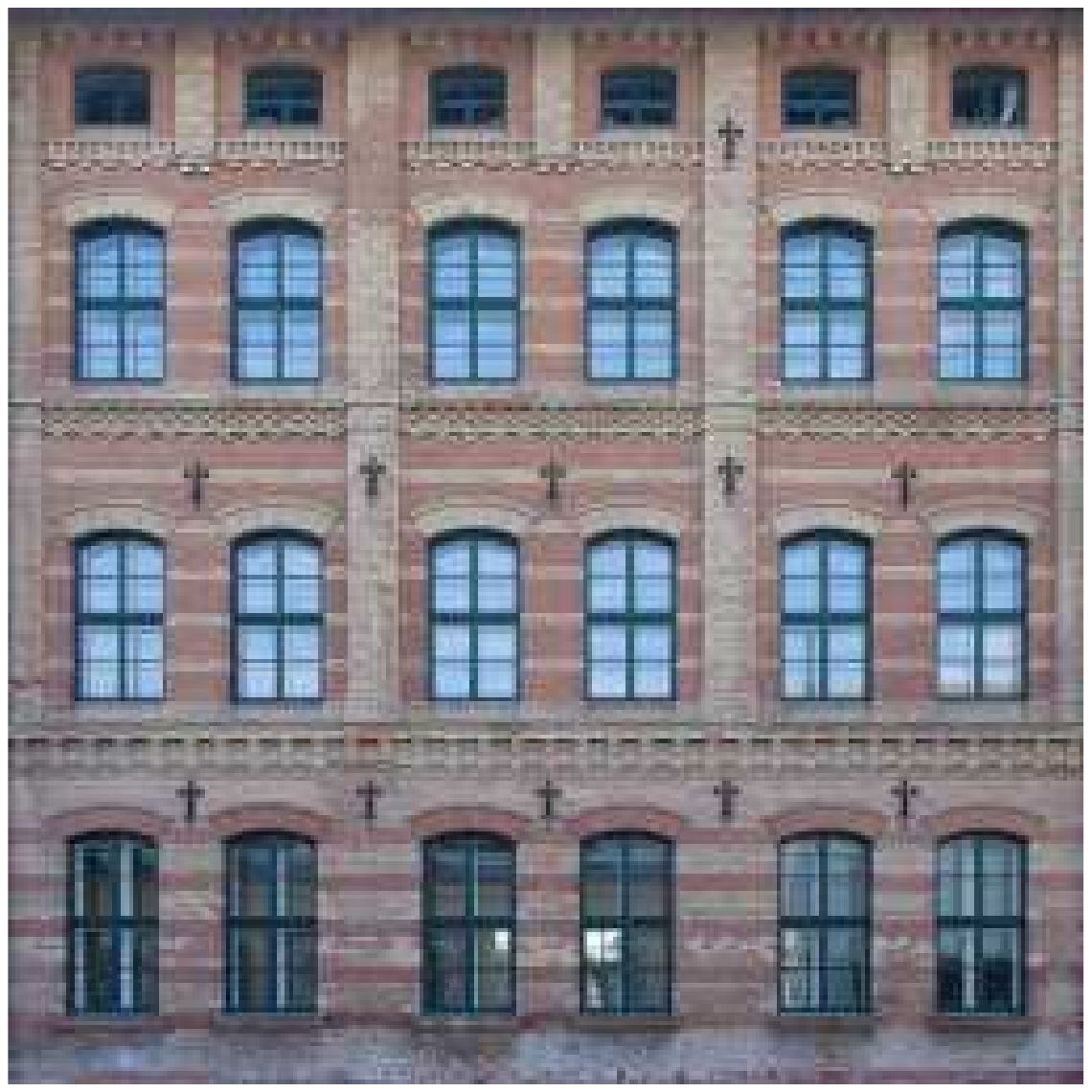}
\hspace{-0.16cm}
\includegraphics[height=0.64in,width=0.64in,angle=0]{./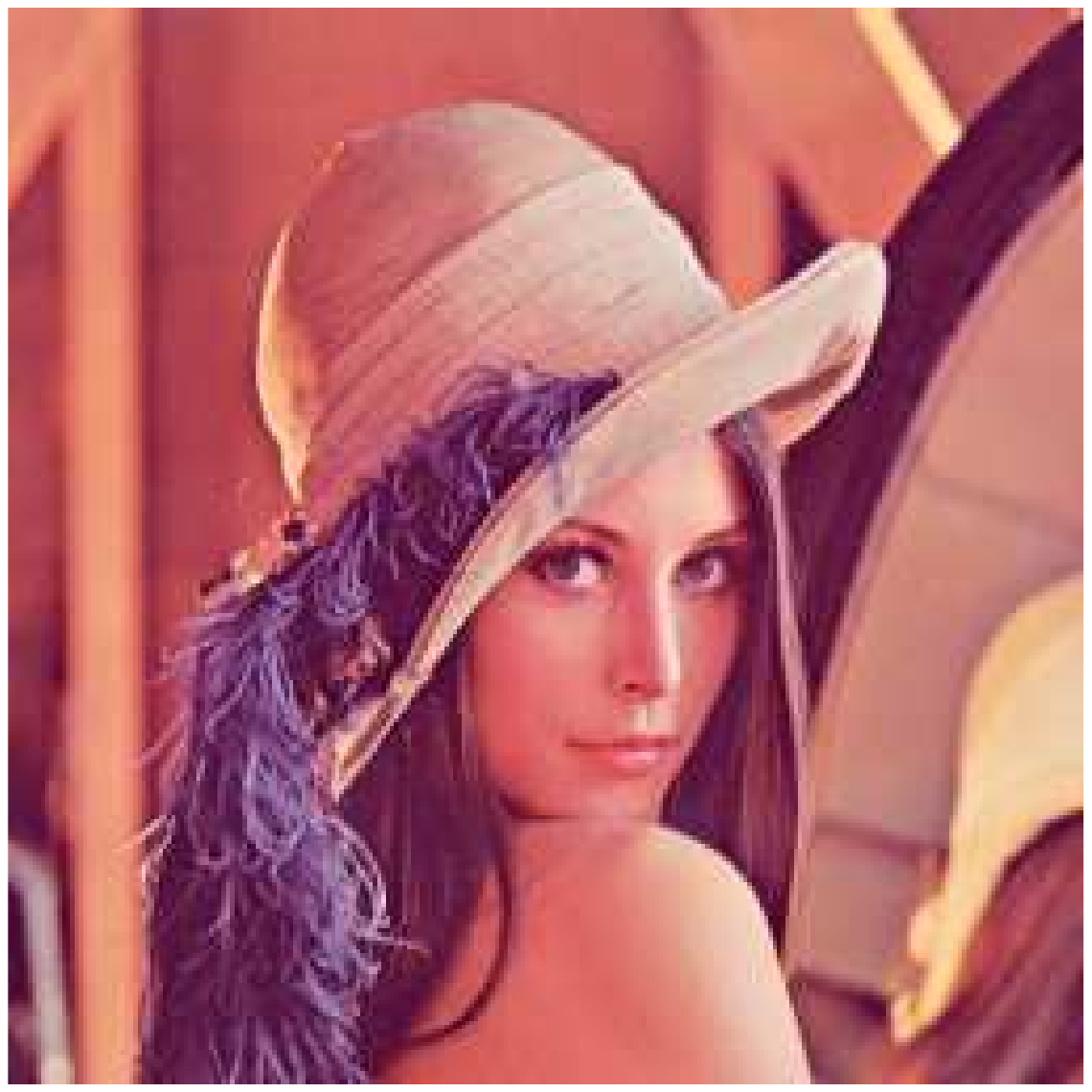}
\hspace{-0.16cm}
\includegraphics[height=0.64in,width=0.64in,angle=0]{./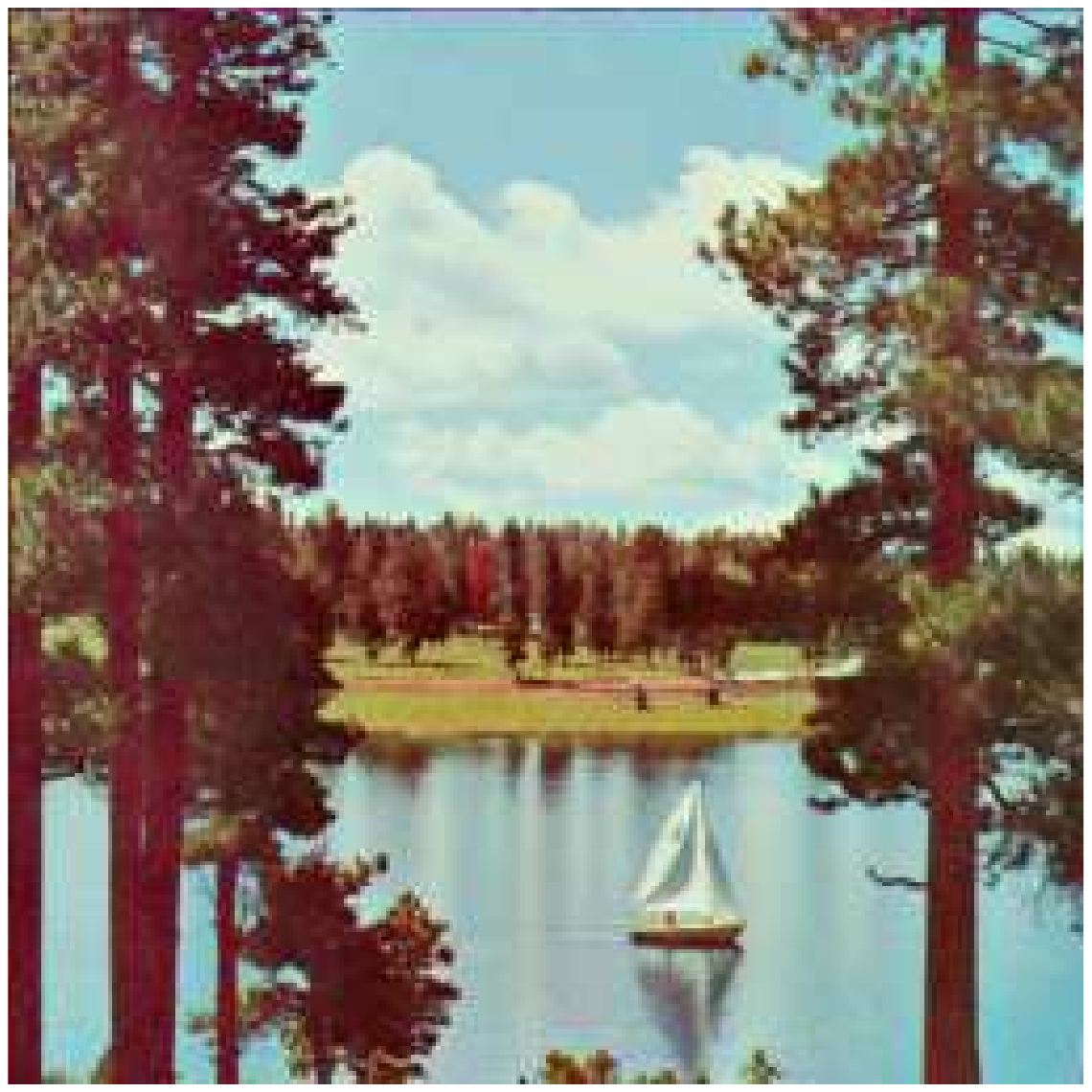}\
\\
\vspace{0.05cm}
\includegraphics[height=0.64in,width=0.64in,angle=0]{./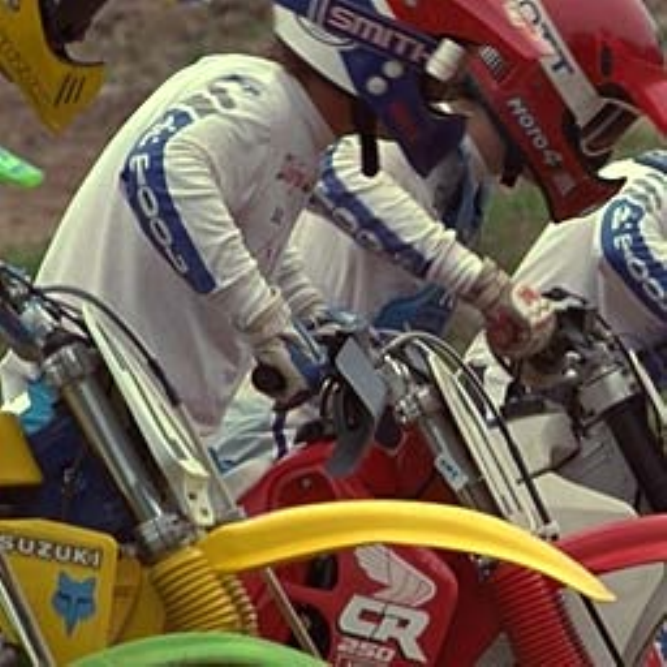}
\hspace{-0.16cm}
\includegraphics[height=0.64in,width=0.64in,angle=0]{./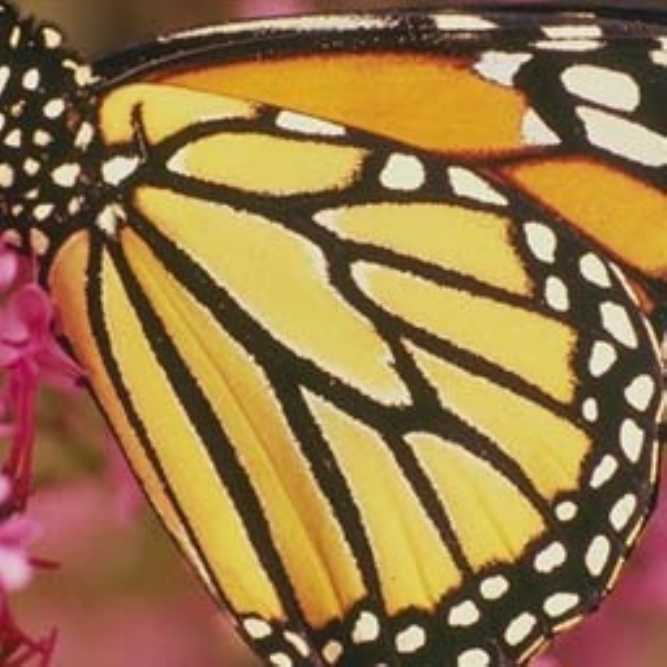}
\hspace{-0.16cm}
\includegraphics[height=0.64in,width=0.64in,angle=0]{./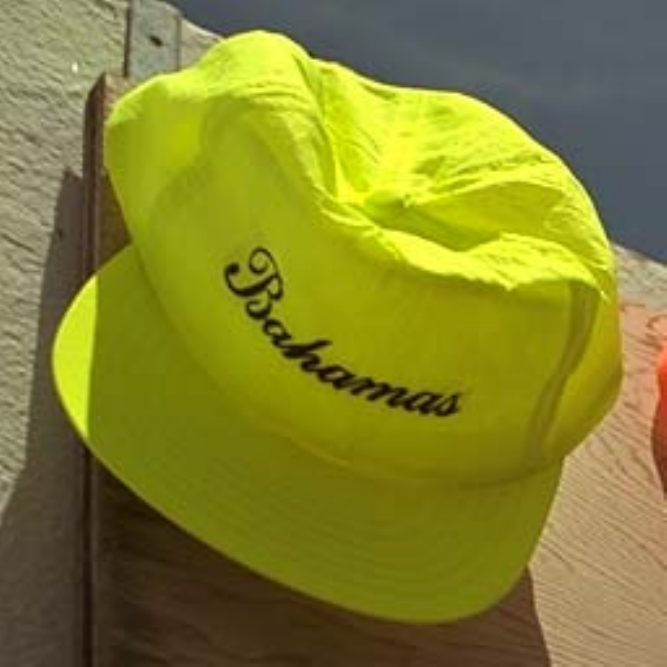}
\hspace{-0.16cm}
\includegraphics[height=0.64in,width=0.64in,angle=0]{./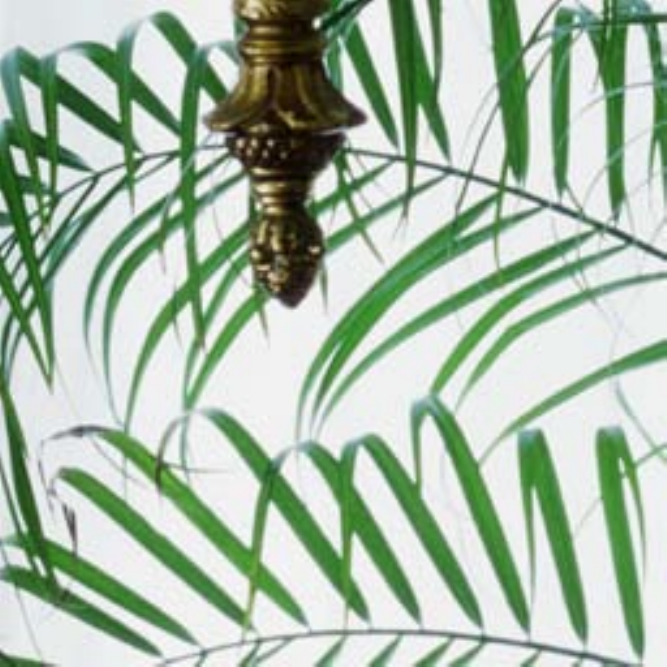}
\hspace{-0.16cm}
\includegraphics[height=0.64in,width=0.64in,angle=0]{./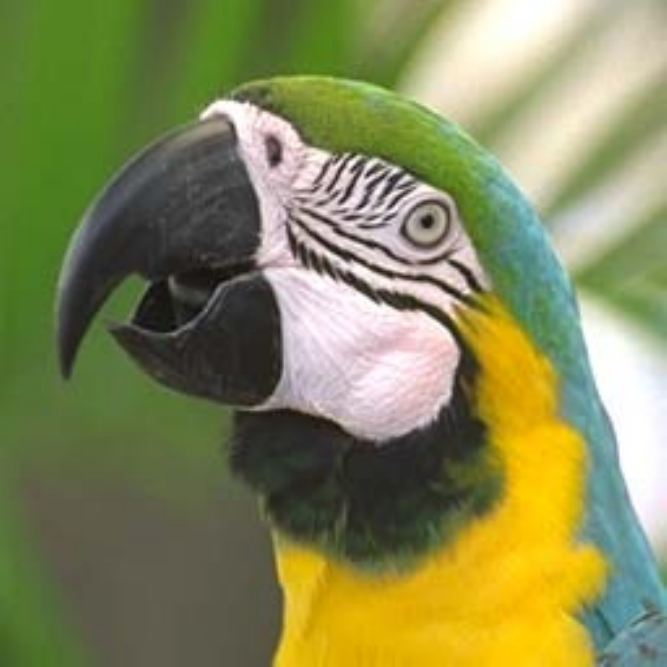}
\end{center}
\caption{Ground truth of 10 benchmark images.}
\label{fig:inpainting}
\end{figure}

\begin{table*}\footnotesize
\caption{Average RRE, PSNR and SSIM on 10 benchmark images with different missing ratios.}
\renewcommand{\arraystretch}{1.1}
\begin{center}
\begin{tabu} to 1\textwidth {X[1.9,l]|X[c]|X[c]|X[c]|X[c]|X[c]|X[c]|X[c]|X[c]|X[c]|X[c]|X[c]|X[c]}
\hline
\multirow{2}{*}{Method} & \multicolumn{3}{c|}{60\%} & \multicolumn{3}{c|}{70\%} & \multicolumn{3}{c|}{80\%} & \multicolumn{3}{c}{90\%}\\
\cline{2-13}
 & RRE & PSNR & SSIM & RRE & PSNR & SSIM & RRE & PSNR & SSIM & RRE & PSNR & SSIM\\
\hline
FaLRTC~\cite{liu2013tensor} & 0.1003 & 25.6337 & 0.7911 & 0.1289 & 23.4559 & 0.7024 & 0.1704 & 21.0291 & 0.5797 & 0.2445 & 17.8378 & 0.4068\\
HaLRTC~\cite{liu2013tensor} & 0.0995 & 25.7348 & 0.7916 & 0.1281 & 23.5329 & 0.7010 & 0.1695 & 21.0939 & 0.5757 & 0.2430 & 17.9096 & 0.3997\\
RPTC$_{\rm{scad}}$~\cite{zhao2015novel} & 0.0859 & 27.0793 & 0.8232 & 0.1336 & 24.1529 & 0.7106 & 0.1582 & 21.9985 & 0.6022 & 0.2210 & 18.9062 & 0.4198\\
TMac~\cite{xu2013parallel} & 0.1408 & 22.7919 & 0.6465 & 0.1517 & 22.1323 & 0.6036 & 0.1700 & 21.1231 & 0.5375 & 0.2940 & 16.1311 & 0.2627\\
STDC~\cite{chen2014simultaneous} & 0.0893 & 26.5850 & 0.8089 & 0.1078 & 25.0486 & 0.7579 & 0.1367 & 22.9630 & \textbf{0.6801} & 0.2326 & 17.9273 & \textbf{0.4913}\\
t-SVD~\cite{zhang2014novel} & 0.0915 & 26.4223 & 0.7832 & 0.1219 & 23.9397 & 0.6826 & 0.1649 & 21.3169 & 0.5472 & 0.2391 & 18.0432 & 0.3580\\
FBCP~\cite{zhao2015bayesian} & 0.0916 & 26.5030 & 0.7601 & 0.1137 & 24.5926 & 0.6866 & 0.1508 & 22.1263 & 0.5727 & 0.2192 & 18.8112 & 0.3878\\
BRTF~\cite{zhao2016bayesian} & 0.2685 & 16.7725 & 0.4421 & 0.2892 & 16.1276 & 0.4018 & 0.3089 & 15.5556 & 0.3611 & 0.3291 & 15.0054 & 0.3108\\
Ours & \textbf{0.0755} & \textbf{28.0223} & \textbf{0.8381} & \textbf{0.0966} & \textbf{25.9144} & \textbf{0.7676} & \textbf{0.1302} & \textbf{23.3615} & 0.6549 & \textbf{0.1970} & \textbf{19.8088} & 0.4549\\
\hline
\end{tabu}
\end{center}
\label{table:avginpainting}
\end{table*}

\begin{figure*}[htp]
\setlength{\abovecaptionskip}{0pt}
\begin{center}
\subfigure[Incomplete image]{\includegraphics[height=1.35in,width=1.35in,angle=0]{./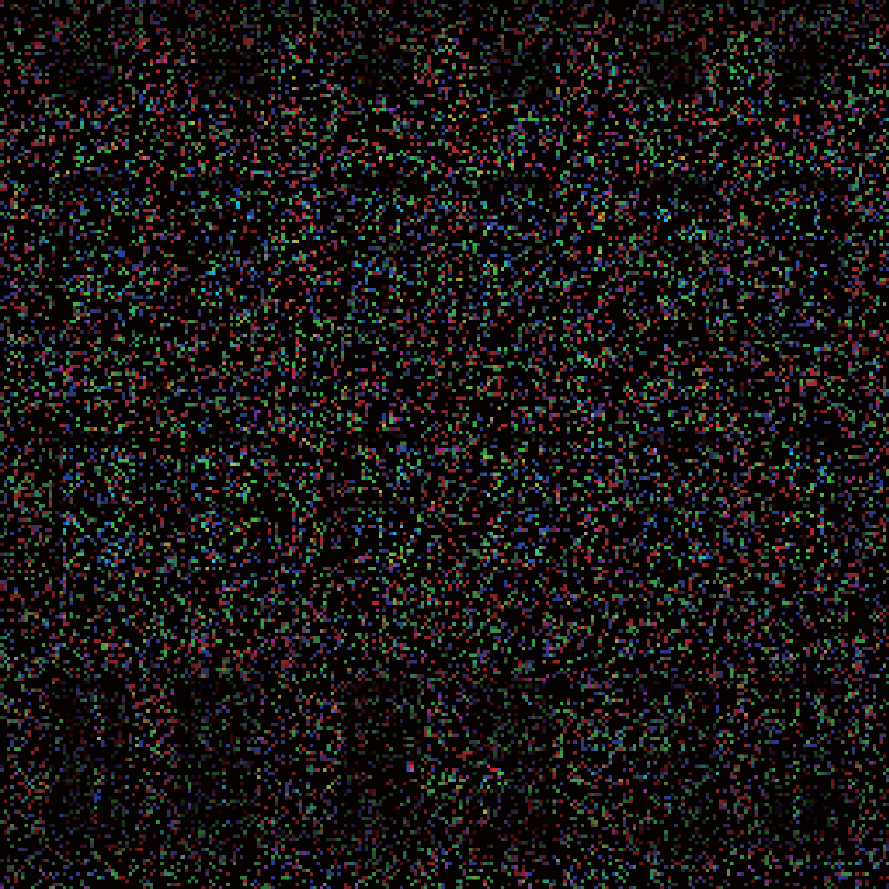}}
\hspace{-0.15cm}
\subfigure[FaLRTC~\cite{liu2013tensor}]{\includegraphics[height=1.35in,width=1.35in,angle=0]{./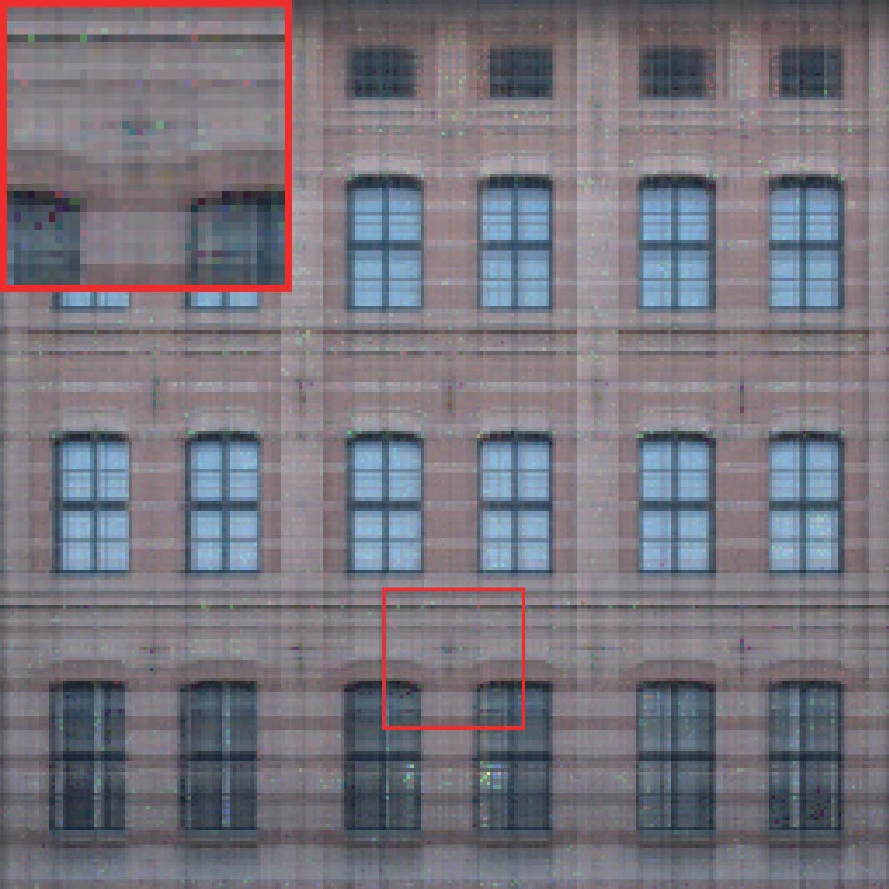}}
\hspace{-0.15cm}
\subfigure[HaLRTC~\cite{liu2013tensor}]{\includegraphics[height=1.35in,width=1.35in,angle=0]{./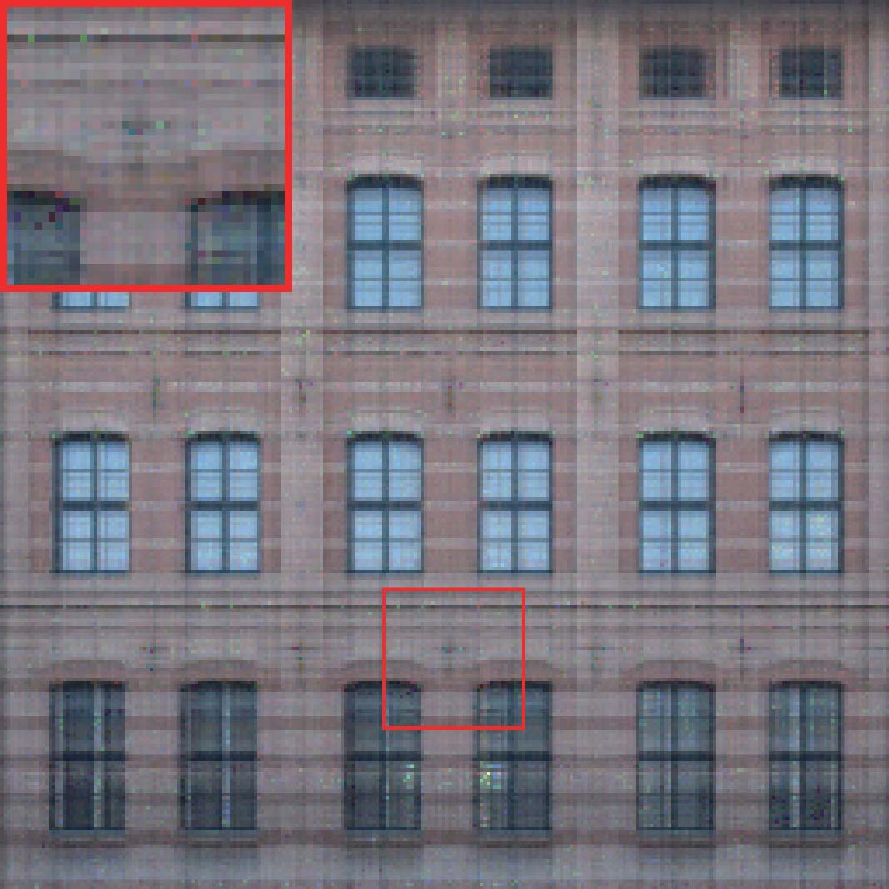}}
\hspace{-0.15cm}
\subfigure[RPTC$_{\rm{scad}}$~\cite{zhao2015novel}]{\includegraphics[height=1.35in,width=1.35in,angle=0]{./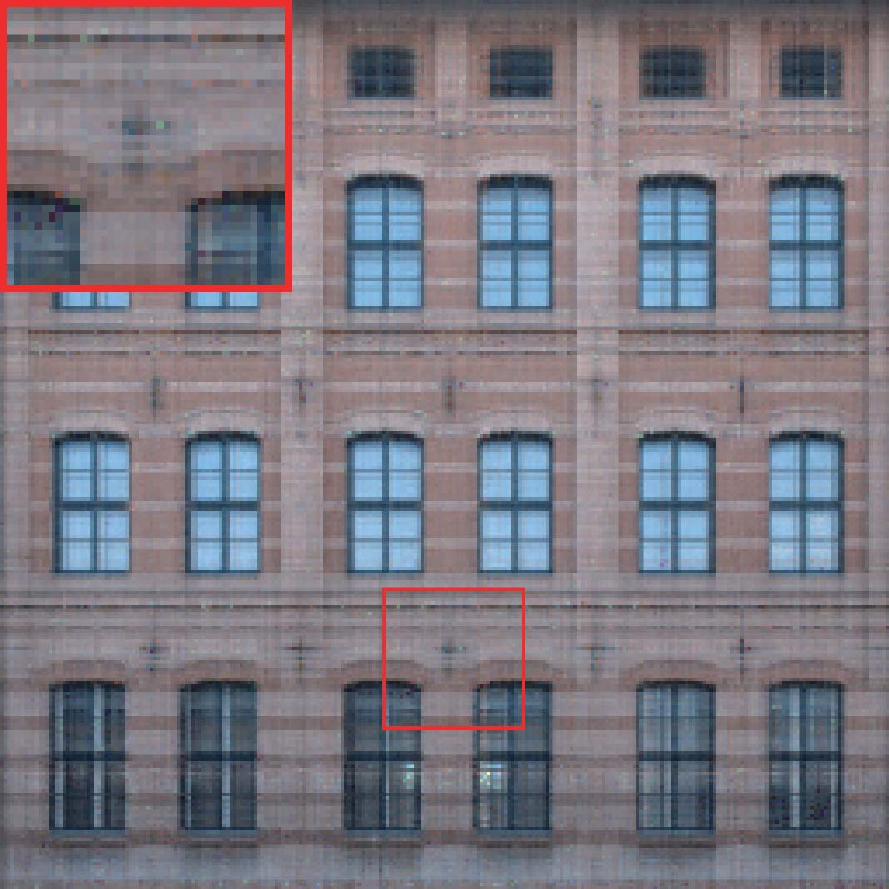}}
\hspace{-0.15cm}
\subfigure[TMac~\cite{xu2013parallel}]{\includegraphics[height=1.35in,width=1.35in,angle=0]{./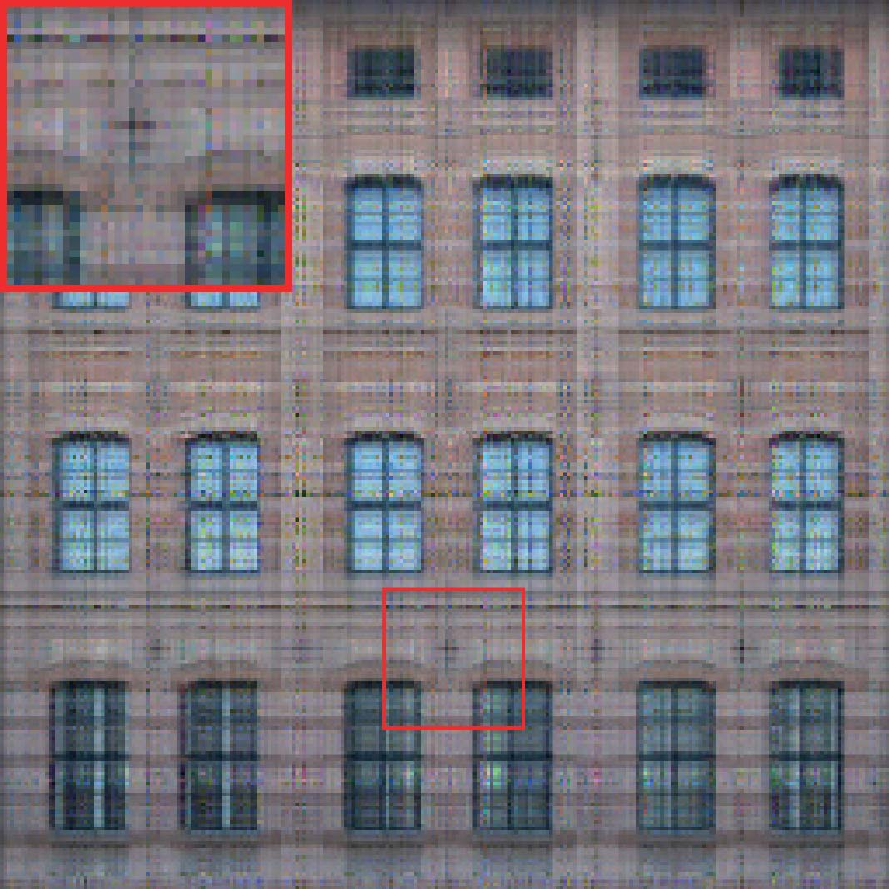}}
\\
\subfigure[STDC~\cite{chen2014simultaneous}]{\includegraphics[height=1.35in,width=1.35in,angle=0]{./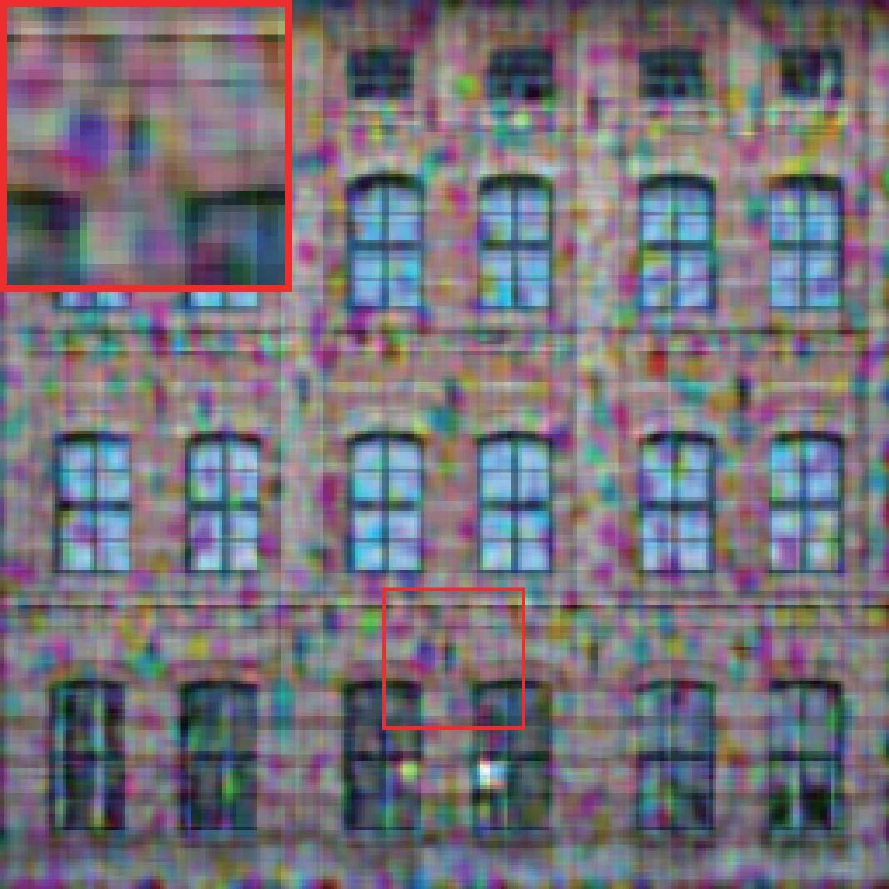}}
\hspace{-0.15cm}
\subfigure[t-SVD~\cite{zhang2014novel}]{\includegraphics[height=1.35in,width=1.35in,angle=0]{./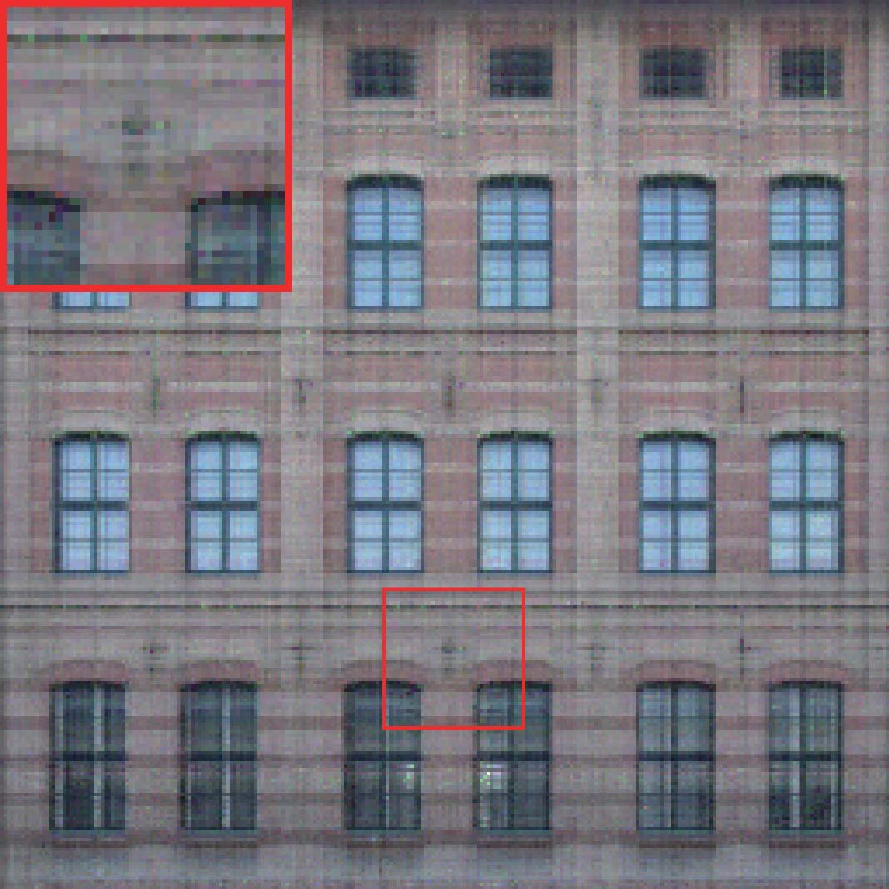}}
\hspace{-0.15cm}
\subfigure[FBCP~\cite{zhao2015bayesian}]{\includegraphics[height=1.35in,width=1.35in,angle=0]{./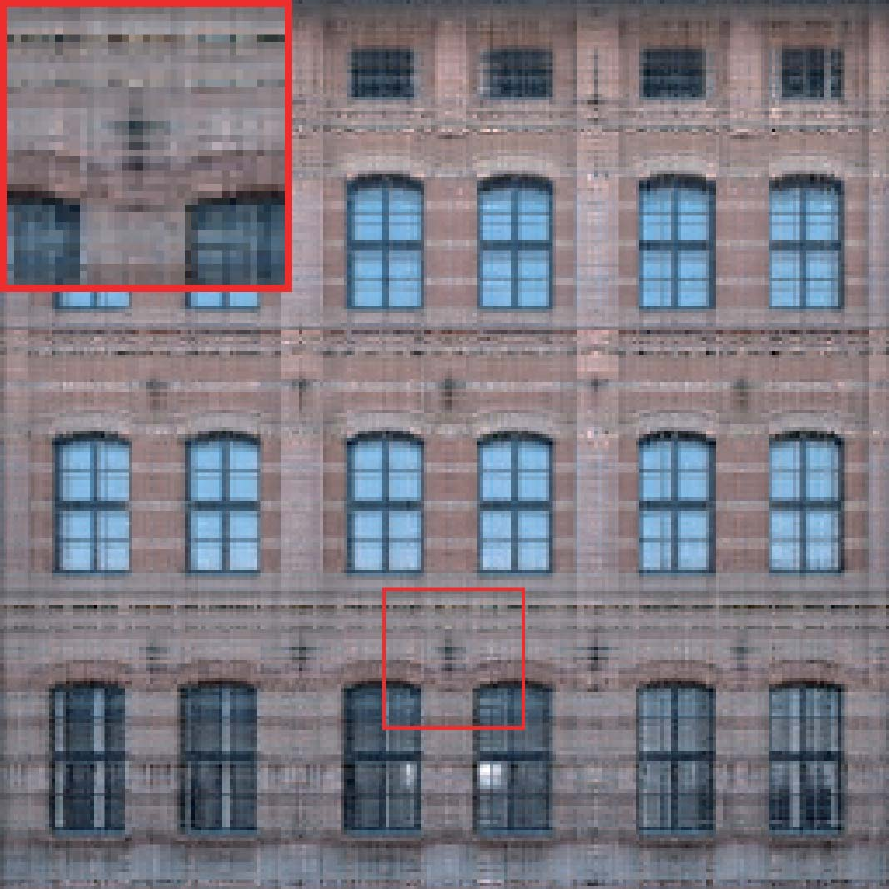}}
\hspace{-0.15cm}
\subfigure[BRTF~\cite{zhao2016bayesian}]{\includegraphics[height=1.35in,width=1.35in,angle=0]{./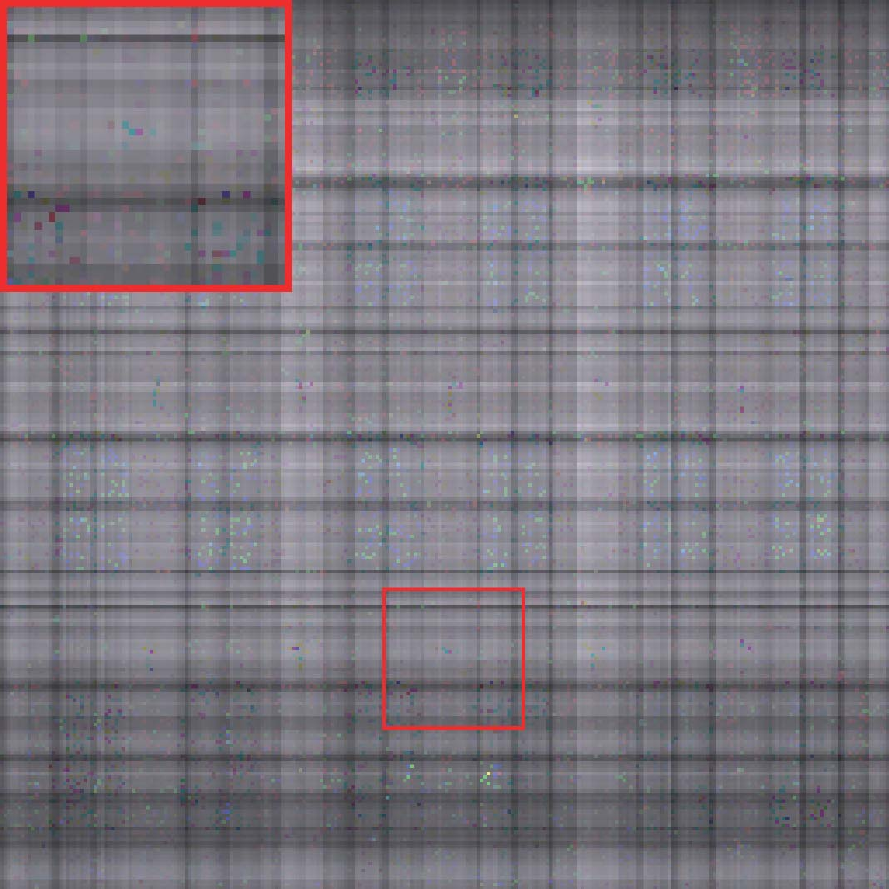}}
\hspace{-0.15cm}
\subfigure[Ours]{\includegraphics[height=1.35in,width=1.35in,angle=0]{./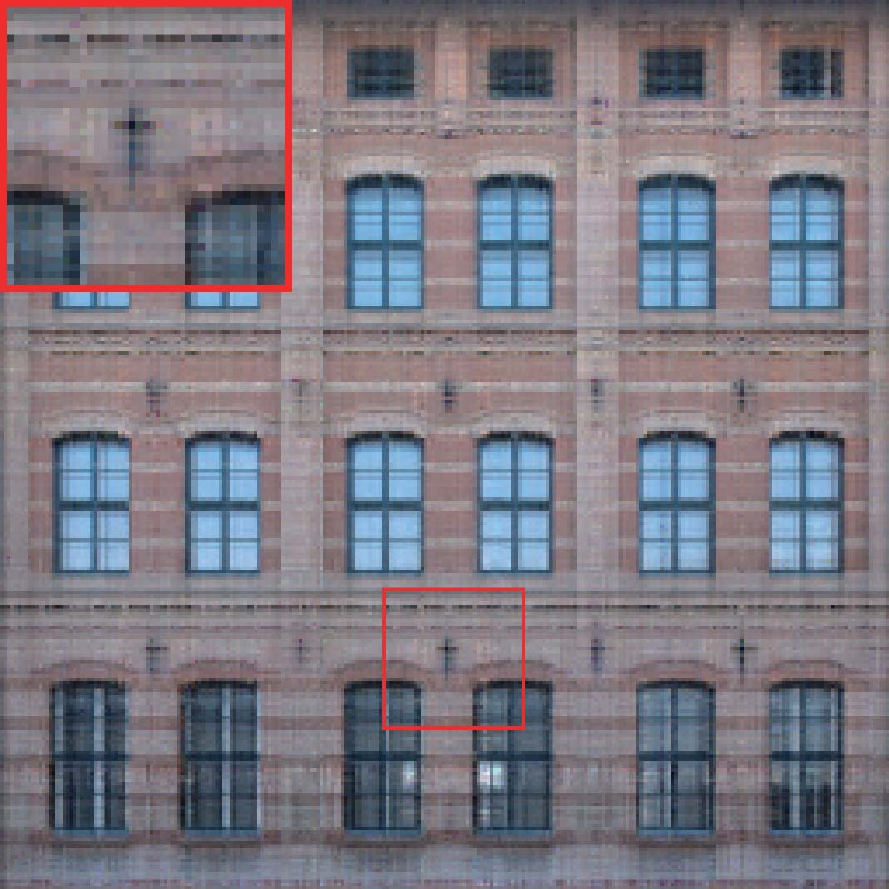}}\\
\end{center}
\vspace{-0.3cm}
\caption{Visual results of the 'facade' image, when missing ratio is $90\%$.}
\vspace{-0.3cm}
\label{fig:facade}
\end{figure*}

\begin{figure*}
\setlength{\abovecaptionskip}{0pt}
\begin{center}
\subfigure[Incomplete image]{\includegraphics[height=1.35in,width=1.35in,angle=0]{./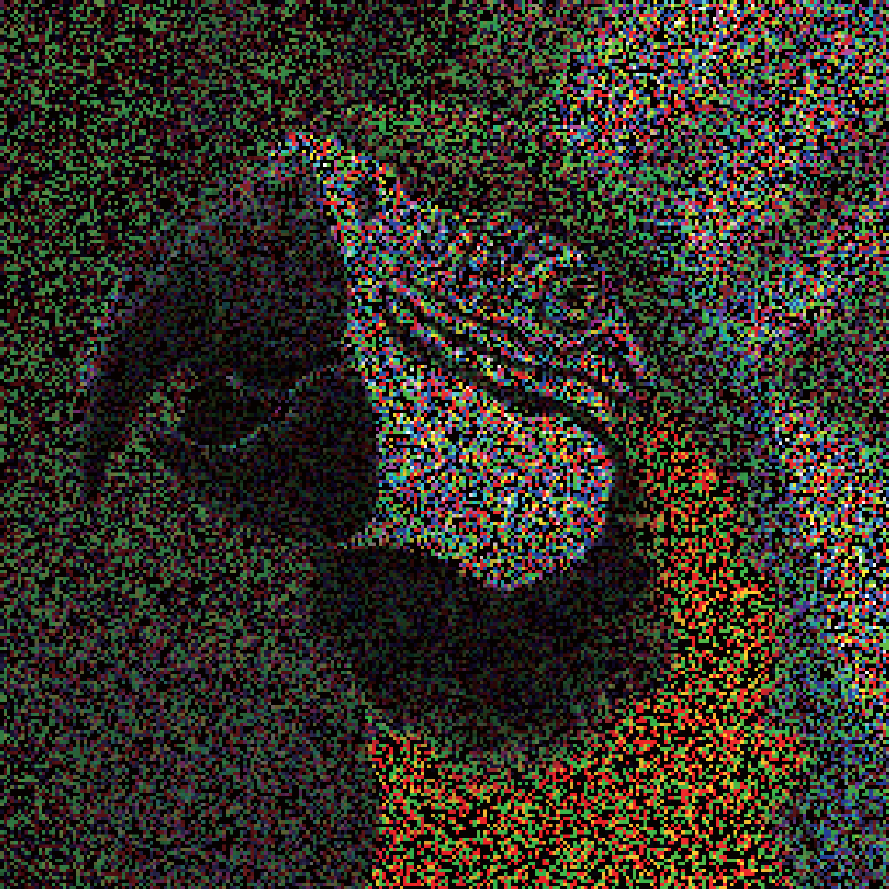}}
\hspace{-0.15cm}
\subfigure[FaLRTC~\cite{liu2013tensor}]{\includegraphics[height=1.35in,width=1.35in,angle=0]{./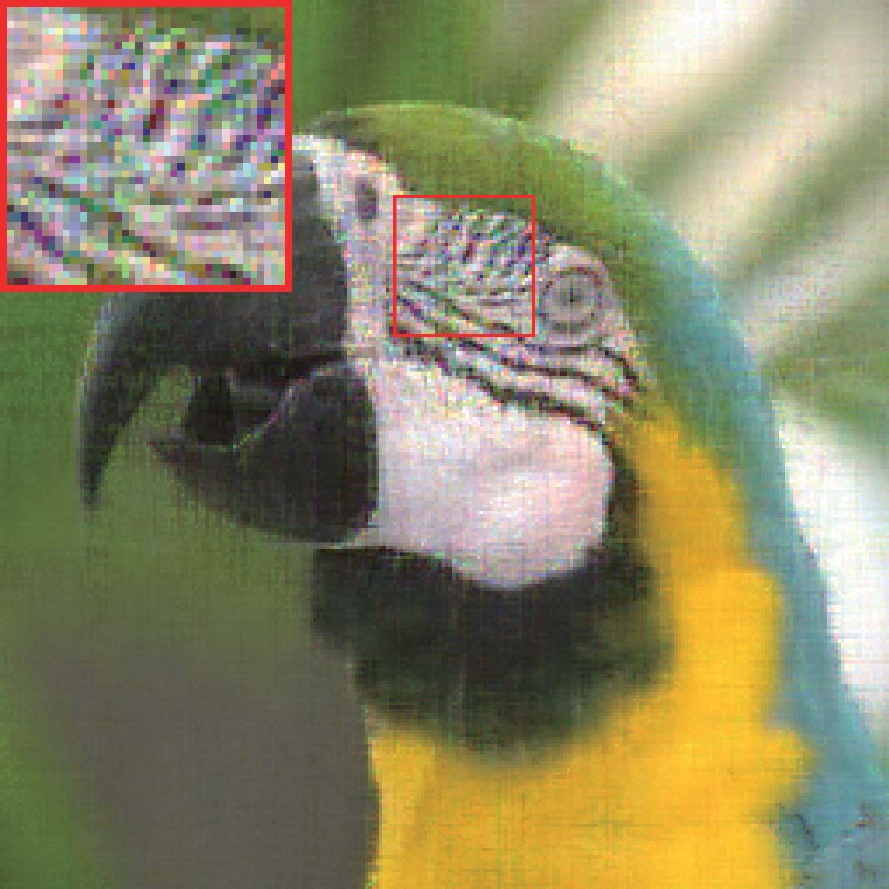}}
\hspace{-0.15cm}
\subfigure[HaLRTC~\cite{liu2013tensor}]{\includegraphics[height=1.35in,width=1.35in,angle=0]{./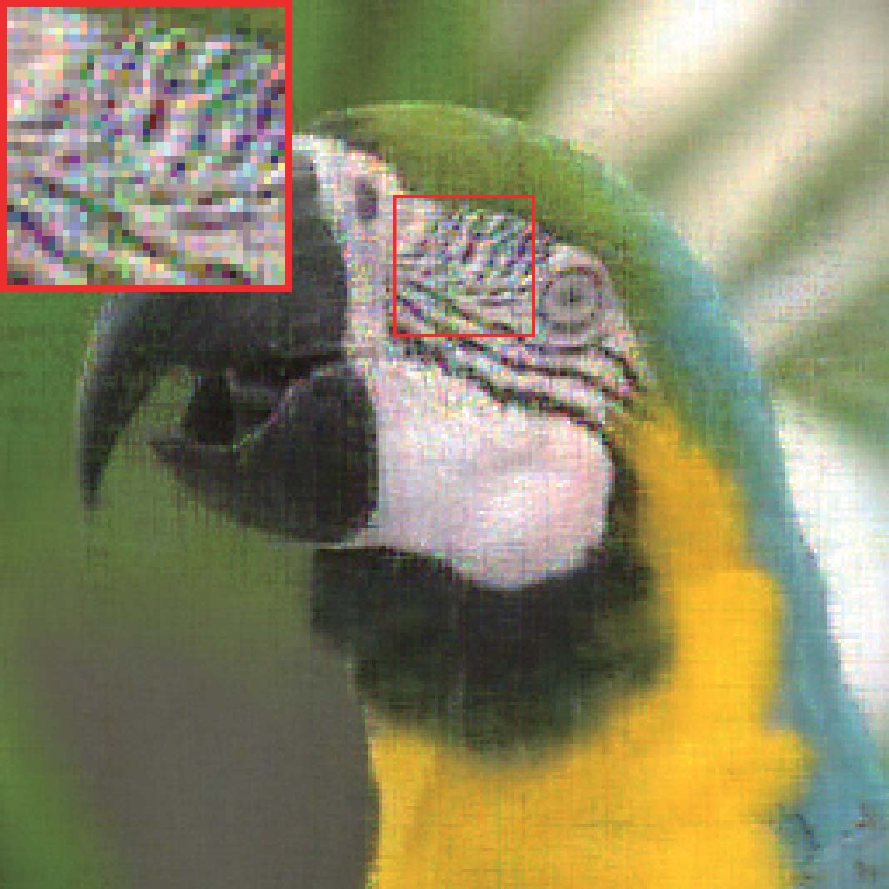}}
\hspace{-0.15cm}
\subfigure[RPTC$_{\rm{scad}}$~\cite{zhao2015novel}]{\includegraphics[height=1.35in,width=1.35in,angle=0]{./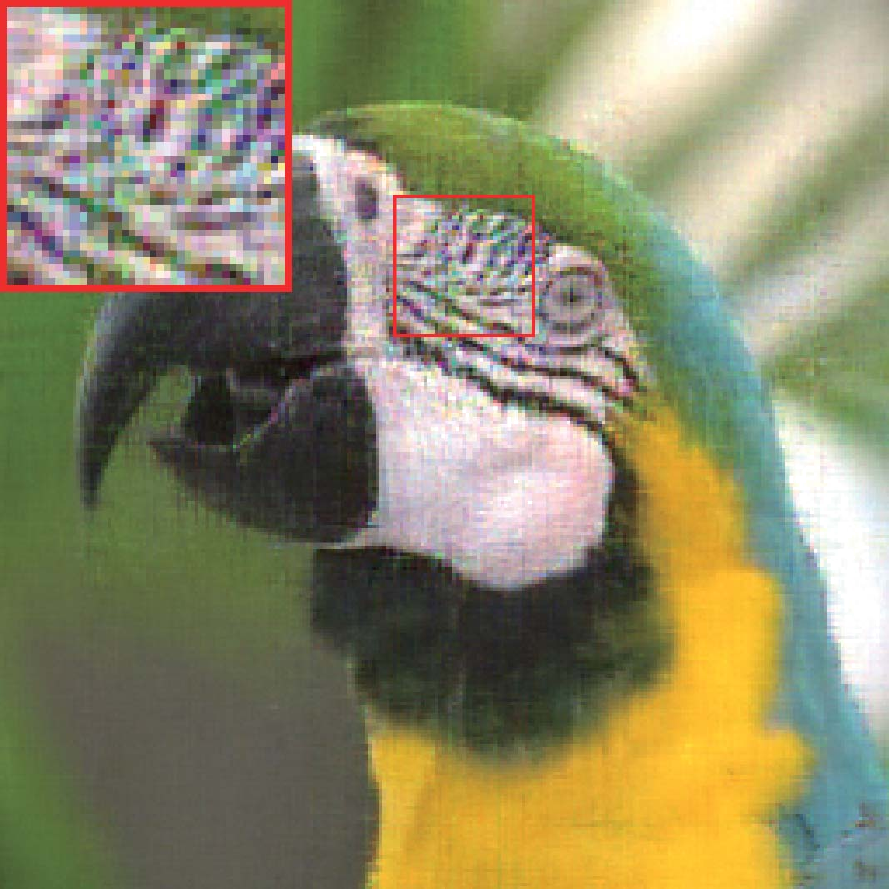}}
\subfigure[TMac~\cite{xu2013parallel}]{\includegraphics[height=1.35in,width=1.35in,angle=0]{./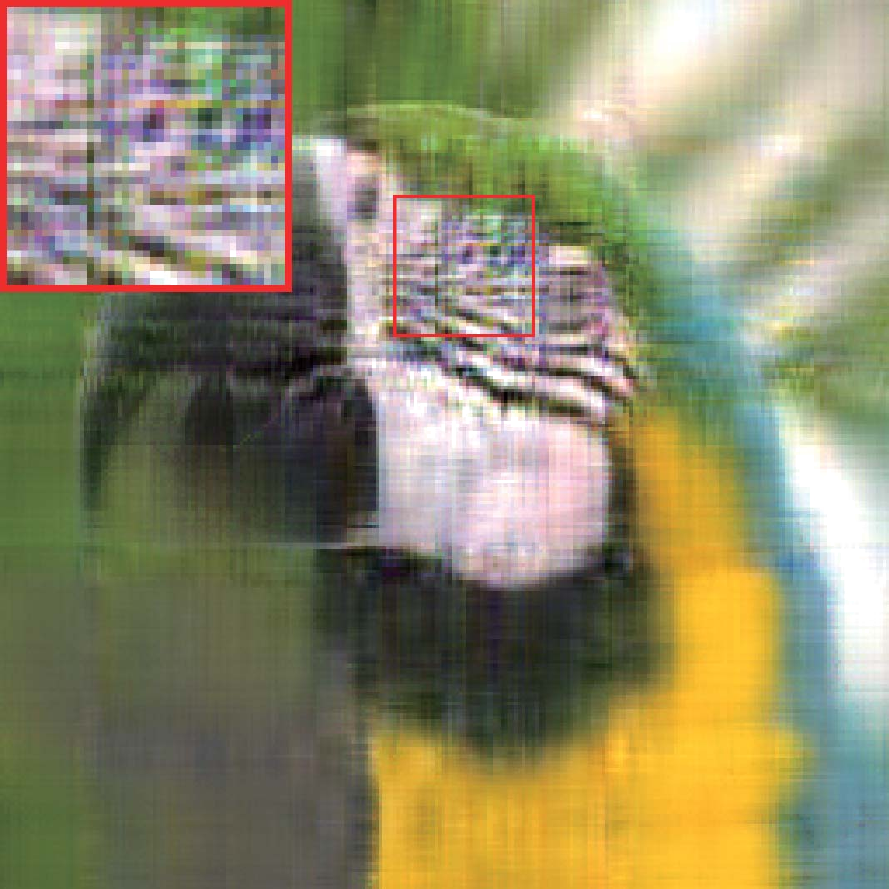}}
\hspace{-0.15cm}
\\
\subfigure[STDC~\cite{chen2014simultaneous}]{\includegraphics[height=1.35in,width=1.35in,angle=0]{./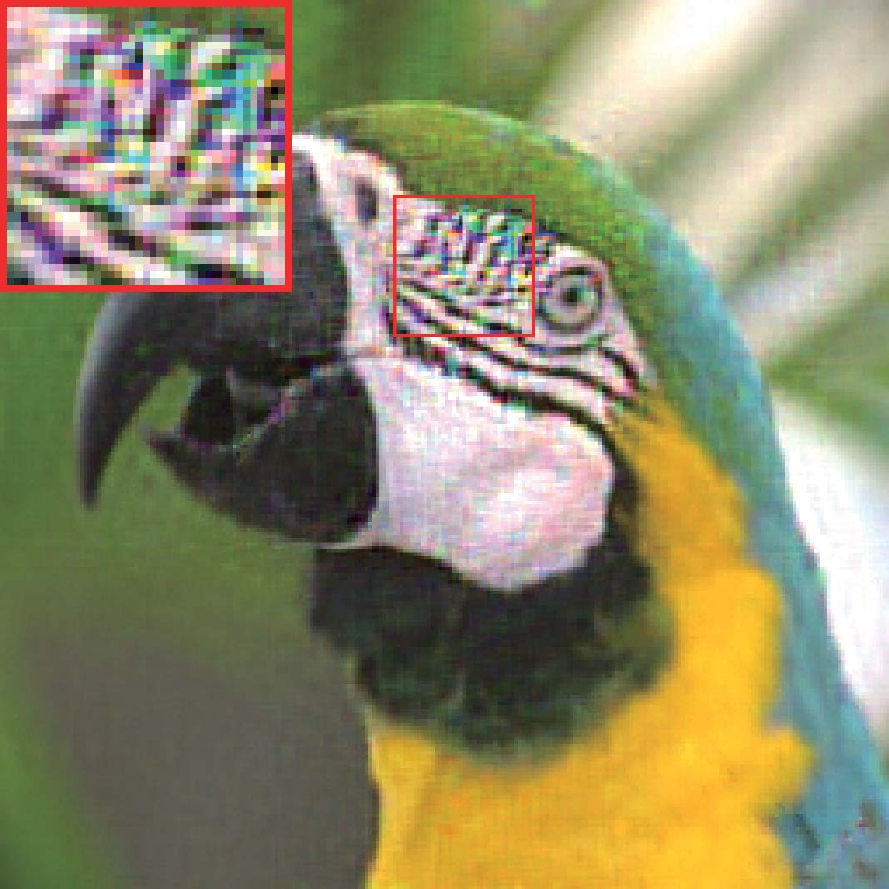}}
\hspace{-0.15cm}
\subfigure[t-SVD~\cite{zhang2014novel}]{\includegraphics[height=1.35in,width=1.35in,angle=0]{./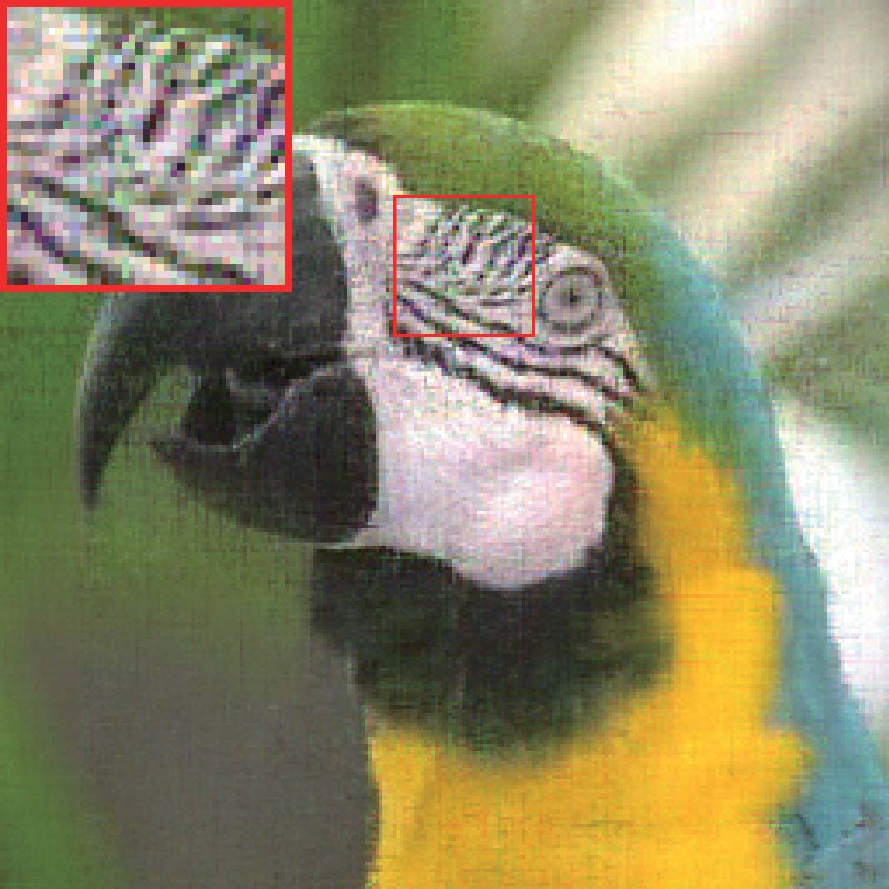}}
\hspace{-0.15cm}
\subfigure[FBCP~\cite{zhao2015bayesian}]{\includegraphics[height=1.35in,width=1.35in,angle=0]{./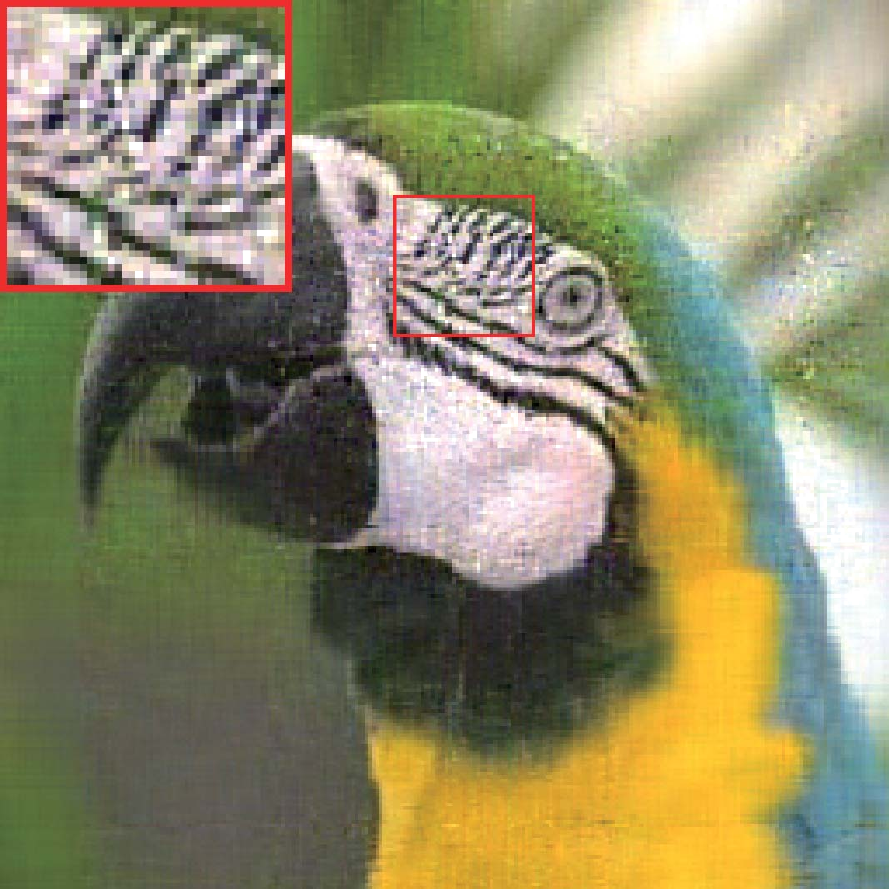}}
\hspace{-0.15cm}
\subfigure[BRTF~\cite{zhao2016bayesian}]{\includegraphics[height=1.35in,width=1.35in,angle=0]{./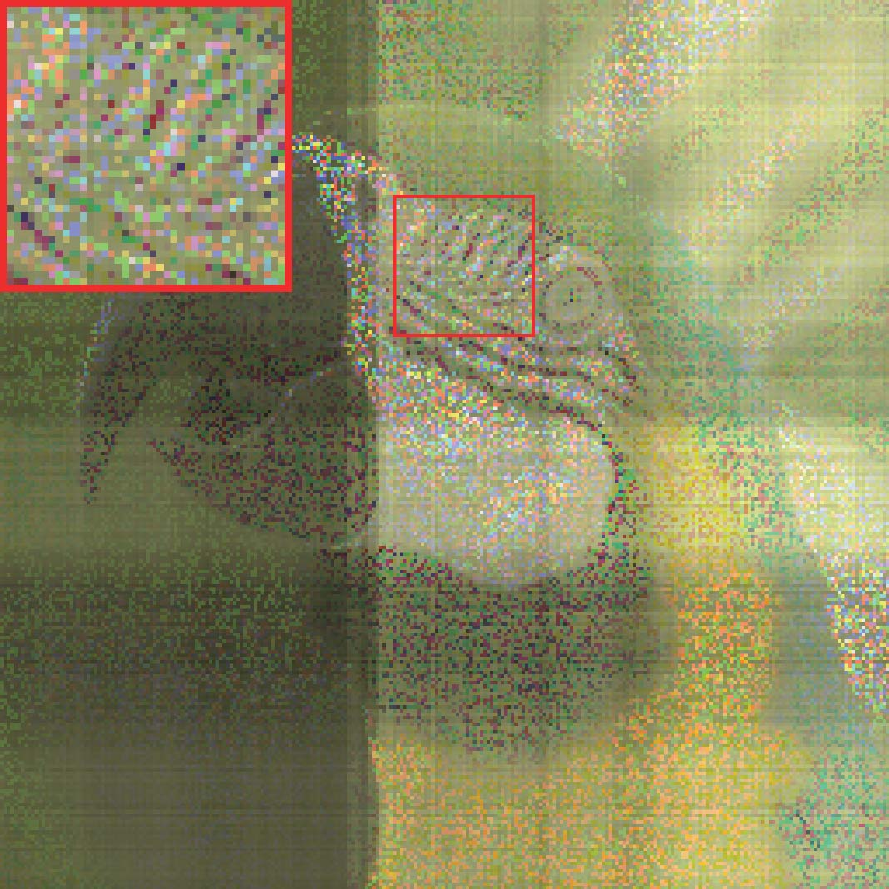}}
\hspace{-0.15cm}
\subfigure[Ours]{\includegraphics[height=1.35in,width=1.35in,angle=0]{./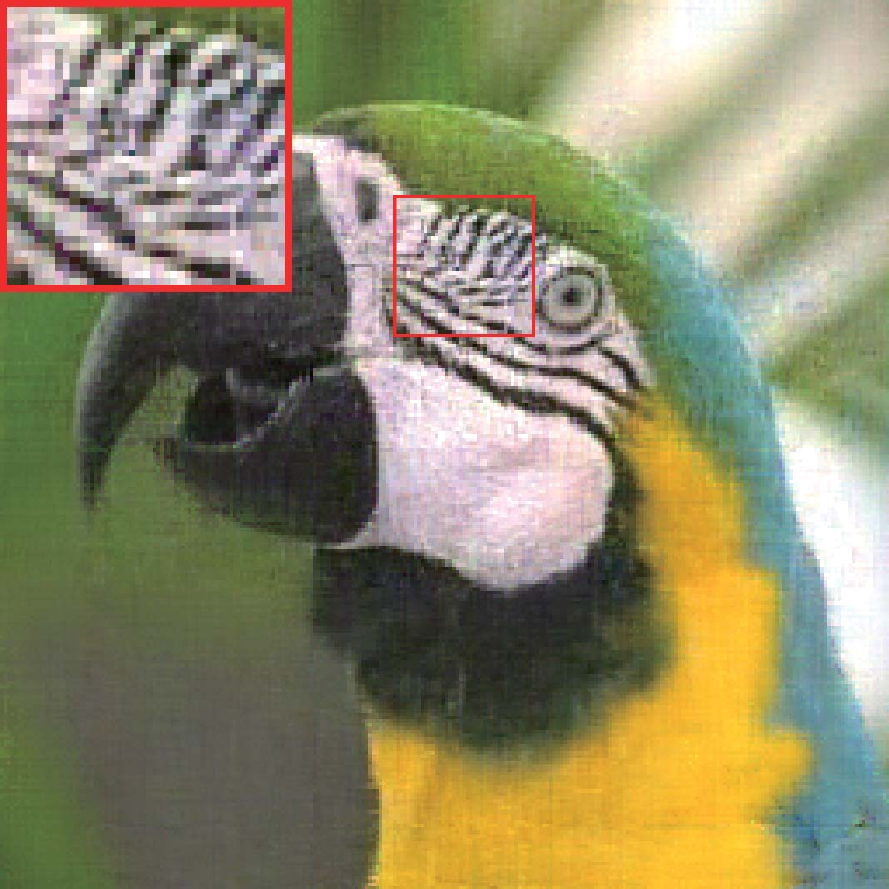}}\\
\end{center}
\caption{Visual results of the 'parrot' image from the top four methods, when missing ratio is $70\%$.}
\label{fig:parrot}
\end{figure*}

\begin{table}\footnotesize
\caption{RRE, PSNR and SSIM on two videos.}
\renewcommand{\arraystretch}{1.1}
\begin{center}
\begin{tabu} to 0.5\textwidth {X[2,l]|X[0.9,c]|X[c]|X[0.9,c]|X[0.9,c]|X[c]|X[0.9,c]}
\hline
\multirow{2}{*}{Method} & \multicolumn{3}{c|}{suzie} & \multicolumn{3}{c}{foreman}\\
\cline{2-7}
 & RRE & PSNR & SSIM & RRE & PSNR & SSIM\\
\hline
FaLRTC~\cite{liu2013tensor} & 0.0746 & 29.7584 & 0.7108 & 0.0537 & 29.1385 & 0.7484\\
HaLRTC~\cite{liu2013tensor} & 0.0746 & 29.7585 & 0.7108 & 0.0537 & 29.1385 & 0.7484\\
RPTC$_{\rm{scad}}$~\cite{zhao2015novel} & 0.0726 & 29.9943 & 0.7158 & 0.0493 & 29.8815 & 0.7635\\
TMac~\cite{xu2013parallel} & 0.0615 & 31.4253 & 0.8588 & 0.0702 & 26.8153 & 0.8103\\
STDC~\cite{chen2014simultaneous} & 0.0747 & 29.7401 & 0.7091 & 0.0543 & 29.0494 & 0.7431\\
FBCP~\cite{zhao2015bayesian} & 0.0426 & 34.6221 & 0.9103 & 0.0570 & 28.6269 & 0.8205\\
BRTF~\cite{zhao2016bayesian} & 0.0951 & 27.6432 & 0.6967 & 0.0975 & 23.9687 & 0.7117\\
Ours & \textbf{0.0381} & \textbf{35.6013} & \textbf{0.9182} & \textbf{0.0422} & \textbf{31.2435} & \textbf{0.8512}\\
\hline
\end{tabu}
\end{center}
\label{table:video}
\end{table}

\begin{figure*}[htp]
\setlength{\abovecaptionskip}{0pt}
\begin{center}
\subfigure[Original frame]{\includegraphics[height=1.1in,width=1.35in,angle=0]{./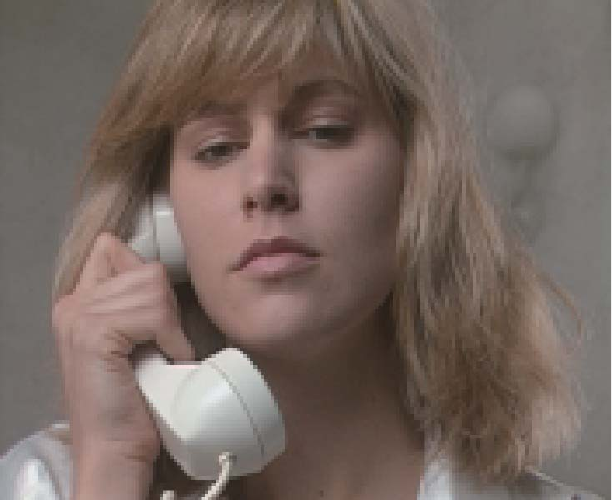}}
\hspace{-0.15cm}
\subfigure[Incomplete frame]{\includegraphics[height=1.1in,width=1.35in,angle=0]{./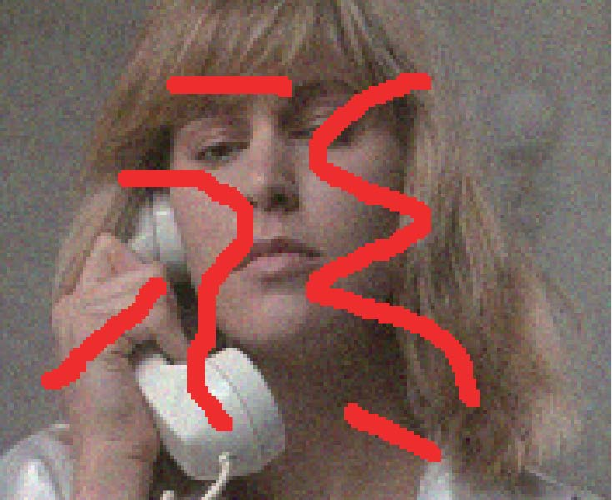}}
\hspace{-0.15cm}
\subfigure[FaLRTC~\cite{liu2013tensor}]{\includegraphics[height=1.1in,width=1.35in,angle=0]{./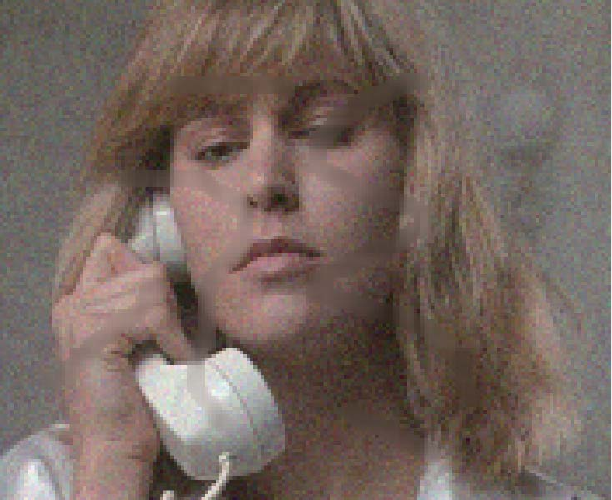}}
\hspace{-0.15cm}
\subfigure[HaLRTC~\cite{liu2013tensor}]{\includegraphics[height=1.1in,width=1.35in,angle=0]{./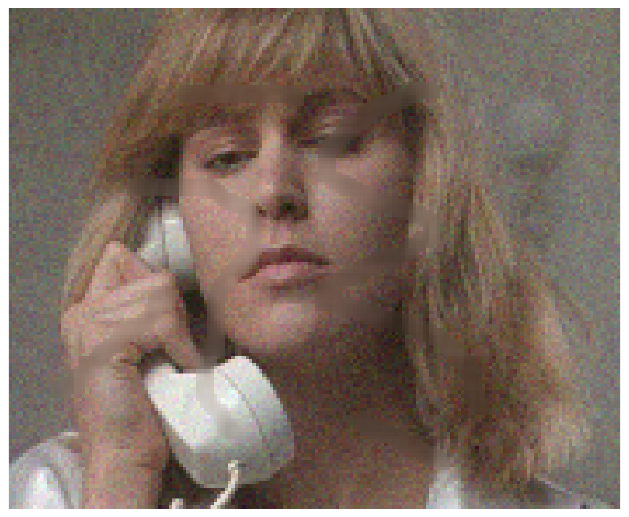}}
\hspace{-0.15cm}
\subfigure[RPTC$_{\rm{scad}}$~\cite{zhao2015novel}]{\includegraphics[height=1.1in,width=1.35in,angle=0]{./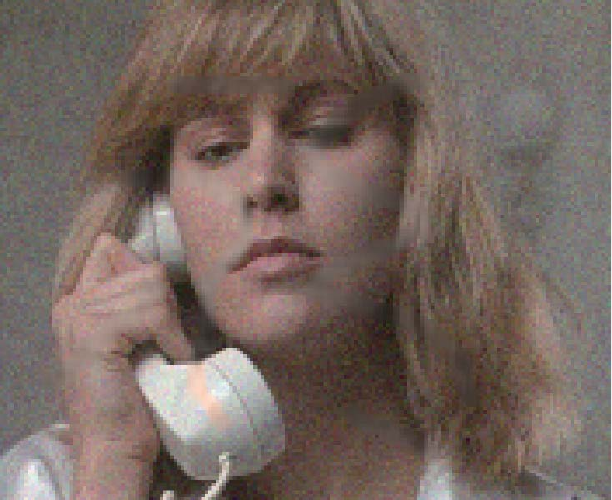}}
\\
\subfigure[TMac~\cite{xu2013parallel}]{\includegraphics[height=1.1in,width=1.35in,angle=0]{./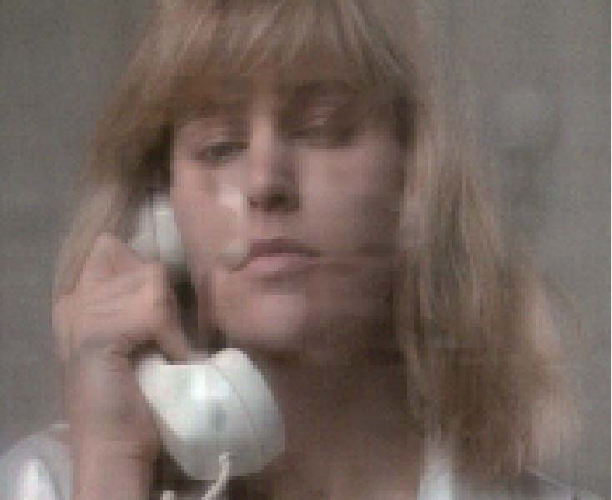}}
\hspace{-0.15cm}
\subfigure[STDC~\cite{chen2014simultaneous}]{\includegraphics[height=1.1in,width=1.35in,angle=0]{./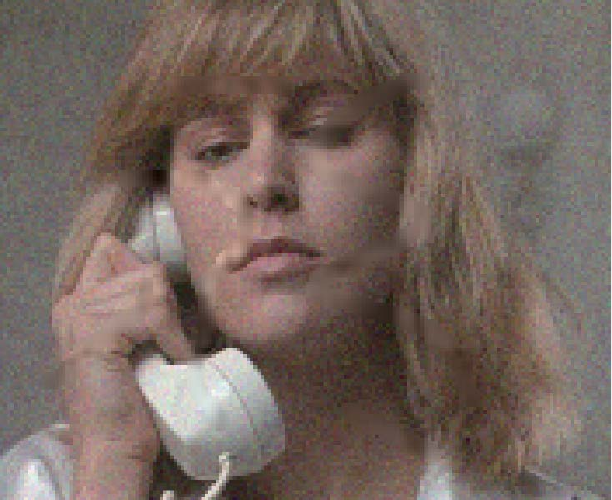}}\hspace{-0.15cm}
\subfigure[FBCP~\cite{zhao2015bayesian}]{\includegraphics[height=1.1in,width=1.35in,angle=0]{./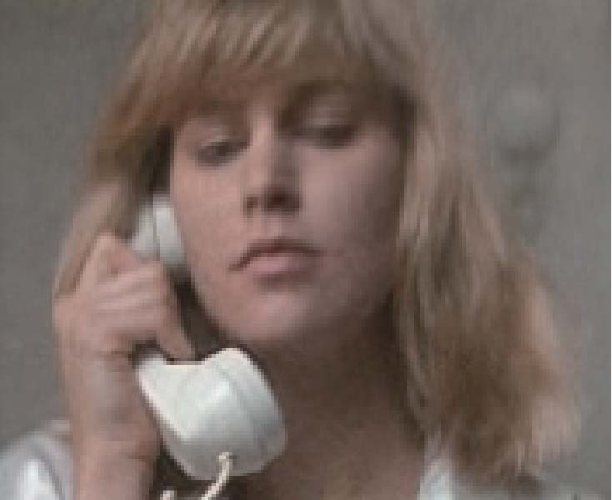}}
\hspace{-0.15cm}
\subfigure[BRTF~\cite{zhao2016bayesian}]{\includegraphics[height=1.1in,width=1.35in,angle=0]{./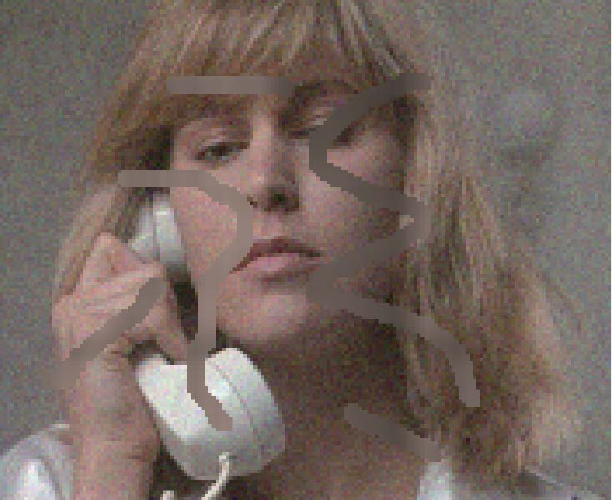}}
\hspace{-0.15cm}
\subfigure[Ours]{\includegraphics[height=1.1in,width=1.35in,angle=0]{./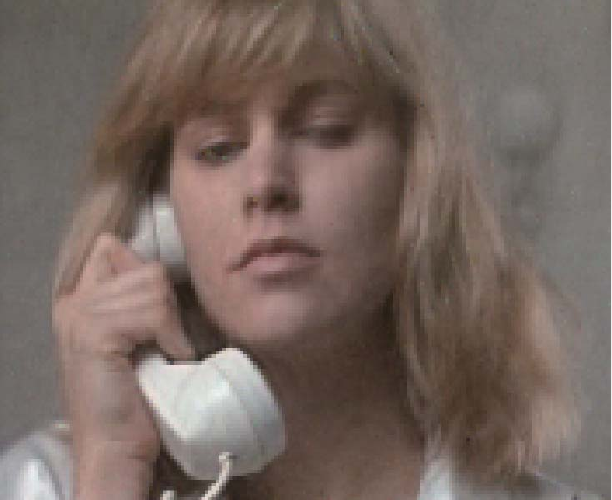}}\\
\end{center}
\vspace{-0.1cm}
\caption{Visual results of the $11$-th frame in the 'suzie' video.}
\vspace{-0.1cm}
\label{fig:suzie}
\end{figure*}

\begin{figure*}[htp]
\setlength{\abovecaptionskip}{0pt}
\begin{center}
\subfigure[Original frame]{\includegraphics[height=1.1in,width=1.35in,angle=0]{./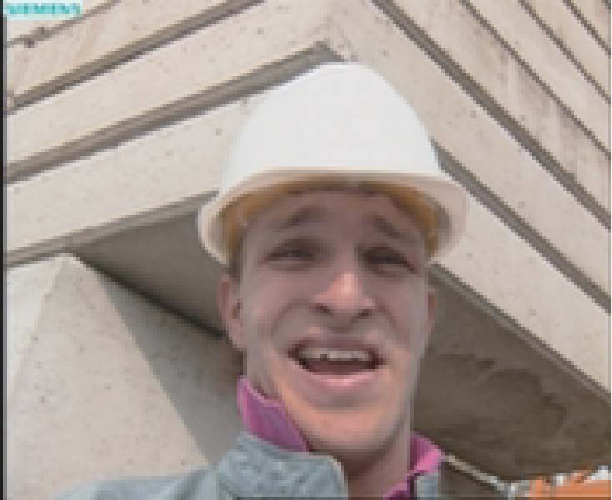}}
\hspace{-0.15cm}
\subfigure[Incomplete frame]{\includegraphics[height=1.1in,width=1.35in,angle=0]{./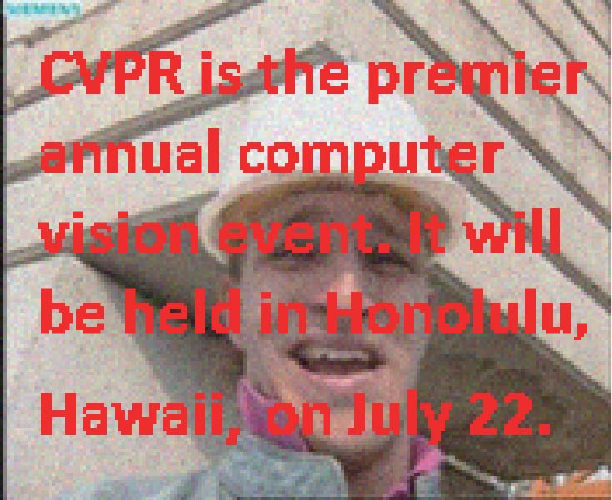}}
\hspace{-0.15cm}
\subfigure[FaLRTC~\cite{liu2013tensor}]{\includegraphics[height=1.1in,width=1.35in,angle=0]{./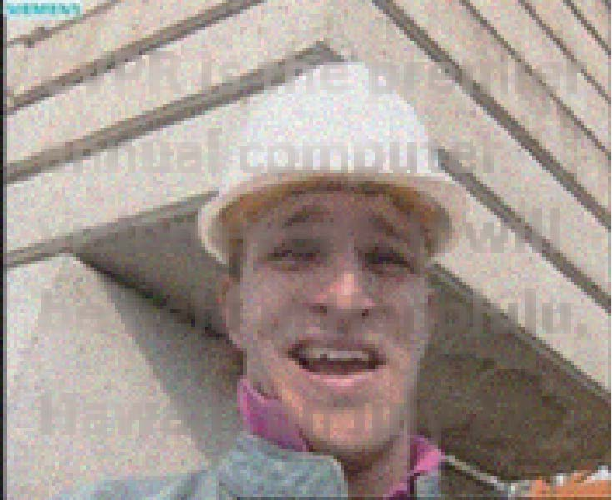}}
\hspace{-0.15cm}
\subfigure[HaLRTC~\cite{liu2013tensor}]{\includegraphics[height=1.1in,width=1.35in,angle=0]{./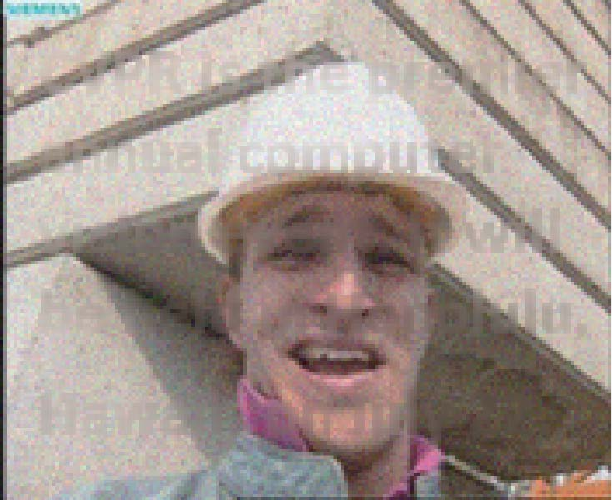}}
\hspace{-0.15cm}
\subfigure[RPTC$_{\rm{scad}}$~\cite{zhao2015novel}]{\includegraphics[height=1.1in,width=1.35in,angle=0]{./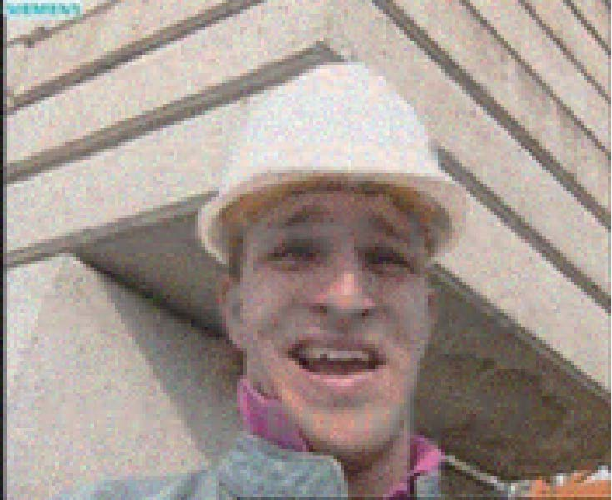}}
\\
\subfigure[TMac~\cite{xu2013parallel}]{\includegraphics[height=1.1in,width=1.35in,angle=0]{./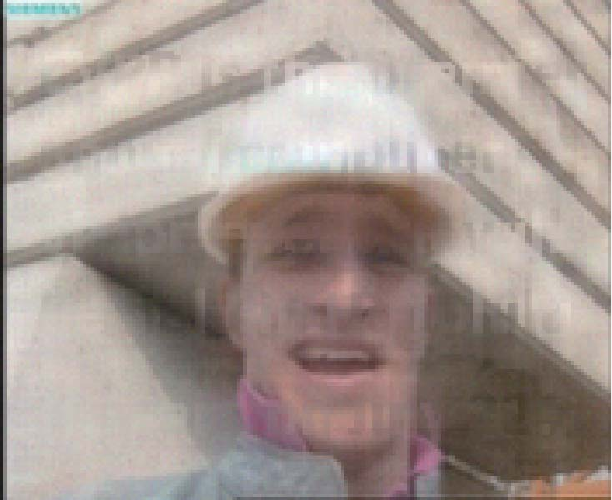}}
\hspace{-0.15cm}
\subfigure[STDC~\cite{chen2014simultaneous}]{\includegraphics[height=1.1in,width=1.35in,angle=0]{./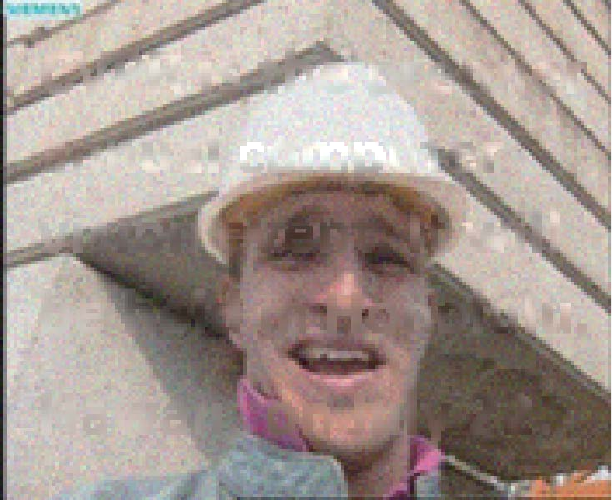}}
\hspace{-0.15cm}
\subfigure[FBCP~\cite{zhao2015bayesian}]{\includegraphics[height=1.1in,width=1.35in,angle=0]{./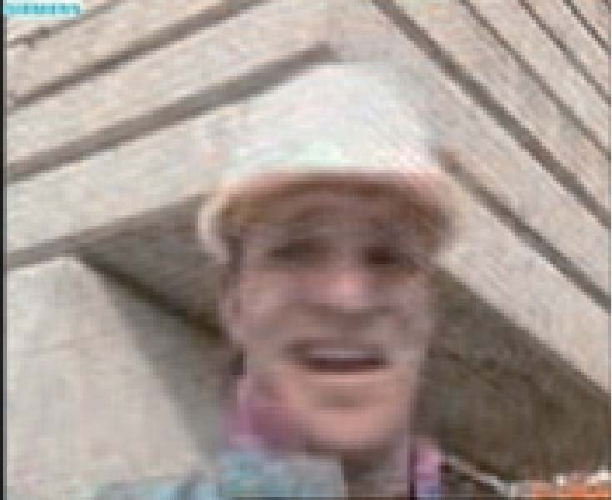}}
\hspace{-0.15cm}
\subfigure[BRTF~\cite{zhao2016bayesian}]{\includegraphics[height=1.1in,width=1.35in,angle=0]{./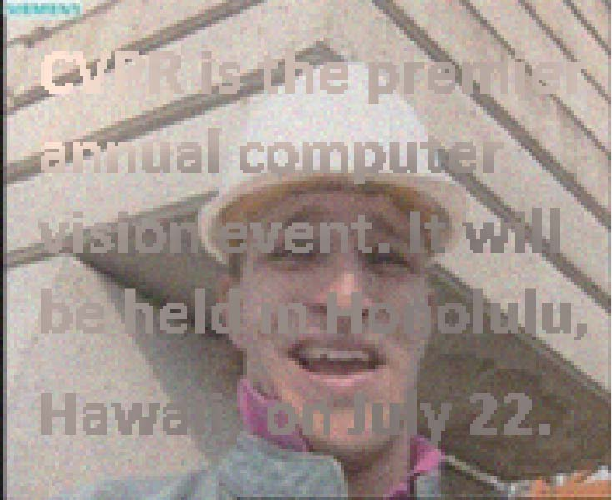}}
\hspace{-0.15cm}
\subfigure[Ours]{\includegraphics[height=1.1in,width=1.35in,angle=0]{./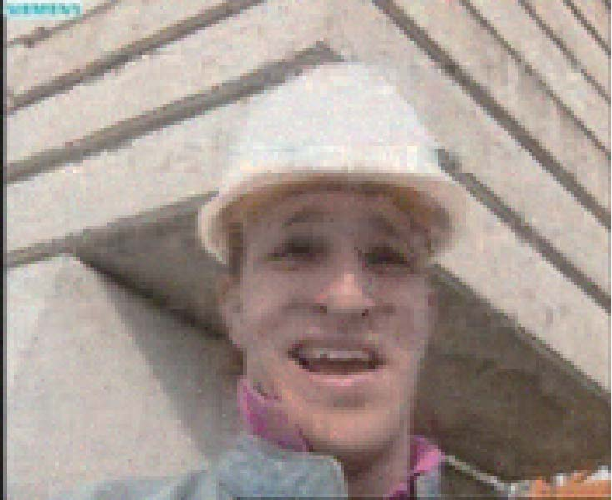}}\\
\end{center}
\vspace{-0.1cm}
\caption{Visual results of the $19$-th frame in the 'foreman' video.}
\vspace{-0.1cm}
\label{fig:foreman}
\end{figure*}

\subsection{Image inpainting}
In this subsection, we test the proposed model on 10 benchmark images (see Figure~\ref{fig:inpainting}) for image inpainting. Each image is of size $256\times 256 \times 3$ and rescaled into $[0,1]$. We generate the incomplete observation by randomly selecting a certain percentage of missing entries in each image. Given the incomplete observation, all methods are utilized to recover the latent image. Besides RRE, peak signal-to-noise ratio (PSNR) and structural similarity (SSIM), two conventional image quality indices are also adopted to measure the recovery accuracy. The average RRE, PSNR and SSIM on 10 benchmark images are given in Table~\ref{table:avginpainting}. It can be seen that the proposed model performs better than other methods in most cases. To make this clear, we show the visual comparison of the recovery results of the 'facade' image in Figure~\ref{fig:facade}, when missing ratio is $90\%$ as well as the 'parrot' image with missing ratio $=70\%$ in Figure~\ref{fig:parrot}. In each image, an interest area is zoomed for details comparison. Due to 'facade' is highly structured, most of the methods can recover the major structure of the image with even $90\%$ missing entries. However, most of the competitors fail to well recover the fine details, e.g., the zoomed areas in Figure~\ref{fig:parrot} (b)-(i). In contrast, these details are well recovered in the results of the proposed model shown as the zoomed area in Figure~\ref{fig:parrot} (j). The reasons come from two aspects. First, we propose an appropriate sparsity induced low-rank model for tensor data, which is totally different from the matrix rank norm induced low rank model in FaLRTC, HaLRTC and TMac. Second, we adopt a flexible mixture of Gaussians to model the complex non-low-rank structure, which, however, is not considered in other competitors. These can be further validated by the results on 'Parrot' image in Figure~\ref{fig:parrot} which contains complex non-low-rank structure according to Figure~\ref{fig:dis}. It can be seen that the proposed model give more clear and natural results than others, especially on the zoomed details.

\begin{table}\footnotesize
\caption{RRE on CMU-PIE dataset with different missing ratios.}
\renewcommand{\arraystretch}{1.1}
\begin{center}
\begin{tabu} to 0.45\textwidth {X[1.6,l]|X[c]|X[c]|X[c]|X[c]}
\hline
Method & $60\%$ & $70\%$ & $80\%$ & $90\%$\\
\hline
FaLRTC~\cite{liu2013tensor} & 0.3741 & 0.5058 & 0.7392 & 0.9103\\
HaLRTC~\cite{liu2013tensor} & 0.3694 & 0.5021 & 0.7388 & 0.9086\\
RPTC$_{\rm{scad}}$~\cite{zhao2015novel} & 0.2228 & 0.2599 & 0.3284 & 0.4166\\
TMac~\cite{xu2013parallel} & 0.2629 & 0.4226 & 0.6574 & 0.9385\\
STDC~\cite{chen2014simultaneous} & 0.2602 & 0.2606 & 0.3316 & 0.5154\\
FBCP~\cite{zhao2015bayesian} & 0.1410 & 0.1911 & 0.2590 & 0.3746\\
Ours & \textbf{0.1172} & \textbf{0.1560} & \textbf{0.2274} & \textbf{0.3372}\\
\hline
\end{tabu}
\end{center}
\label{table:face}
\end{table}

\begin{figure*}
\begin{center}
\subfigure[Ground truth]{\includegraphics[height=1.6in,width=1.6in,angle=0]{./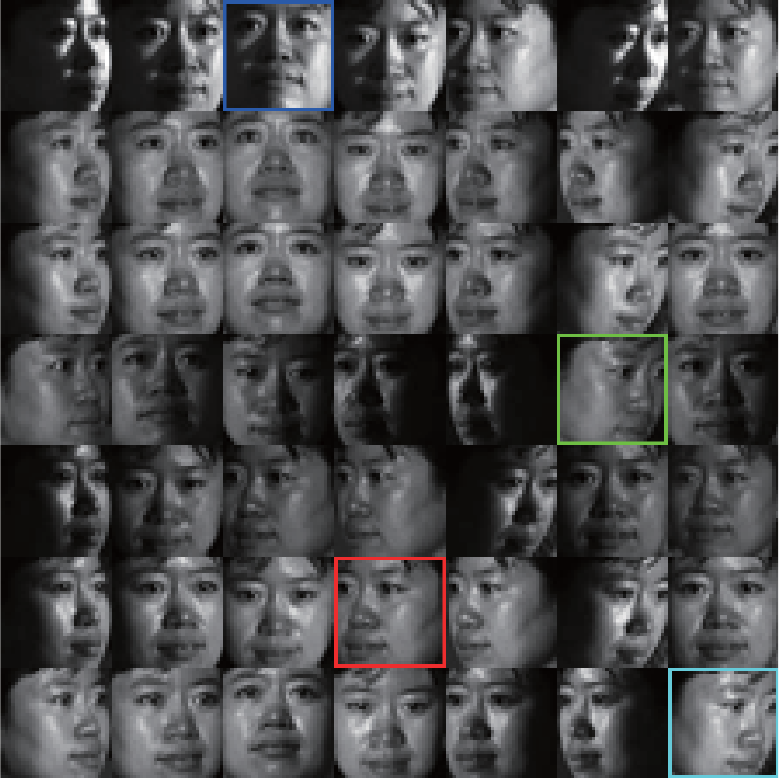}}
\hspace{-0.15cm}
\subfigure[FaLRTC~\cite{liu2013tensor}]{\includegraphics[height=1.6in,width=1.6in,angle=0]{./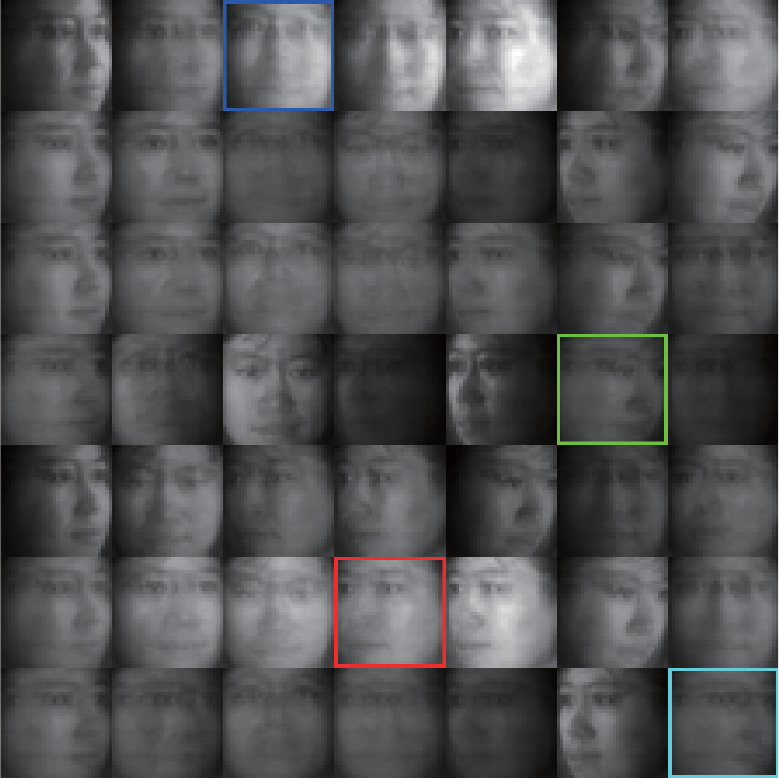}}
\hspace{-0.15cm}
\subfigure[HaLRTC~\cite{liu2013tensor}]{\includegraphics[height=1.6in,width=1.6in,angle=0]{./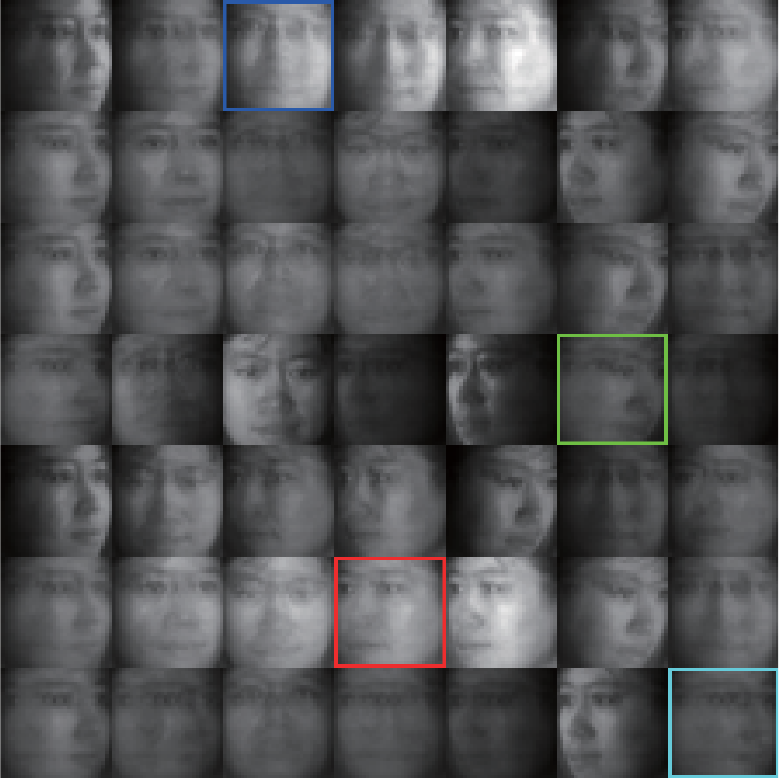}}
\hspace{-0.15cm}
\subfigure[RPTC$_{\rm{scad}}$~\cite{zhao2015novel}]{\includegraphics[height=1.6in,width=1.6in,angle=0]{./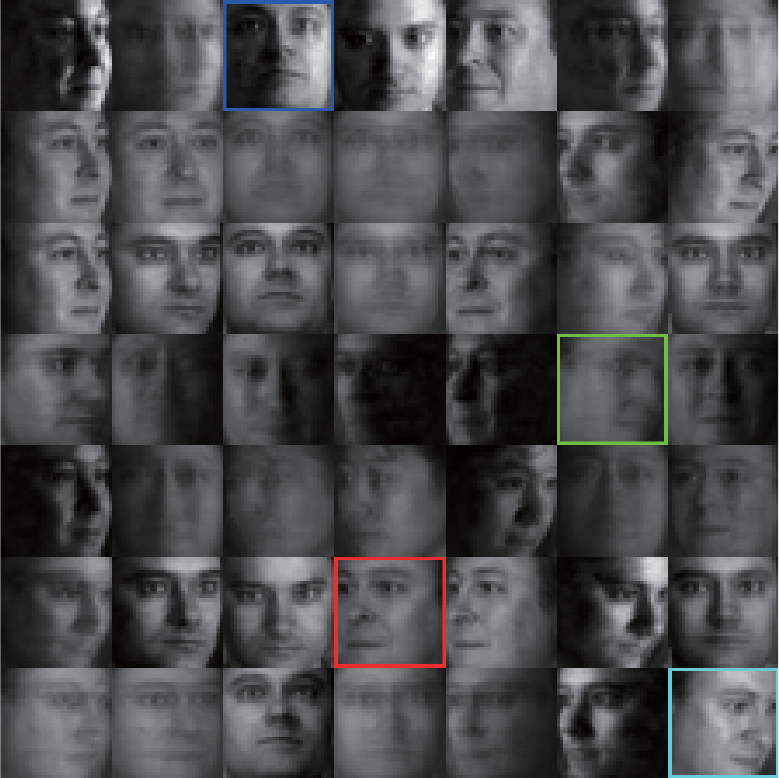}}
\\
\subfigure[TMac~\cite{xu2013parallel}]{\includegraphics[height=1.6in,width=1.6in,angle=0]{./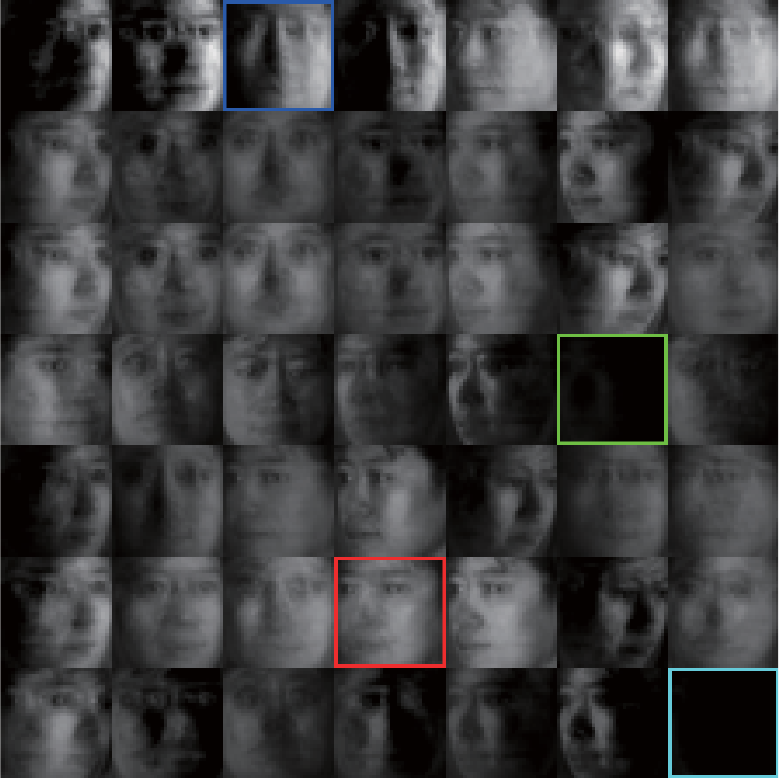}}
\hspace{-0.15cm}
\subfigure[STDC~\cite{chen2014simultaneous}]{\includegraphics[height=1.6in,width=1.6in,angle=0]{./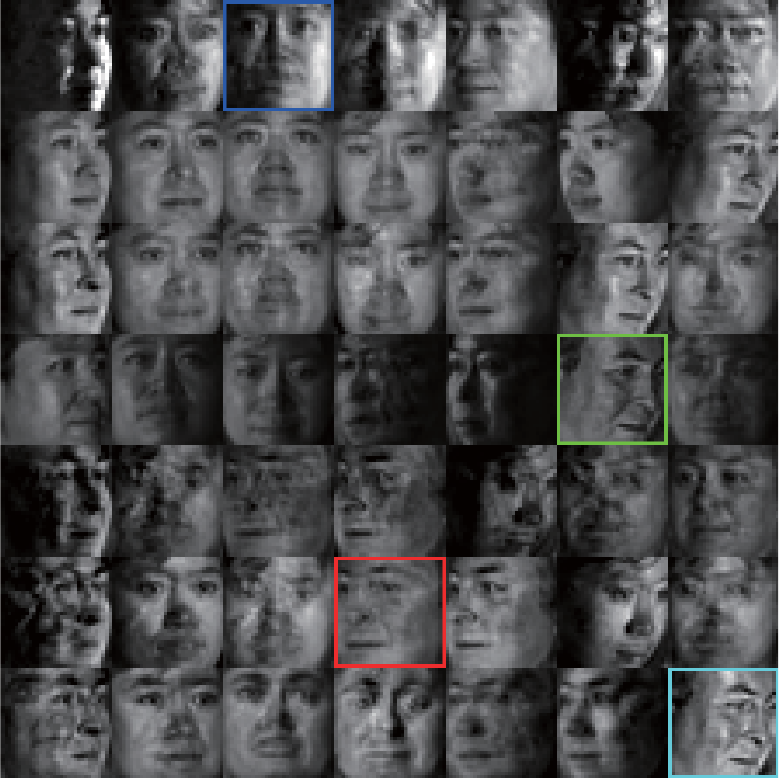}}
\hspace{-0.15cm}
\subfigure[FBCP~\cite{zhao2015bayesian}]{\includegraphics[height=1.6in,width=1.6in,angle=0]{./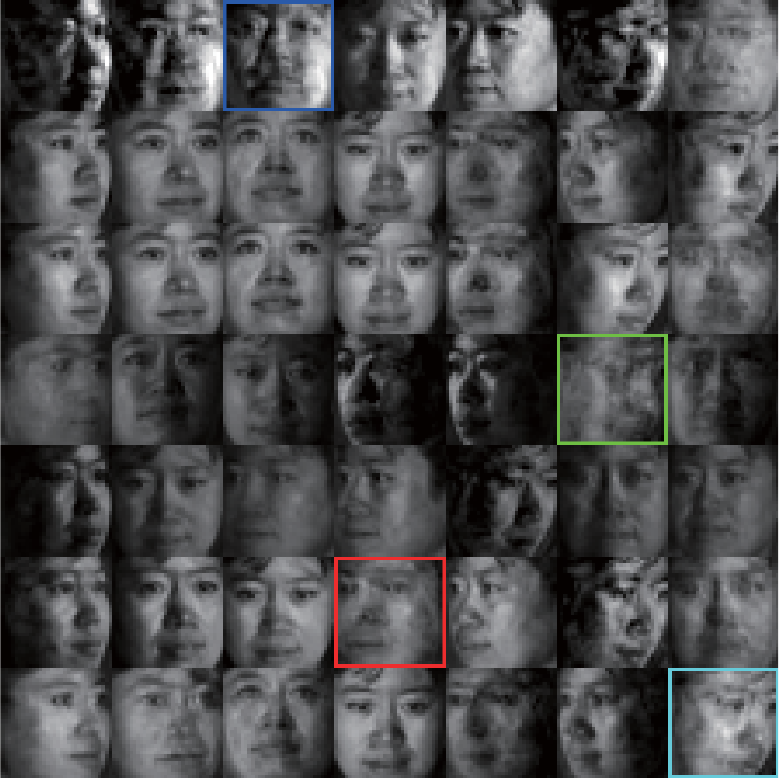}}
\hspace{-0.15cm}
\subfigure[Ours]{\includegraphics[height=1.6in,width=1.6in,angle=0]{./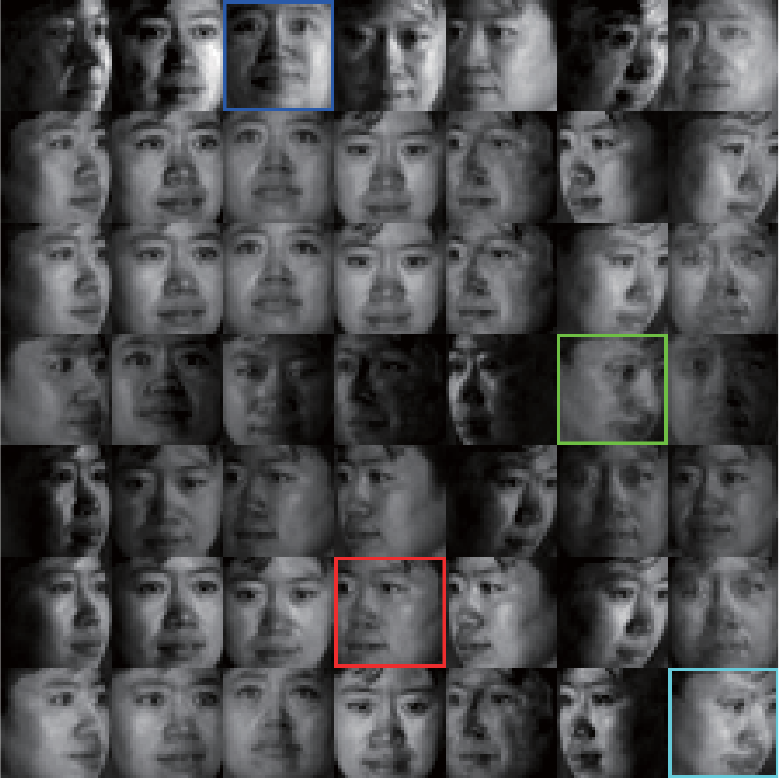}}\\
\end{center}
\caption{Visual results on the $49$ selected missing faces, when missing ratio is $70\%$.}
\label{fig:faceimage}
\end{figure*}

\subsection{Video completion}
We further evaluate the proposed model on two famous videos, 'suzie' and 'foreman'\footnote{https://media.xiph.org/video/derf/} for video completion. In each video, $20$ consecutive frames are extracted as the experimental data of size $144 \times 176 \times 3 \times 20$, which is then rescaled into $[0,1]$. The incomplete observation for 'susie' is a scrabbled version\footnote{For every two consecutive frames in the video, there is one corrupted with the same scrabbles or superimposed texts.} shown in Figure~\ref{fig:suzie} (b), while the incomplete observation for 'foreman' is corrupted by superimposed texts{\color{red}{\footnotemark[3]}} shown as Figure~\ref{fig:foreman} (b). The observations are also corrupted by noise sampled from $\mathcal{N}(0,0.001)$. Due to t-SVD cannot be applied to 4-mode tensor~\cite{zhang2014novel}, here we only employ other $8$ methods to recover the latent video. The RRE, PSNR and SSIM on two videos are give in Table~\ref{table:video}. The proposed model gives higher PSNR, SSIM and lower RRE than others, which demonstrates the superiority of the proposed method over others in video completion. To further clarify this, visual comparison results of two corrupted frames are shown in Figure~\ref{fig:suzie},~\ref{fig:foreman}. Compared with others, the proposed model produces sharper and more clear results.

\subsection{Facial image synthesis}
Motivated by the fact that a complete training set is often not available in real face recognition applications~\cite{geng2011face}, facial image synthesis is studied for recovering missing faces in an incomplete dataset. In this subsection, we use the CMU-PIE face dataset~\cite{sim2003cmu} for facial image synthesis. Specifically, we select images from the first $3$ subjects with $11$ positions and $21$ illumination changes to construct a $4$-mode tensor $\mathcal{L} \in{\mathbb{R}^{3\times 11 \times 21 \times 1024}}$, which is then rescaled into $[0,1]$. For the incomplete observation, a certain percentage of missing faces are randomly selected. Given the observation, all methods except t-SVD and BRTF~\footnote{BRTF always fails on this dataset as rank becomes 0.} are employed to recover $\mathcal{L}$. Table~\ref{table:face} gives RRE for each method. It can be seen that the proposed model gives the lowest RRE with different missing ratios. When the missing ratio is $70\%$, visual comparison results for $49$ selected missing faces generated from all methods are shown in Figure~\ref{fig:faceimage}. Compared with other competitors, the results of the proposed model recovers more details and produces less artifacts.

\begin{figure*}
\begin{center}
\subfigure[RRE]{\includegraphics[height=0.9in,width=1.15in,angle=0]{./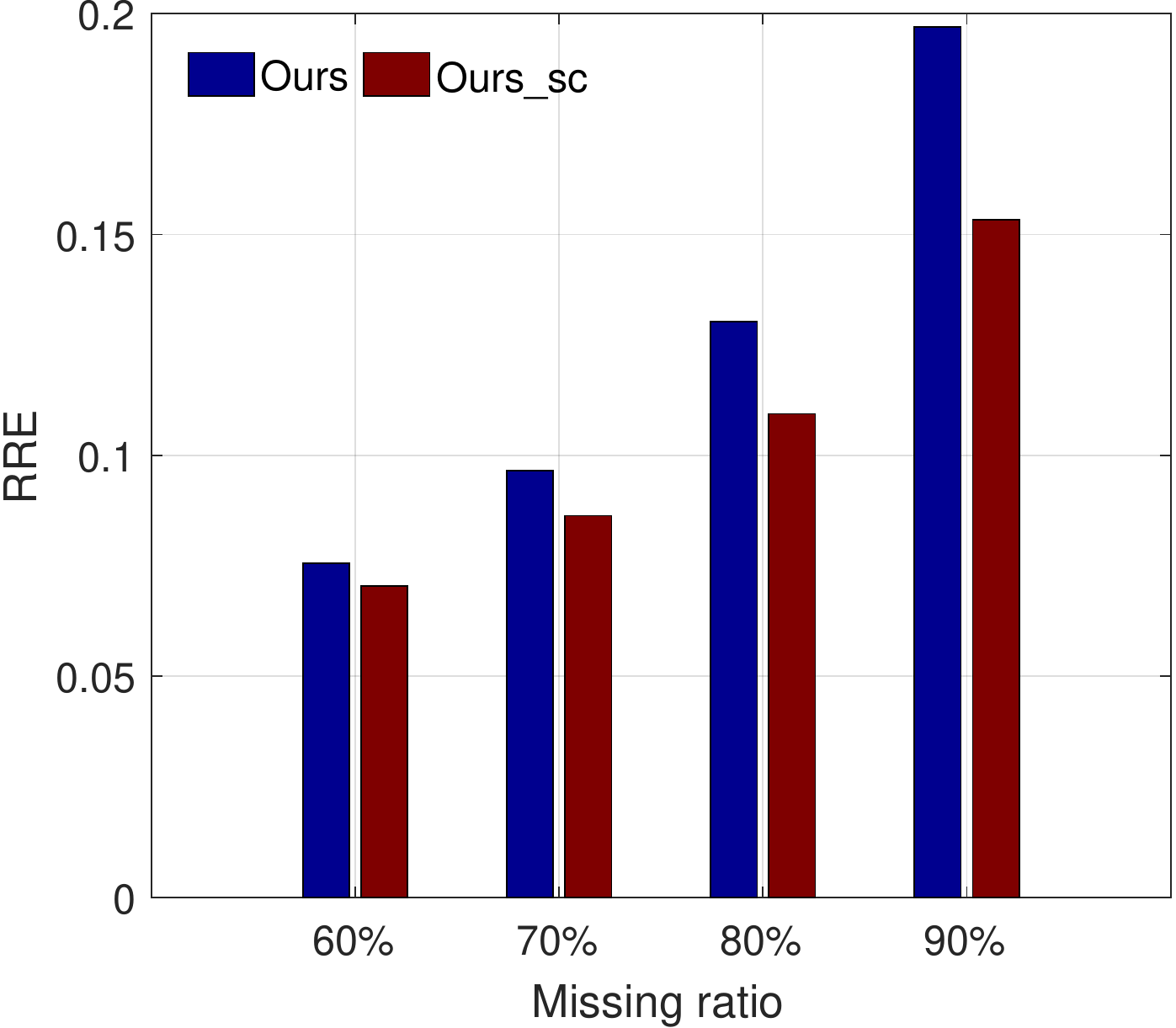}}
\hspace{-0.15cm}
\subfigure[PSNR]{\includegraphics[height=0.9in,width=1.15in,angle=0]{./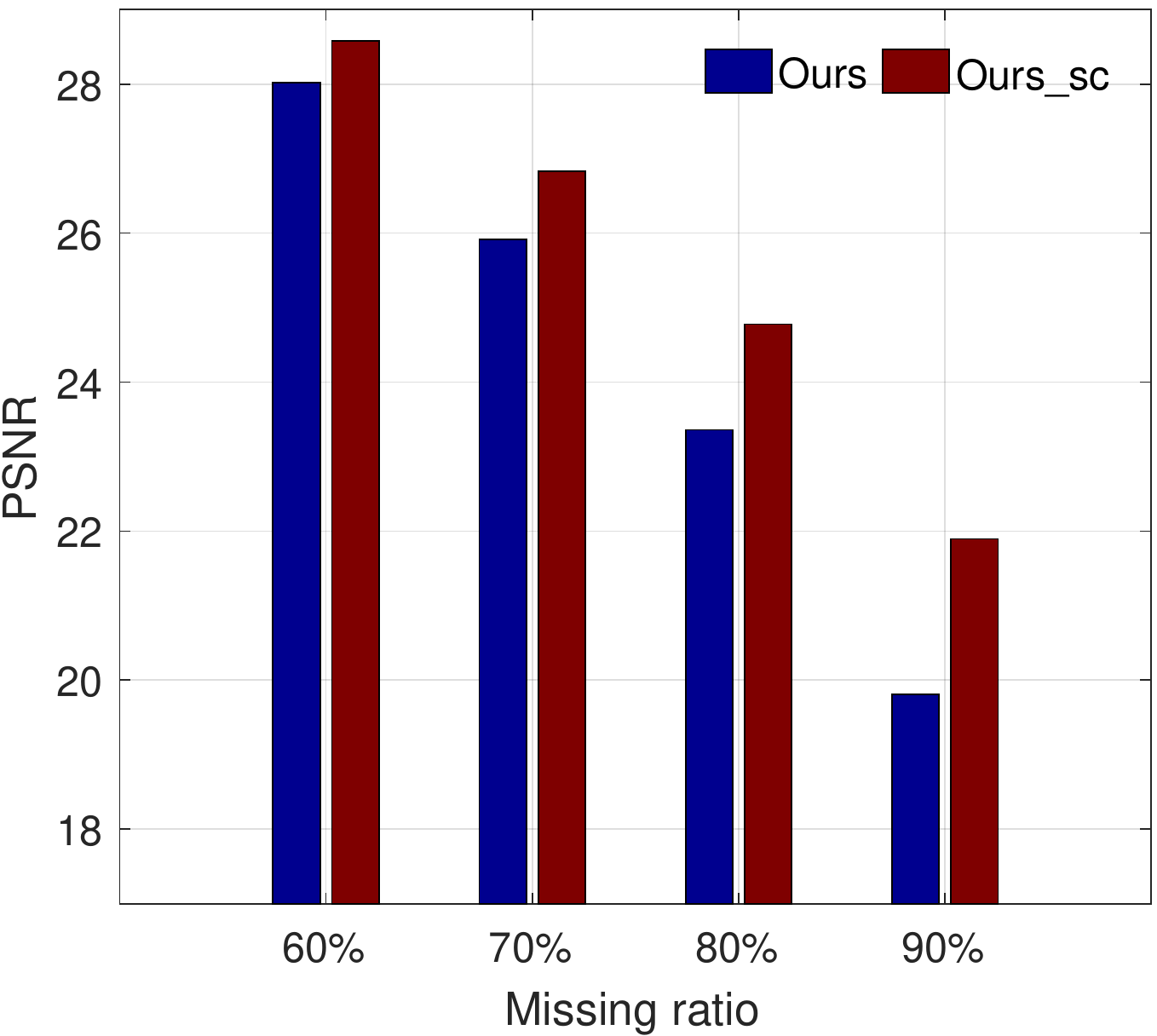}}
\hspace{-0.15cm}
\subfigure[SSIM]{\includegraphics[height=0.9in,width=1.15in,angle=0]{./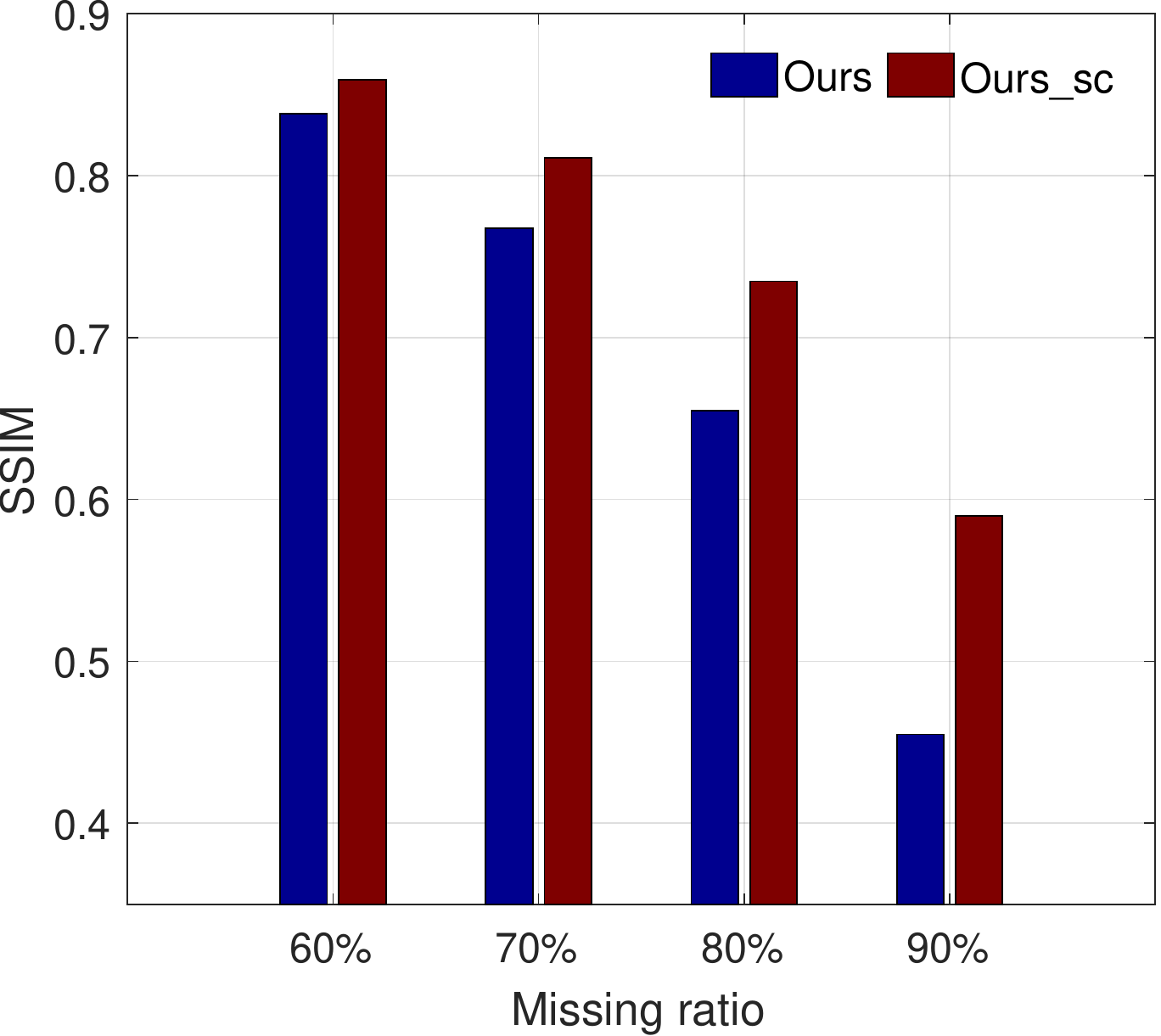}}
\subfigure[RRE]{\includegraphics[height=0.9in,width=1.15in,angle=0]{./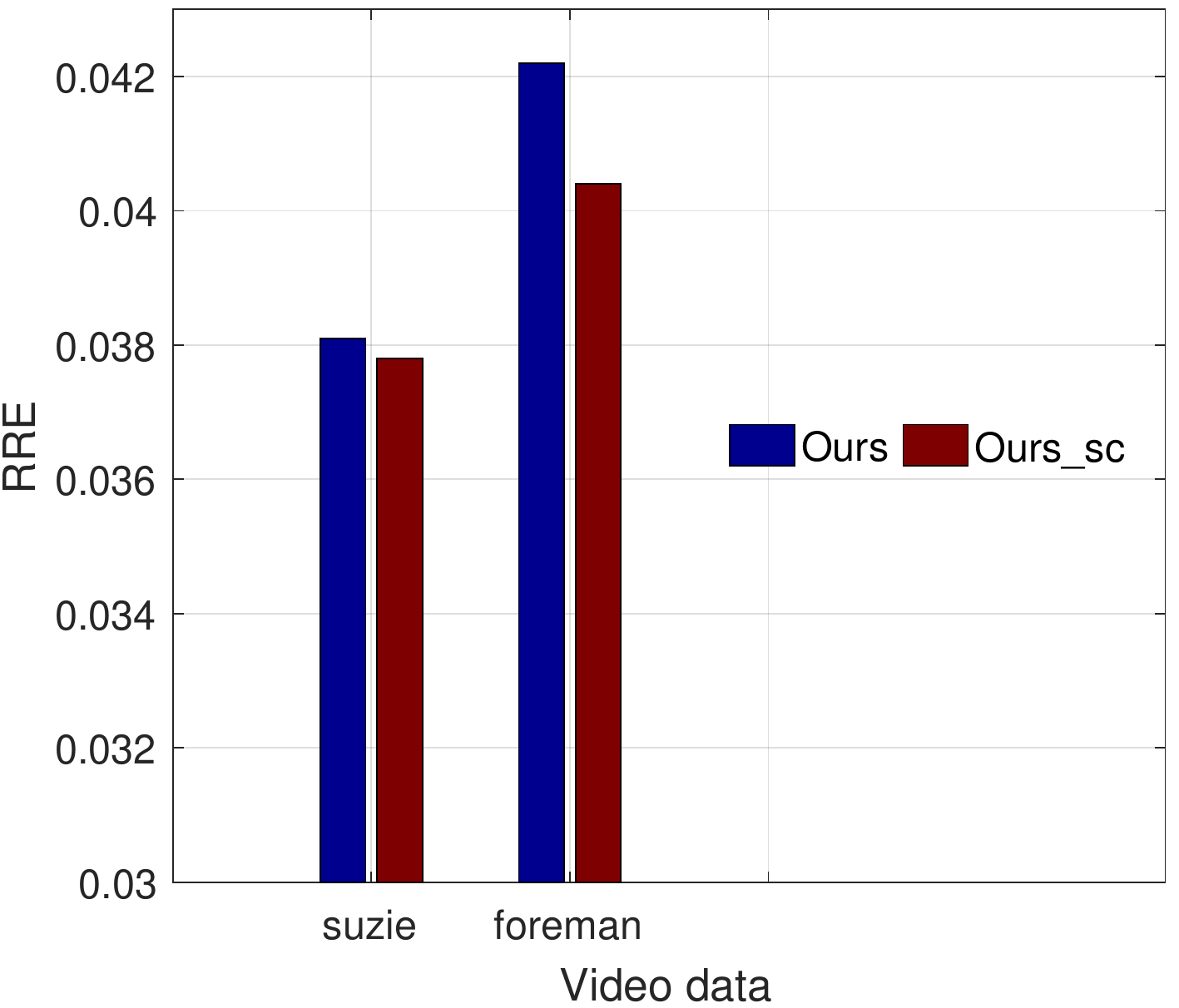}}
\hspace{-0.15cm}
\subfigure[PSNR]{\includegraphics[height=0.9in,width=1.15in,angle=0]{./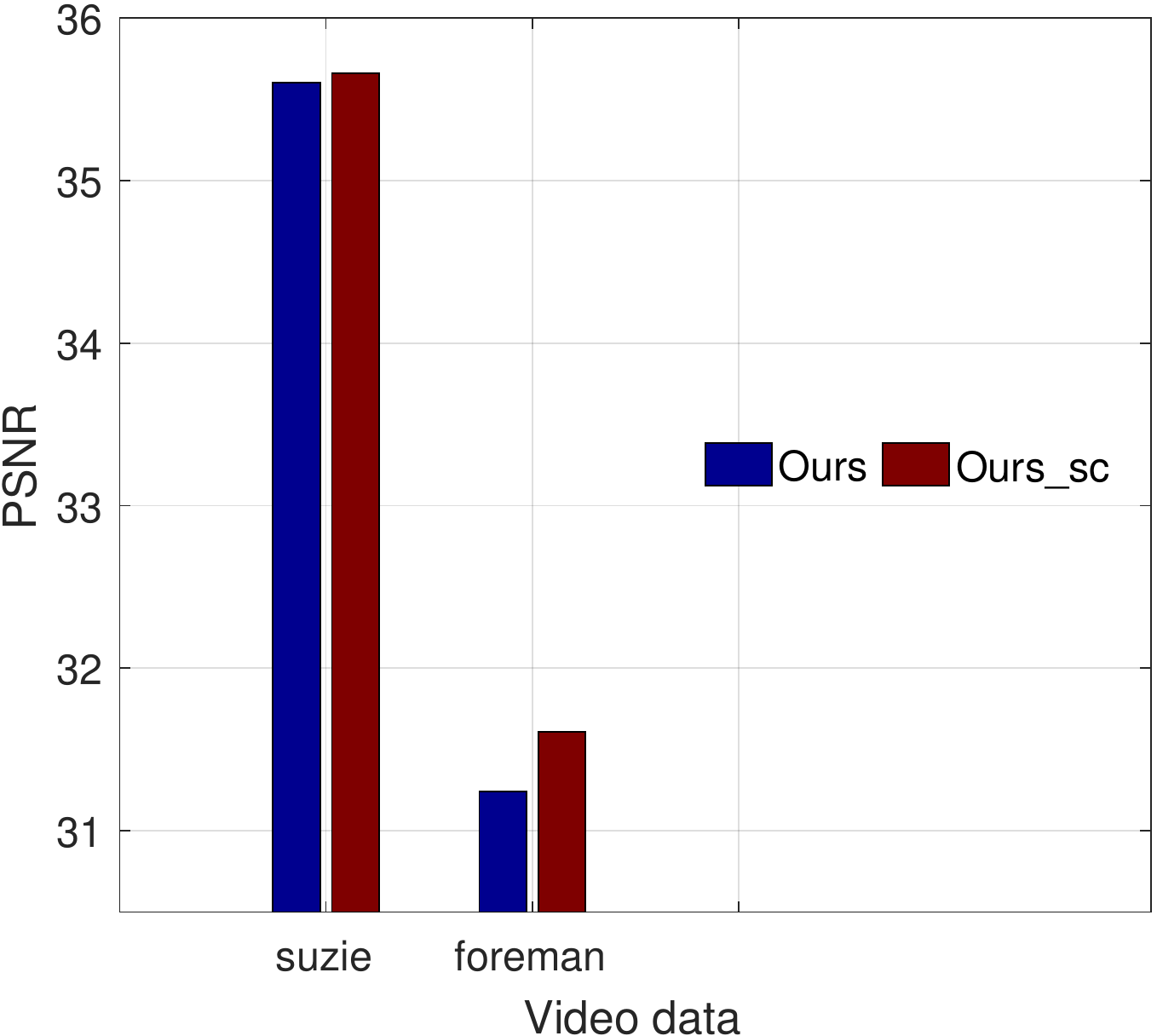}}
\hspace{-0.15cm}
\subfigure[SSIM]{\includegraphics[height=0.9in,width=1.15in,angle=0]{./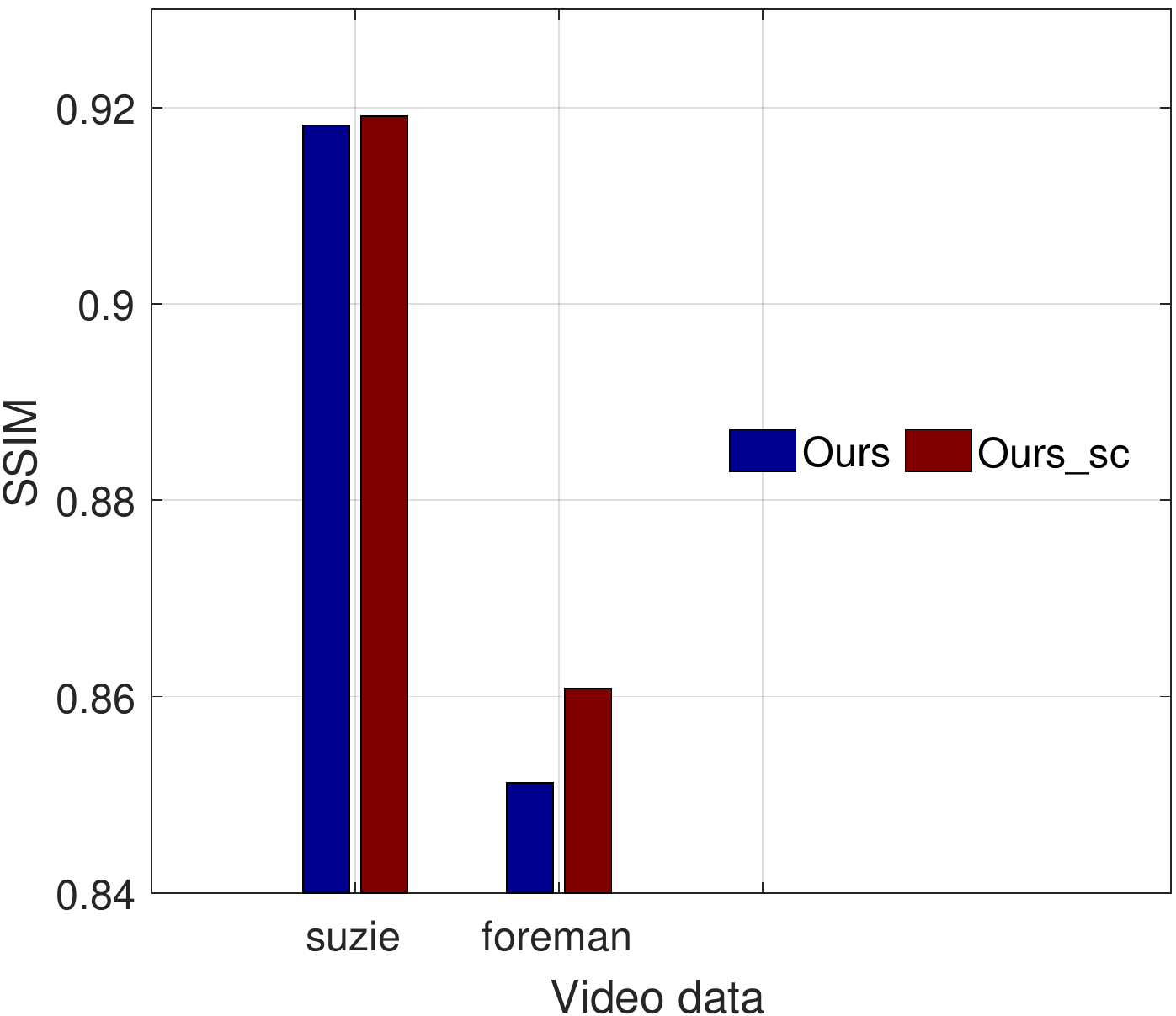}}
\end{center}
\caption{Performance comparison between the proposed method and its variant that considers the spatial similarity in two applications. (a)-(c) denote the results in image inpainting, and (d)-(f) are the results in video completion.}
\label{fig:SpatialC}
\end{figure*}

\section{Discussion}
In this section, we will further discuss the spatial similarity constraint in visual tensors. Specifically, visual tensor data (e.g., image or video) often shows similarity in spatial domain, which can provide extra prior information for the ill-posed completion task, thus improves the recovery accuracy especially when the missing ratio is high. Such kind of similarity can be modelled by exploiting the correlation among different rows of the factor matrices $\matb{U}^{(k)}$. For example, we consider a color image as $3$-mode tensor, the correlation among different rows in each $\matb{U}^{(k)}$ represent the correlation among rows, columns and channels of the image, respectively. Considering the obviously local similarity in visual data, we assume neighbouring rows in each $\matb{U}^{(k)}$ to be similar and introduce the following prior distribution
\begin{equation}\label{eq:eq50}
\begin{aligned}
p(u^{(k)}_{ir}) \sim \mathcal{N}(u^{(k)}_{ir}|\mu^{(k)}, \tau^{(k)-1})\mathcal{N}(u^{(k)}_{ir}|\matb{w}^T_i\vecb{u}^{(k)}_{r},\eta^{-1}_0)
\end{aligned}
\end{equation}
where $\vecb{w}_i = [w_{1i},...,w_{n_ki}]^T$ is a weight vector with $\sum\nolimits^{n_k}_{j=1}w_{ji} = 1$ and $w_{ii} = 0$. This prior suggests each row of $\matb{U}^{(k)}$ not only can be reconstructed by other rows, but also complies with the $\ell_2$ norm constraint on each entry. To this end, a large enough $\eta_0$ is adopted, e.g., $10^3$. In addition, we set $w_{ji} = \exp(- 2 * \rho * |i - j|^2)$ to implies the major contribution of neighbouring rows in the reconstruction. The parameter $\rho = Nz / N$, where $N_z$ denotes the number of observed entries in $\mathcal{Y}$, thus $1 - \rho$ equals the missing ratio. With this prior, $u^{(k)}_{ir}$ thus can be drawn from a Gaussian distribution $\mathcal{N}(\tilde{\mu}_{u^{(k)}_{ir}}, \tilde{\tau}_{u^{(k)}_{ir}})$ as Eq.~\eqref{eq:eq23} with parameters
\begin{equation}\label{eq:eq51}
\begin{aligned}
&\tilde{\tau}_{u^{(k)}_{ir}} = \sum\nolimits_{{\mbox{\boldmath{$i$}}}: i_k = i}\tau_0{o_{\mbox{\boldmath{$i$}}}}\tilde{c}^{rk2}_{\mbox{\boldmath{$i$}}}+ \tau^{(k)} + \eta_0,\\
&\tilde{\mu}_{u^{(k)}_{ir}} = \tau^{-1}_{u^{(k)}_{ir}}\left(\sum\nolimits_{{\mbox{\boldmath{$i$}}}: i_k = i}\tau_0{o_{\mbox{\boldmath{$i$}}}}\tilde{y}^r_{\mbox{\boldmath{$i$}}}\tilde{c}^{rk}_{\mbox{\boldmath{$i$}}} + \tau^{(k)}\mu^{(k)} + \eta_0\matb{w}^T_i\vecb{u}^{(k)}_{r}\right).
\end{aligned}
\end{equation}

To demonstrate the effectiveness of the prior in Eq.~\eqref{eq:eq50}, we compare the proposed model with its variant 'Ours\_sc' where  $u^{(k)}_{ir}$ is sampled with Eq.~\eqref{eq:eq51}, in above image inpainting and video completion applications. The comparison of their numerical results is shown in bar charts in Figure~\ref{fig:SpatialC}. It can be seen that 'Ours\_sc' surpasses the proposed model in both applications. Moreover, the superiority is more obvious when the missing ratio is high. This is intuitively because high missing ratio results in a worsened ill-posed problem which requires more prior information to regularize the infinite solution space.

\section{Conclusion}
We have presented a novel data-adaptive tensor completion model. The model explicitly decomposes the latent tensor into the low-rank structure and the non-low-rank one. The low-rank prior rests upon a new definition of tensor rank, which forms the basis of automatic tensor rank determination with exploiting sparsity in CP factorization from an incomplete set of observations. The prior for the non-low-rank structure is a mixture of Gaussians which has shown to be flexible enough to reflect a variety of real tensor data. These two priors allow the development of an MMSE method to estimate not only the posterior mean of missing entries, but also their uncertainty using Gibbs sampling. In addition, the proposed model has been shown to outperform its competitors in terms of tensor recovery.


\ifCLASSOPTIONcaptionsoff
  \newpage
\fi



\bibliographystyle{IEEEtran}
\bibliography{egbib}
\end{document}